\documentclass[10pt,twocolumn,letterpaper]{article}

\usepackage[pagenumbers]{cvpr} 

\usepackage[utf8]{inputenc} 
\usepackage[T1]{fontenc}    
\usepackage{url}            
\usepackage{booktabs}       
\usepackage{amsfonts}       
\usepackage{nicefrac}       
\usepackage{microtype}      
\usepackage{xcolor}         

\usepackage{verbatim}
\usepackage{bm}
\usepackage{bbm}
\usepackage{multirow}
\usepackage{titletoc}
\usepackage{array}
\usepackage{pifont}
\usepackage{nccmath}
\usepackage{titletoc}
\usepackage{subcaption}
\usepackage[table]{xcolor}

\usepackage{physics}
\usepackage{amsmath}
\usepackage{tikz}
\usepackage{mathdots}
\usepackage{yhmath}
\usepackage{cancel}
\usepackage{color}
\usepackage{siunitx}
\usepackage{array}
\usepackage{multirow}
\usepackage{amssymb}
\usepackage{gensymb}
\usepackage{tabularx}
\usepackage{extarrows}
\usepackage{enumitem}
\usepackage{booktabs}
\usetikzlibrary{fadings}
\usetikzlibrary{patterns}
\usetikzlibrary{shadows.blur}
\usetikzlibrary{shapes}
\usepackage{wrapfig}

\usepackage[linesnumbered,ruled,vlined]{algorithm2e}
\usepackage{chngcntr}
\usepackage{tocloft}
\usepackage[titletoc,toc,title]{appendix}

\usepackage{tcolorbox}
\tcbuselibrary{skins}

\definecolor{cvprblue}{rgb}{0.21,0.49,0.74}
\definecolor{purple}{rgb}{0.55,0.33,0.78}
\usepackage[pagebackref,breaklinks,colorlinks,anchorcolor=cvprblue, citecolor=cvprblue,linkcolor=cvprblue]{hyperref}

\usepackage[capitalize]{cleveref}
\crefname{table}{Table}{Tables}
\crefname{figure}{Figure}{Figures}
\crefname{algorithm}{Algorithm}{Algorithms}
\crefname{section}{Section}{Sections}
\crefname{appendix}{Appendix}{Appendices}
\crefname{equation}{Eq.}{Eqs.}
\crefname{assmp}{Assumption}{Assumptions}

\usepackage{amsthm}
\theoremstyle{plain}
\newtheorem{thm}{Theorem}
\newtheorem{lem}{Lemma}

\theoremstyle{definition}
\newtheorem{rem}{Remark}
\newtheorem{defn}{Definition}
\newtheorem{assmp}{Assumption}

\crefname{thm}{Theorem}{Theorems}
\crefname{rem}{Remark}{Remarks}
\crefname{prop}{Proposition}{Propositions}

\def \eg {\textit{e.g.}}
\def \ie {\textit{i.e.}}

\def \wrt {w.r.t.}

\def \methodname {Co-Settle}
\def \methodnameb {\textbf{Co-Settle}}
\def \methodfullname {\textit{\textbf{Co}nsistency-\textbf{Se}parability \textbf{T}rade-off \textbf{T}ransfer \textbf{Le}arning}}

\def \tza {\bm{z}_1}
\def \tzb {\bm{z}_2}
\def \tzi {\bm{z}_i}
\def \tbarza {\bar{\bm{z}}_1}
\def \tbarzb {\bar{\bm{z}}_2}
\def \tbarzi {\bar{\bm{z}}_i}
\def \tJ {\bm{J}}
\def \tW {\bm{W}}
\def \tWla {\bm{W}_{1}}
\def \tWlb {\bm{W}_{2}}
\def \tU {\bm{U}}
\def \tQa {\bm{Q}_1}
\def \tQb {\bm{Q}_2}
\def \tSigma {\bm{\Sigma}}
\def \tbarSigma {\bar{\bm{\Sigma}}}
\def \tLambdaW {\bm{\Lambda}_{W}}
\def \tLambdaWla {\bm{\Lambda}_{1}}
\def \tLambdaWlb {\bm{\Lambda}_{2}}
\def \tLambdaSigma {\bm{\Lambda}_{\Sigma}}
\def \tLambdabarSigma {\bm{\Lambda}_{\bar{\Sigma}}}

\def \vta {\bm{v}_{t_1}}
\def \vtb {\bm{v}_{t_2}}
\def \zta {\bm{z}_{t_1}}
\def \ztb {\bm{z}_{t_2}}
\def \ztatilde {\widetilde{\bm{z}}_{t_1}}
\def \pta {\bm{p}_{t_1}}
\def \ptb {\bm{p}_{t_2}}
\def \ptatilde {\widetilde{\bm{p}}_{t_1}}
\def \qta {\bm{q}_{t_1}}
\def \qtb {\bm{q}_{t_2}}
\def \qtatilde {\widetilde{\bm{q}}_{t_1}}
\def \Atatb {\bm{A}_{t_1}^{t_2}}
\def \Atbta {\bm{A}_{t_2}^{t_1}}
\def \Atbtatilde {\widetilde{\bm{A}}_{t_2}^{t_1}}

\definecolor{soft_red}{RGB}{240, 110, 110}
\definecolor{soft_blue}{RGB}{100, 170, 220}
\definecolor{define_red}{RGB}{255, 20, 20}
\definecolor{define_green}{RGB}{41, 165, 31}

\def\paperID{35919} 
\def\confName{CVPR}
\def\confYear{2026}

\title{From Static to Dynamic: Exploring Self-supervised Image-to-Video Representation Transfer Learning}

\author{
	Yang Liu\textsuperscript{1}\hspace{1.2em} Qianqian Xu\textsuperscript{2,3,}\thanks{Corresponding authors}\hspace{1.2em} Peisong Wen\textsuperscript{1}\hspace{1.2em} Siran Dai\textsuperscript{4,5}\hspace{1.2em} Xilin Zhao\textsuperscript{6}\hspace{1.2em} Qingming Huang\textsuperscript{1,2,*} \\
	{\textsuperscript{1}School of Computer Science and Technology, University of Chinese Academy of Sciences} \\
	{\textsuperscript{2}State Key Laboratory of AI Safety, Institute of Computing Technology, Chinese Academy of Sciences} \\
    {\textsuperscript{3}Beijing Academy of Artificial Intelligence} \\
    {\textsuperscript{4}Institute of Information Engineering, Chinese Academy of Sciences} \\
    {\textsuperscript{5}School of Cyber Security, University of Chinese Academy of Sciences} \\
    {\textsuperscript{6}School of Computer Science and Technology, Beijing Institute of Technology} \\
	{\tt\small liuyang232@mails.ucas.ac.cn\hspace{2em} xuqianqian@ict.ac.cn\hspace{2em} wenpeisong@ucas.ac.cn } \\ 
    {\tt\small daisiran@iie.ac.cn\hspace{2em} 1120230539@bit.edu.cn\hspace{2em}  qmhuang@ucas.ac.cn }
}

\begin{document}
\maketitle

\begin{abstract}
Recent studies have made notable progress in video representation learning by transferring image-pretrained models to video tasks, typically with complex temporal modules and video fine-tuning.
However, fine-tuning heavy modules may compromise \textbf{inter-video semantic separability}, \ie, the essential ability to distinguish objects across videos.
While reducing the tunable parameters hinders their \textbf{intra-video temporal consistency}, which is required for stable representations of the same object within a video.
This dilemma indicates a potential \textbf{trade-off} between the intra-video temporal consistency and inter-video semantic separability during image-to-video transfer.
To this end, we propose the \methodfullname{} (\methodnameb{}) framework, which applies a lightweight projection layer on top of the frozen image-pretrained encoder to adjust representation space with a temporal cycle consistency objective and a semantic separability constraint.
We further provide a theoretical support showing that the optimized projection yields a better trade-off between the two properties under appropriate conditions.
Experiments on eight image-pretrained models demonstrate consistent improvements across multiple levels of video tasks with only five epochs of self-supervised training.
The code is available at \href{https://github.com/yafeng19/Co-Settle}{https://github.com/yafeng19/Co-Settle}.

\end{abstract}

\section{Introduction}
\label{sec:introduction}

It has been a long-standing pursuit for the video community to exploit meaningful representations that benefit a wide range of video understanding scenarios. 
Driven by more comprehensive data, more powerful models, and more efficient algorithms, video representation learning has continuously evolved over the past decade, featuring heavy temporal processing mechanisms, such as 3D convolutions~\cite{SlowFast,MAE_ST,VideoMAE,VideoMAEv2}, temporal attention~\cite{TimeSformer,ViViT,XViT}, and inter-frame contrastive frameworks~\cite{contrast1,contrast2,contrast3,SiamMAE,tcore}.

Recent studies~\cite{adapter1,adapter2,adapter3,adapter4,two_stage_1,two_stage_2} have demonstrated that transferring image-pretrained models to the video domain can rival video-pretrained counterparts on multiple video downstream tasks. This observation raises a noteworthy question: 
\textit{How do image-pretrained models contribute to performance improvements on video downstream tasks?}

In fact, image models are typically pretrained on large-scale image datasets with diverse categories~\cite{Imagenet,CLIP}. Such pretraining encourages favorable \textbf{inter-video semantic separability}, \ie, semantic discrimination between different visual categories across videos~\cite{huangtowards,wen2025semantic,cluster}.
Meanwhile, these models exhibit an approximate form of \textbf{intra-video temporal consistency}~\cite{DINOv2,DINOv3}, which produces relatively stable representations for the same object across frames within a video.
However, since this consistency is obtained by pretraining with \textit{simple geometric changes} (\eg, rotation), they fail to establish reliable temporal consistency, due to their lack of exposure to \textit{real-world temporal dynamics}, such as a horse leaping over obstacles with complex transformations.

To leverage the advantages of image-pretrained, prior studies incorporate temporal modeling modules and fine-tune them on video datasets via indirect auxiliary tasks~\cite{adapter1,adapter2,adapter3,adapter4,two_stage_1}.
Yet, on the one hand, as the number of tunable parameters increases without proper constraint, the transfer process risks catastrophic forgetting of the semantic separability acquired from the image-pretraining stage~\cite{forgetting1,forgetting2,forgetting3}.
On the other hand, if we restrict the number of tunable parameters to preserve separability, these methods fail to achieve sufficient temporal consistency. We argue that part of the parameters is occupied by indirect auxiliary tasks to learn information beyond consistency.
This dilemma reveals a potential \textbf{trade-off} between the \textbf{intra-video temporal consistency} and \textbf{inter-video semantic separability}, thus calling for a careful balance between these two properties during image-to-video transfer.

In light of these challenges, we propose a \methodfullname{} (\methodnameb{}) framework, which applies a learnable lightweight projection on top of the frozen image encoder to adjust the representation space.
To enhance temporal consistency, we design a cycle consistency objective for fine-grained correspondence learning across frames.
To maintain semantic separability, we introduce a Kullback-Leibler divergence constraint to mitigate forgetting of semantic separability after the projection.

To provide an interpretable insight into the proposed framework, we further present theoretical justification for the trade-off mechanism between temporal consistency and semantic separability. Spectral analysis of the projection layer reveals that the optimized projection can increase the margin between inter- and intra-video distance in the representation space, leading to a more effective trade-off between the two properties. 
As shown in \cref{fig:intro}, this improved trade-off results in better performance on video tasks.

Experimental results on eight ViT-based image-pretrained models demonstrate consistent improvements across several dense-level, frame-level, and video-level downstream video tasks, using only five epochs of self-supervised training on video datasets. These results suggest a potential solution for efficient image-to-video representation transfer learning.

\begin{figure}[t]
  \centering
    \includegraphics[width=0.98\linewidth]{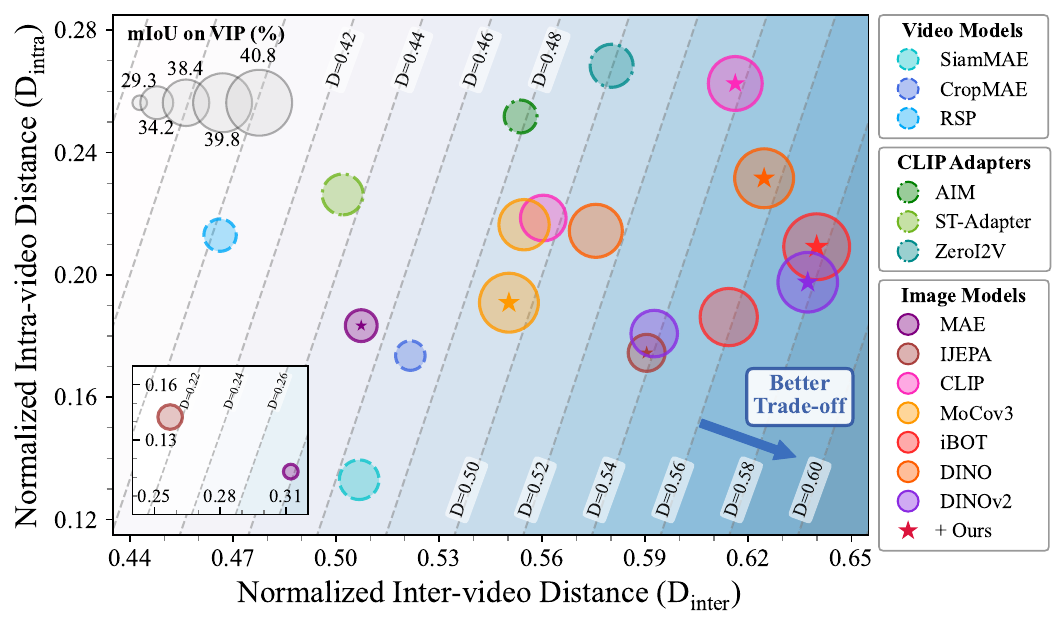}
    \vspace{-10pt}
    \caption{Comparison of video representation quality with recent visual representation learning models on the Kinetics-400~\cite{Kinetics} validation set. Favorable video representations should exhibit strong intra-video temporal consistency (lower intra-video distance $D_{intra}$) and clear inter-video semantic separability (higher inter-video distance $D_{inter}$) jointly, yet the two objectives often compete since the two distances co-vary.
    Applying our method to image-pretrained models leads to consistent improvements on the margin of inter- and intra-video distance $D=D_{inter}-\gamma D_{intra}$ (detailed in \cref{subsec:distance_metrics}), indicating a better trade-off between the two properties, and therefore leading to improved performance on video downstream tasks.}
    \label{fig:intro}
  \vspace{-16pt}
\end{figure}

The main contributions are summarized as follows:
\begin{itemize}
    \item Methodologically, we propose the \methodnameb{} framework, which applies a lightweight projection layer on the image-pretrained encoder for representation space adjustment to balance temporal consistency and semantic separability.
    \item Theoretically, through spectral analysis, we derive optimal conditions for the projection layer, which leads to a more effective trade-off between the two properties.
    \item Experimentally, we evaluate our method on eight image foundation models, achieving consistent performance improvements across multiple granularity levels of video tasks after an efficient self-supervised learning process.
\end{itemize}

\section{Related Work}
\label{sec:related_work}

\textbf{Self-supervised visual representation learning}.
Self-supervised learning has enabled models to develop generalizable representations for diverse downstream tasks.
Contrastive learning methods leverage discriminative signals from different views~\cite{MoCov3, BYOL, SimCLRv1, SiameseIM, SimSiam, dai2025exploring2} to enforce semantical consistency for related contents.
Masked modeling methods learn meaningful representations by predicting the raw pixels with encoder-decoder structures~\cite{MAE, I_JEPA, StoP, BEiTv1, BEiTv2}, or by reconstructing latent tokens within a self-distillation framework~\cite{iBOT, DINO, DINOv2, DINOv3, wen2025semantic}.
Extending these methods with the temporal dimension, early works introduce masked video modeling with high masking ratios~\cite{MAE_ST, VideoMAE, VideoMAEv2, DropMAE, MaskViT, VideoMAC} while recent efforts~\cite{SiamMAE, CropMAE, RSP, STP, tcore} seek more efficient temporal modeling and prediction algorithms.

\textbf{Image-to-video transfer learning}.
Recent works adopt parameter-efficient transfer strategies that update only a small subset of parameters while preserving comparable performance.
Mainstream approaches insert adaptation modules into CLIP-pretrained Vision Transformers~\cite{Vision_Transformer}, enabling spatiotemporal adaptation via convolutional or attention-based operators~\cite{adapter1,adapter2,adapter3,adapter4,adapter5}.
However, due to the reliance on task-specific supervision~\cite{Kinetics,ssv2}, these methods require separate fine-tuning when applied to different tasks.

\textbf{Temporal cycle consistency}.
The inherent visual correspondence between adjacent frames provides natural supervisory signals to capture spatiotemporal coherence~\cite{continuity, smooth}.
Many studies exploit this property to learn semantically consistent representations within a cycle structure, benefiting downstream tasks such as classification~\cite{Kinetics, ssv2, wang2022openauc, wang2025unified}, video retrieval~\cite{s2vs, liu2024not}, object segmentation~\cite{DAVIS17, VIP, sam2, han2024aucseg, li2024size}, and point tracking~\cite{BADJA,tap_davis,Kinetics_track}. 
Early methods focus on bidirectional patch-/object-level tracking~\cite{cycle_patch_1,cycle_patch_2,cycle_patch_3,cycle_patch_4,dense_adapter}, while others align feature distributions across related videos~\cite{cycle_match_1,cycle_match_2,cycle_match_3,clip_adapter}.
Another line of work introduces random walk strategies~\cite{cycle_walk_1,cycle_walk_2,cycle_walk_3}, which guide representation learning by maximizing the probability that each patch returns to itself through a palindrome sequence.

\begin{figure*}[t]
  \centering
    \includegraphics[width=0.94\linewidth]{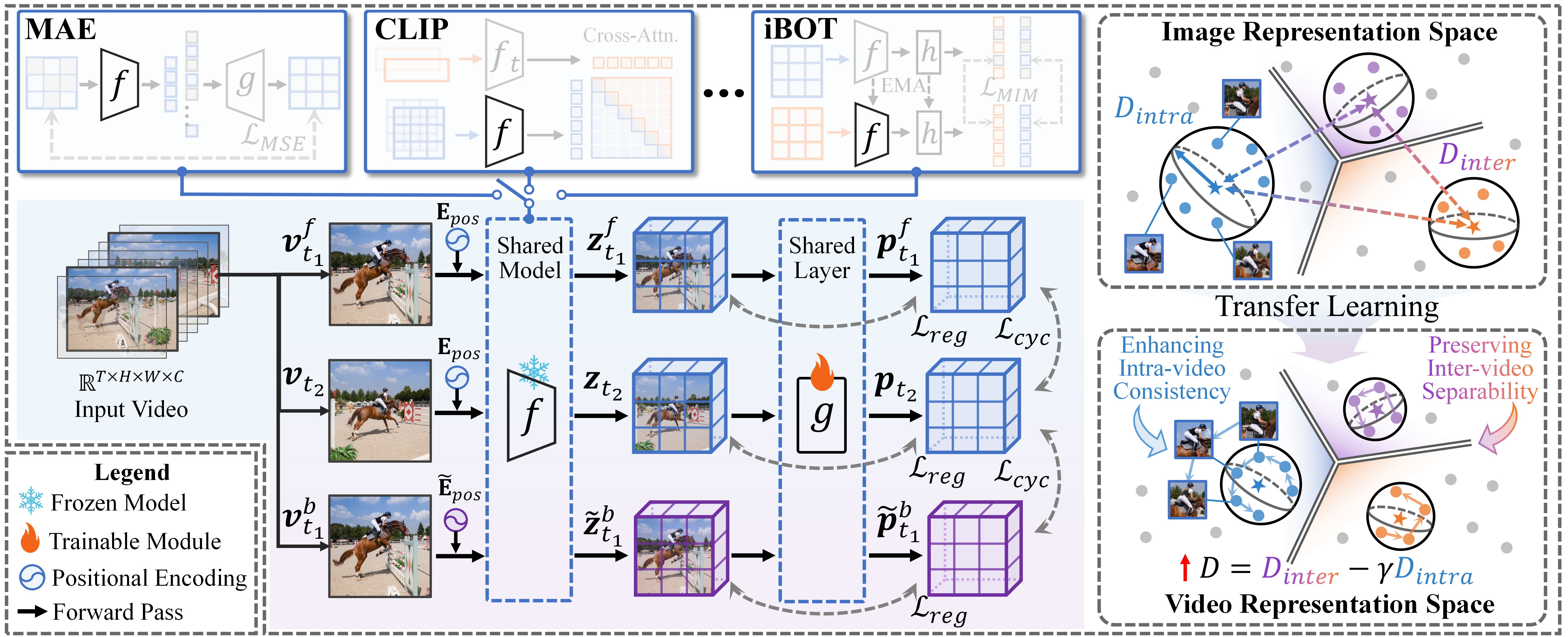}
    \vspace{-4pt}
    \caption{Overview of our image-to-video transfer learning framework. Two frames are sampled from each video to construct a cyclic sequence. A frozen image-pretrained encoder extracts patch-level features, which are then mapped by a learnable projection layer. The projection layer is trained with a temporal cycle-consistency loss and a semantic separability constraint for representation adjustment, thereby promoting a better trade-off between intra-video temporal consistency and inter-video semantic separability.}
  \label{fig:pipeline}
  \vspace{-10pt}
\end{figure*}

\section{Image-to-video Representation Transfer}
\label{sec:method}

To facilitate image-to-video transfer learning, we propose the \methodfullname{} (\methodnameb{}) framework as shown in \cref{fig:pipeline}. We first encode frames with a frozen image-pretrained encoder and then leverage a learnable lightweight layer to project the representation space (\cref{subsec:task_defination}). We optimize the projected representations for two goals: \textbf{1)} intra-video temporal consistency, enforced by a cycle-consistency learning strategy (\cref{subsec:temporal_consistency}); and \textbf{2)} inter-video semantic separability, preserved by a dimensionality constraint on the representations (\cref{subsec:semantic_separability}).

\subsection{Task definition}
\label{subsec:task_defination}

As a spatiotemporal volume~\cite{volume_1,volume_2,Kinetics_track}, a video can be represented as an ordered sequence of $T$ frames $\bm{V} = \{\bm{v}_{t} \in \mathbb{R}^{H \times W \times C}\}_{t=1}^{T}$, where $H$, $W$, and $C$ denote the height, width, and channels of each frame. 
Each frame $\bm{v}_{t}$ can be divided into $N = N_H \times N_W = \lceil H / p \rceil \times  \lceil W / p \rceil$ non-overlapping patches of $p^2$ pixels, where $p$ is the patch size.

Given a video $\bm{V}$, we randomly sample two frames $\vta$ and $\vtb$ with a temporal offset determined by $\delta \in (0,1)$.
Each frame is embedded through a frozen image-pretrained encoder $f: \mathbb{R}^{H \times W \times C} \rightarrow \mathbb{R}^{N \times d}$ with embedding dimension $d$, producing frame-wise representations $\zta = f(\vta; \mathbf{E}_{\text{pos}})$ and $\ztb = f(\vtb; \mathbf{E}_{\text{pos}})$, where $\mathbf{E}_{\text{pos}}$ is the positional encoding of $f$.
These representations inherit rich semantic priors from the image-pretrained model, yet still lack clear temporal correspondence.
Then, we apply a lightweight layer $g: \mathbb{R}^{N \times d} \rightarrow \mathbb{R}^{N \times d}$ with parameters limited to a linear layer and a \texttt{LayerNorm} to project $\zta, \ztb$ as $\pta = g(\zta)$ and $\ptb = g(\ztb)$. This projection maps the static representations into a shared latent space, where we aim to enhance temporal consistency while preserving semantic separability.

\begin{figure*}[t]
    \centering
    \begin{subfigure}{0.3\linewidth}
        \centering
        \includegraphics[width=\linewidth]{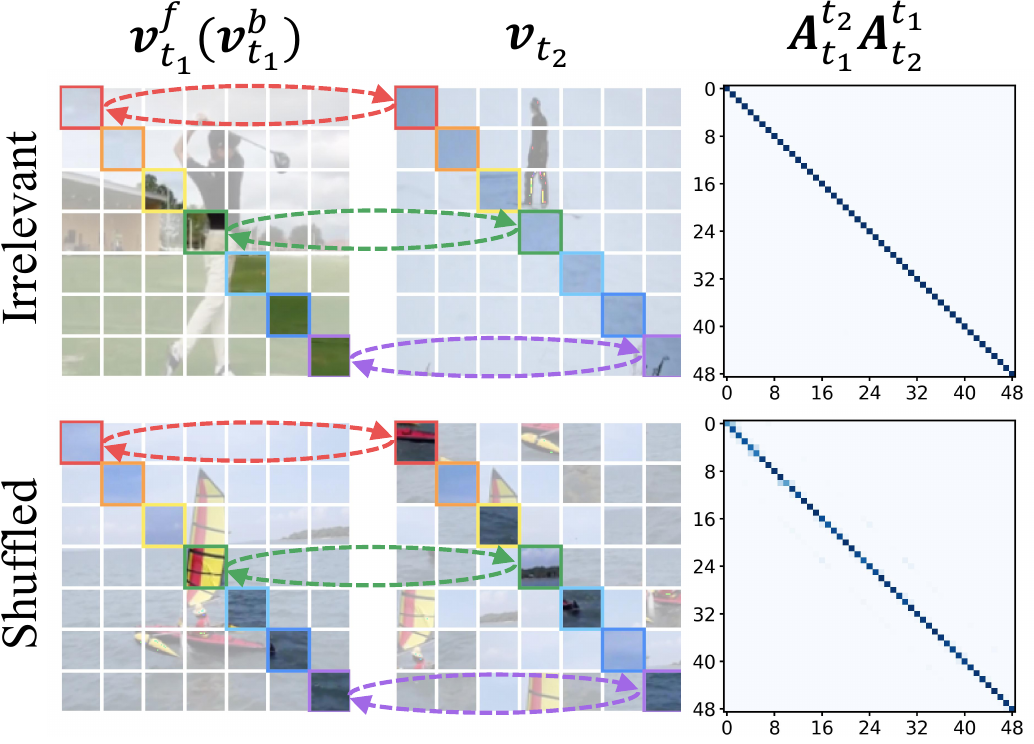}
        \label{fig:shortcut_observation}
    \end{subfigure}
    \begin{subfigure}{0.45\linewidth}
        \centering
        \includegraphics[width=\linewidth]{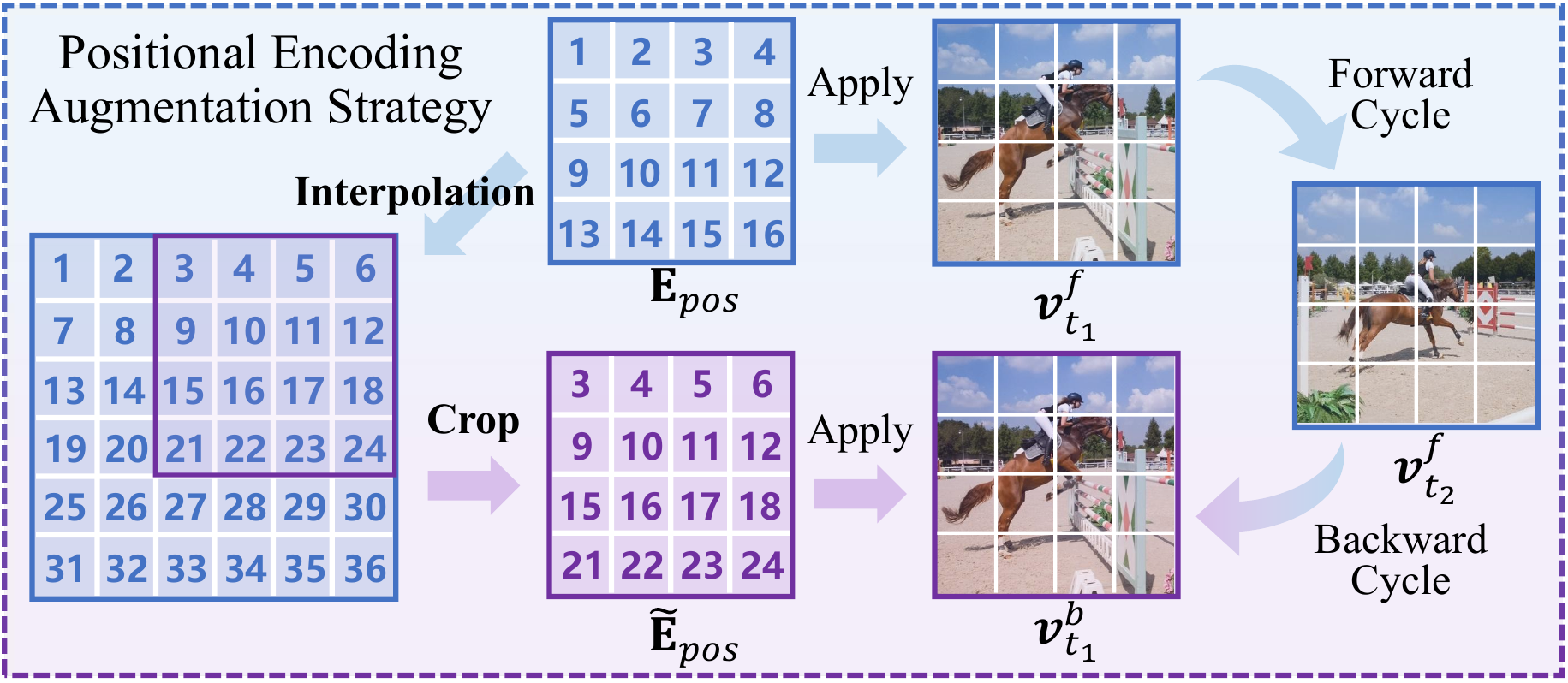}
        \label{fig:PEA_module}
    \end{subfigure}
    \begin{subfigure}{0.24\linewidth}
        \centering
        \includegraphics[width=\linewidth]{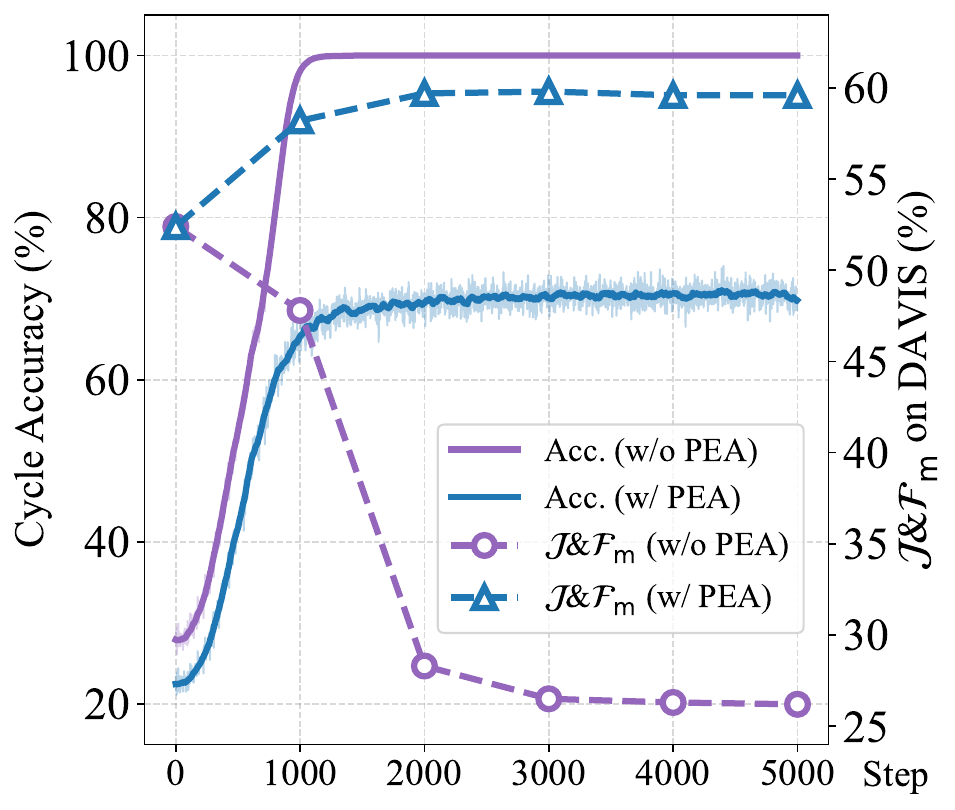}
        \label{fig:ablation_figure}
    \end{subfigure}
    
    \vspace{-14pt}
    \caption{
    \textbf{Left}: Observations on shortcuts. Patches with the same color box denote correspondence.
    \textbf{Middle}: Overview of our PEA strategy.
    \textbf{Right}: Cycle-consistent accuracy and downstream performance dynamics during training \textcolor{cvprblue}{with} or \textcolor{purple}{without} our PEA strategy on MAE encoder.
    }
    \label{fig:shortcut}
    \vspace{-14pt}
\end{figure*}

\subsection{Intra-video temporal consistency learning}
\label{subsec:temporal_consistency}
To provide direct and explicit guidance for learning temporal consistency, our core principle is to establish reliable temporal correspondences between patches.
Without supervision for precise alignment, prior works~\cite{cycle_walk_1,cycle_walk_2,cycle_walk_3} introduce \textit{Contrastive Random Walk} (CRW) structure to enhance cross-frame correspondences in videos.
CRW constructs a forward-backward sequence $\vta \xrightarrow{\text{forward}} \vtb \xrightarrow{\text{backward}} \vta$, where the first and last frames are identical.
For clarity, we denote the first forward frame as $\vta^f$ and the last backward frame as $\vta^b$.
The objective of CRW is to maximize the probability that each patch in $\vta^f$ returns to its original position in $\vta^b$ after traversing through the intermediate frame $\vtb$.

Let $\pta^f$, $\ptb$, and $\pta^b$ denote the projected representations of $\vta^f$, $\vtb$, and $\vta^b$, respectively. We then calculate the forward and backward correlation matrices as $\Atatb = \text{softmax}_\tau ( \pta^f \ptb^\top)$ and $\Atbta = \text{softmax}_\tau ( \ptb \pta^{b^\top} )$. 
Concretely, each element in $\Atatb$ can be computed as:
\begin{equation}
    \begin{aligned}
        \Atatb(i,j) = \frac{\text{exp}(d(\pta(i),\ptb(j))/\tau)}{\sum_{l=1}^N \text{exp}(d(\pta(i),\ptb(l))/\tau)},
    \end{aligned}
    \label{eq:correlation_matrix}
\end{equation}
where $\tau\in\mathbb{R}^+$ represents the temperature hyperparameter and $d(\cdot,\cdot)$ denotes the dot-product similarity.
Intuitively, the matrix $\Atatb$ quantifies the attention distribution from each patch in $\vta^f$ to all patches in $\vtb$, indicating the transition paths of each patch between the two frames.
Accordingly, the CRW objective can be formulated as aligning the correlation matrix product chain $\Atatb \Atbta$ with the identity matrix $\bm{I}$ via the Cross-Entropy loss $\mathcal{L}_{CRW}=\mathcal{L}_{CE}\left(\Atatb\Atbta, \bm{I}\right)$.

Despite its elegant formation, CRW underperforms on modern Vision Transformer (ViT) backbones~\cite{Vision_Transformer}. 
As shown in \cref{fig:shortcut} (Left), we test two cases: \textbf{1)} \textit{Irrelevant setting}: $\vta^f$ ($\vta^b$) and $\vtb$ are sampled from unrelated videos; \textbf{2)} \textit{Shuffled setting}: the patches of $\vtb$ are randomly shuffled.
In both cases, $\Atatb \Atbta$ rapidly converges to $\bm{I}$, minimizing $\mathcal{L}_{CRW}$ while failing to capture any meaningful visual correspondences.
This phenomenon reveals a shortcut solution~\cite{shortcut_2, cycle_walk_3}, where the model reduces the loss by exploiting exact positional cues instead of learning semantic relations.
We attribute this to the explicit positional encodings in ViTs, which are directly injected into the input tokens and propagated via the global attention, enabling patches to match by absolute location even under strong appearance changes.

To cope with this shortcut, we introduce \textit{\textbf{P}ositional \textbf{E}ncoding \textbf{A}ugmentation} (PEA), as illustrated in \cref{fig:shortcut} (Middle). 
PEA perturbs direct positional matching through an asymmetric design, inspired by information bottlenecks~\cite{bottleneck1,bottleneck2} in self-distillation frameworks~\cite{BYOL,SimSiam,DINO}.
Specifically, PEA interpolates the pretrained positional encoding $\mathbf{E}_{\text{pos}}$ with a controllable amplitude $\alpha\in\mathbb{R}^+$ and then applies a random crop to recover the original size, yielding an augmented version $\widetilde{\mathbf{E}}_{\text{pos}}$.
Afterward, we use $\widetilde{\mathbf{E}}_{\text{pos}}$ to encode the backward frame $\ztatilde^b = f(\vta^b; \widetilde{\mathbf{E}}_{\text{pos}})$ and obtain the corresponding projected representation $\ptatilde$.
This strategy breaks reliance on exact positional matches while preserving local positional relations, thereby leading to a stable learning process, as shown in \cref{fig:shortcut} (Right).

Given the forward correlation $\Atatb = \text{softmax}_\tau ( \qta^f \qtb^{\top})$ and the asymmetric backward correlation $\Atbtatilde = \text{softmax}_\tau ( \qtb \qtatilde^{b^\top})$, we define the cycle-consistency loss as $\mathcal{L}_{cyc} = \mathcal{L}_{CE}\left(\Atatb\Atbtatilde, \bm{I}\right)$ or formally:
\begin{equation}
    \begin{aligned}
        \mathcal{L}_{cyc}
        & = -\sum_{i=1}^{N}\text{log}P\left(X_d=\qtatilde^b(i)|X_s=\qta^f(i)\right),
    \end{aligned}
    \label{eq:cyc_loss}
\end{equation}
where $X_s$ and $X_d$ denote the start and destination patches of the random walker.
In this way, the cycle consistency objective can promote the model to learn effective temporal correspondences, facilitating the transfer from static image representations to dynamic video contexts.

\subsection{Inter-video semantic separability constraint}
\label{subsec:semantic_separability}

While image-pretrained models exhibit favorable semantic separability, optimizing $g$ solely for temporal alignment on video data with limited category diversity can lead to catastrophic forgetting of this property, and may further induce dimensional collapse~\cite{collapse1,collapse2,collapse3,dai2025exploring}.

To mitigate this issue, we introduce a distributional regularization based on Kullback-Leibler (KL) divergence to preserve feature diversity across the projection layer.
For each pair $(\bm{p}, \bm{z}) \in \mathcal{S} = \{ (\pta^f, \zta^f), (\ptb, \ztb), (\ptatilde^b, \ztatilde^b) \}$, we compute normalized distributions via softmax along the feature dimension: $P=\text{softmax}(\bm{p}), Z=\text{softmax}(\bm{z})$.
The regularization loss is then formulated as:
\begin{equation}
    \begin{aligned}
        \mathcal{L}_{reg}
        & = \frac{1}{\left|\mathcal{S}\right|}\sum_{(\bm{p}, \bm{z}) \in \mathcal{S}} \sum_{i=1}^{d} P(i) \log \frac{P(i)}{Z(i)},
    \end{aligned}
    \label{eq:reg_loss}
\end{equation}
where $d$ denotes the feature dimension. This distributional regularization enforces a constraint on the global semantic structure while allowing flexible refinement.

By jointly optimizing the cycle consistency loss and the constraint term, we can achieve a balanced learning between intra-video temporal consistency and inter-video semantic separability.
The final training objective for $g$ is formulated as \cref{eq:total_loss}, where $\lambda$ controls the strength of the constraint, and we will provide a justification for $\lambda$ in the \cref{sec:theory}. 
\begin{equation}
    \mathcal{L}_{total} = \mathcal{L}_{cyc} + \lambda\mathcal{L}_{reg}.
\label{eq:total_loss}
\end{equation}

\section{Theoretical Analysis}
\label{sec:theory}

In this section, we explore how the proposed method achieves our target, \ie, improving the \textbf{intra-video temporal consistency} without largely affecting the \textbf{inter-video semantic separability}.
Generally, our analysis leads to two main conclusions: 
\textbf{a)} Within our proposed method, both linear-based and MLP projection rebalance different dimensions of the representation space in a similar mechanism (\cref{thm:thm1_main}).
\textbf{b)} This rebalance yields a better trade-off between the two properties under appropriate conditions (\cref{thm:thm2_main}).
This section is self-contained and may be skipped without affecting the overall understanding of our framework. The detailed derivations are provided in the \textit{Supplementary Material}.

Formally, given the original representation of a patch $\tzi\in\mathbb R^d$, we aim to learn a projection $g$ that maps $\tzi$ to $\bm{p}_i = g(\tzi) \in \mathbb{R}^{d}$.
Since directly analyzing the original objectives in \cref{eq:cyc_loss} and \cref{eq:reg_loss} is challenging, we introduce simplified yet equivalent surrogates to facilitate the analysis.

\textbf{Objective 1 (Temporal Cycle Consistency).} 
This term encourages alignment between temporally corresponding patches. We quantify it with the metric in \cref{eq:M_cyc_main}.
Note that minimizing $M_{\text{cyc}}$ is equivalent to minimizing the cycle-consistency loss $L_{\text{cyc}}$, since both decrease as temporal consistency improves and share the same optimality conditions.
\begin{equation}
M_{\text{cyc}} = \frac12\,\mathbb{E}_{\tza,\tzb}\!\bigl[\lVert g(\tza)-g(\tzb)\rVert^{2}\bigr].
\label{eq:M_cyc_main}
\end{equation}

\textbf{Objective 2 (Semantic Separability Constraint)}: 
The KL divergence constraint in \cref{eq:reg_loss} preserves the distance relationships between patches before and after the projection, which is equivalent to constraining the projection to be isometric. This property can be measured by the orthogonality of the Jacobian matrix~\cite{vae1,vae2,vae3,vae4} of $g$, as formulated in \cref{eq:M_reg_main}. Therefore, we use it as an approximation of $L_{\text{reg}}$.
\begin{equation}
M_{\text{reg}} = \frac{1}{2}\,\mathbb{E}_{\tzi}\!\bigl[\lVert \tJ_g(\tzi)\tJ_g(\tzi)^{\top}-\bm{I}\rVert_{F}^{2}\bigr].
\label{eq:M_reg_main}
\end{equation}

Combining the two surrogates yields the overall objective:
\begin{equation}
\min_g \; M(g) = M_{\text{cyc}} + \lambda M_{\text{reg}}.
\label{eq:theory_obj}
\end{equation}

We now consider two representative cases for $g$:
\textbf{i)} A linear projection: $g(\bm{z}) = \tW \bm{z}$;
\textbf{ii)} A two-layer MLP: $g(\bm{z}) = \tWlb \, \phi(\tWla \bm{z})$ with activation function $\phi(\cdot)=tanh(\cdot)$, and this case represents more complex modules.
The following theorem analyzes the spectral properties of the optimal solution under both cases, illustrating how the projection affects the quality of the transferred representation.

\begin{thm}[Spectral Properties of Optimal Projections, Informal]
    \label{thm:thm1_main}
    Denote the eigenvalues of intra-video covariance matrix $\tSigma$ are $\{\sigma_i\}_{i=1}^d$.
    For case i), let $\{\mu_i\}_{i=1}^d$ be the eigenvalues of $\tW$.
    Assume $\tW$ and $\tSigma$ are positive semi-definite. For case ii), let $\{\mu_{1,i}\}_{i=1}^d$ and $\{\mu_{2,i}\}_{i=1}^d$ be the eigenvalues of $\tWla$ and $\tWlb$, respectively. Assume $\tzi\sim\mathcal{N}(\bm{0},\tSigma)$, $\tWla$, $\tWlb$ and $\tSigma$ are positive semi-definite. 
    Then the eigenvalues of the optimal projection $\tW^\star$, ${\tWla}^\star$ and ${\tWlb}^\star$ obey:
    \begin{equation}
        \mu_i^\star=
        \mu_{1,i}^\star\cdot\mu_{2,i}^\star=
        \begin{cases}
            0, & \sigma_i > 2\lambda,\\
            \sqrt{1-\frac{\sigma_i}{2\lambda}}, & \sigma_i\leq 2\lambda.
        \end{cases}
    \end{equation}
\end{thm}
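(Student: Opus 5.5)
The plan is to reduce both cases to a family of decoupled scalar problems, one per eigendirection of $\tSigma$, and then minimize each scalar problem in closed form.

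\textbf{Linear case.} Write $\bm{\Delta}=\tza-\tzb$, and read the intra-video covariance as $\tSigma=\mathbb{E}[\bm{\Delta}\bm{\Delta}^\top]$. Then
\begin{equation*}
M_{\text{cyc}}=\tfrac12\,\mathrm{tr}(\tW\tSigma\tW^\top).
\end{equation*}
Since $\tJ_g\equiv\tW$, the second surrogate is the constant $M_{\text{reg}}=\tfrac12\lVert\tW\tW^\top-\bm{I}\rVert_F^2$.

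Step 1 is to show that an optimal $\tW$ can be taken to commute with $\tSigma$. Write $\tW=\tU\,\mathrm{diag}(\mu)\,\tU^\top$ with $\tW\succeq 0$.
\begin{itemize}
\item The term $M_{\text{reg}}=\tfrac12\sum_i(\mu_i^2-1)^2$ depends only on the spectrum of $\tW$.
\item The term $\mathrm{tr}(\tW^2\tSigma)$ is bounded below, by von Neumann's trace inequality, by $\sum_i \mu_{(i)}^2\sigma_{[i]}$. Here the $\mu^2$ are sorted in ascending order and the $\sigma$ in descending order.
\item Equality holds when $\tW$ is diagonal in the eigenbasis of $\tSigma$.
\end{itemize}
So for fixed spectrum the objective is minimized by aligning eigenvectors. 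The problem then separates as
\begin{equation*}
M=\sum_i h_i(\mu_i),\qquad h_i(\mu)=\tfrac12\sigma_i\mu^2+\tfrac{\lambda}{2}(\mu^2-1)^2,\qquad \mu\ge 0,
\end{equation*}
with $\mu_i$ paired to $\sigma_i$.

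Step 2 is elementary calculus on each $h_i$. The first-order condition is
\begin{equation*}
h_i'(\mu)=\mu\bigl(\sigma_i+2\lambda(\mu^2-1)\bigr)=0,
\end{equation*}
so $\mu=0$ or $\mu^2=1-\sigma_i/(2\lambda)$.
\begin{itemize}
\item The nonzero root exists only when $\sigma_i\le 2\lambda$.
\item When it exists, put $s=\sigma_i/2\lambda$. Then $h_i=\lambda(s-s^2/2)\le\lambda/2=h_i(0)$, since $(1-s)^2\ge 0$. So the nonzero root is the global minimizer.
\item When $\sigma_i>2\lambda$, $h_i$ is increasing on $\mu\ge0$, so the minimizer is $\mu=0$.
\end{itemize}
This gives exactly the stated piecewise formula. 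The resulting assignment automatically pairs larger $\sigma$ with smaller $\mu$, which is consistent with the ordering used in Step 1.

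\textbf{MLP case.} For $g(\bm z)=\tWlb\phi(\tWla\bm z)$ the Jacobian is
\begin{equation*}
\tJ_g=\tWlb\,\mathrm{diag}\bigl(\phi'(\tWla\bm z)\bigr)\,\tWla .
\end{equation*}
The plan is to reduce this case to the linear one with effective matrix $\tW_{\text{eff}}=\tWlb\tWla$. I would use $\tzi\sim\mathcal N(\bm 0,\tSigma)$ and a first-order expansion of $\tanh$ around $0$, where $\phi(0)=0$ and $\phi'(0)=1$. Under this approximation:
\begin{itemize}
\item $\mathbb{E}[\lVert g(\tza)-g(\tzb)\rVert^2]\approx\mathrm{tr}(\tW_{\text{eff}}\tSigma\tW_{\text{eff}}^\top)$;
\item $\mathbb{E}[\tJ_g\tJ_g^\top]\approx\tW_{\text{eff}}\tW_{\text{eff}}^\top$.
\end{itemize}
The Gaussian assumption lets the higher-order corrections be written with Stein's lemma as factors like $\mathbb{E}[\phi'(\cdot)^2]\le 1$, which I would treat as negligible in the small-activation regime.

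Then I would argue, as in Step 1, that at the optimum $\tWla$ and $\tWlb$ are PSD and share the eigenbasis of $\tSigma$. This makes the eigenvalues of the product equal $\mu_{1,i}\mu_{2,i}$, and Step 2 applied to $\mu_i=\mu_{1,i}\mu_{2,i}$ yields the same formula. The factorization itself is only determined up to rescaling between the two layers, which does not affect the statement.

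\textbf{Main obstacle.} I expect the MLP step to be the hardest, specifically justifying that the $\tanh$ nonlinearity does not change the optimality conditions. If the linearization is too crude, I would instead compute the Gaussian expectations of $\phi'$ exactly per coordinate, using the shared eigenbasis. I would then show that the stationarity equations for the product still factor as $\mu\,(\sigma_i+2\lambda(\mu^2-1))=0$ up to a positive multiplicative factor. The informal nature of the theorem suggests the authors rely on the linearization.
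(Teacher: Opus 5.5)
Your treatment of case i) is correct, and it is a stronger route than the paper's. The paper postulates that $\tW$ and $\tSigma$ are symmetric and commute (its commuting-pair assumption). It then reduces the objective to $\frac12\sum_i\mu_i^2\sigma_i+\frac{\lambda}{2}\sum_i(\mu_i^4-2\mu_i^2)+\frac{\lambda d}{2}$ and reads off the stationary points of $\mu_i\bigl(2\lambda\mu_i^2-(2\lambda-\sigma_i)\bigr)$, asserting rather than checking which one is the minimizer. You do two things it does not:
\begin{itemize}
\item You derive the eigenvector alignment from the trace inequality $\mathrm{tr}(\tW^2\tSigma)\ge\sum_i\mu_{(i)}^2\sigma_{[i]}$. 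For PSD $\tW$, commutativity is then a consequence rather than a hypothesis.
\item You verify global minimality via $h_i(0)-h_i(\mu^\star)=\frac{\lambda}{2}(1-s)^2\ge 0$. Equivalently, $h_i$ is a convex quadratic in $t=\mu^2\ge 0$ with a unique minimizer, so every PSD minimizer has the stated spectrum.
\end{itemize}

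For case ii), your reduction matches the paper's in substance. The paper rotates into the eigenbasis using $\phi(\tU\bm{x})\approx\tU\phi(\bm{x})$. It keeps the Gaussian fourth-moment factor $c_i=1-2\sigma_i\mu_{1,i}^2+3\sigma_i^2\mu_{1,i}^4$ in the Jacobian term and drops it by assuming $\mu_{1,i}\ll 1$. This leaves exactly your $\sum_i h_i(\mu_{1,i}\mu_{2,i})$, and applying Step~2 to the product is cleaner than the paper's separate differentiation in $\mu_{1,i}$ and $\mu_{2,i}$.

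The step that fails is deriving, ``as in Step 1'', that optimal $\tWla,\tWlb$ share the eigenbasis of $\tSigma$. The linearized objective depends only on $\tW_{\text{eff}}=\tWlb\tWla$, which is generally not symmetric. Writing $\tW_{\text{eff}}=\bm{A}\bm{S}\bm{B}^\top$, the trace inequality forces $\bm{B}$ to align with $\tSigma$ but leaves $\bm{A}$ free. This admits optimal PSD factorizations that commute with neither $\tSigma$ nor each other.

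A concrete counterexample: take $d=2$, $\tSigma=\mathrm{diag}(\sigma_1,\sigma_2)$ with $\sigma_1>2\lambda>\sigma_2$, and $m=\sqrt{1-\sigma_2/(2\lambda)}$. Set $\tWla=\bm{e}_2\bm{e}_2^\top$ and $\tWlb=\frac{m}{\sqrt2}(\bm{e}_1+\bm{e}_2)(\bm{e}_1+\bm{e}_2)^\top$. Then $\tWlb\tWla=m\bm{u}\bm{e}_2^\top$ with $\bm{u}=(\bm{e}_1+\bm{e}_2)/\sqrt2$, which attains the optimal value $\frac12 m^2\sigma_2+\frac{\lambda}{2}\bigl(1+(m^2-1)^2\bigr)$. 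Yet the spectra $\{0,1\}$ and $\{0,\sqrt2\,m\}$ admit no pairing whose products are $\{0,m\}$. So mutual commutativity of $\tWla,\tWlb,\tSigma$ must be imposed as a hypothesis, as the paper's formal version does. Your remark that the factorization is fixed only up to rescaling between layers likewise holds only under that hypothesis, and then per eigendirection.

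Finally, your fallback of computing the $\tanh$ expectations exactly per coordinate cannot recover the exact formula, for two reasons. First, $\tanh$ acts coordinatewise in the standard basis, not in $\tU$; that is why the paper needs $\phi(\tU\bm{x})\approx\tU\phi(\bm{x})$. Second, the surviving corrections depend on $\mu_{1,i}$ separately from the product: keeping $c_i$, stationarity in $\mu_{2,i}$ gives $c_i\mu_{1,i}^2\mu_{2,i}^2=1-\sigma_i/(2\lambda c_i)$, not the stated formula. Case ii) therefore holds only to leading order, which is all the paper's derivation establishes.
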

\begin{rem}
\cref{thm:thm1_main} reveals a \emph{soft thresholding} behavior for both cases. Directions with large temporal variance ($\sigma_i > 2\lambda$) are suppressed ($\mu_i^\star = 0$ or $\mu_{1,i}^\star\cdot\mu_{2,i}^\star = 0$), those with smaller variance are gradually scaled toward unit norm.
Intuitively, components that already exhibit strong temporal consistency require less adjustment, whereas low-variance directions are amplified until the orthogonality penalty balances the consistency gain.
For the two-layer MLP, when the inputs fall within the approximate linear region of the \texttt{tanh} function, \textbf{the overall transformation presents similar representation scaling behavior as the linear layer}.
Similar conclusions hold for more complex architectures.
\label{rem:rem1}
\end{rem}

Since both cases yield similar spectral effects, we focus on examining whether a single linear layer is sufficient to improve the trade-off between temporal consistency and semantic separability.
To this end, we first define two metrics to quantify these two competing objectives:
\textbf{1) Intra-video distance}: $D_{intra}(\tza,\tzb) = \mathbb{E}_{\tza, \tzb}\left[\lVert \tza - \tzb\rVert^2\right]$, which measures the average distance between temporally corresponding patches within a video.
\textbf{2) Inter-video distance}: $D_{inter}(\tza,\tzb) = \mathbb{E}_{\tbarza, \tbarzb}\left[\lVert \tbarza - \tbarzb\rVert^2\right]$, calculating the average distance between video-level representations, where $\tbarzi = \mathbb{E}_{\bm{z} \in f(\bm{V}_i)} \left[\bm{z}\right]$ is the mean representation of the video $\bm{V}_i$.
Then we define the margin of these two metrics as $D(\tza,\tzb) = D_{inter}(\tza,\tzb)-\gamma D_{intra}(\tza,\tzb)$, reflecting the degree of separation between the two properties, where a larger value indicates a better trade-off.
In the following theorem, we present how this margin metric evolves.

\begin{thm}[Trade-off Improvement, Informal]
\label{thm:thm2_main}
Let 
$\tSigma = \mathbb{E}_{\tza,\tzb}\!\bigl[(\tza-\tzb)(\tza-\tzb)^\top\bigr]$, $\tbarSigma = \mathbb{E}_{\tbarza,\tbarzb}\!\bigl[(\tbarza-\tbarzb)(\tbarza-\tbarzb)^\top\bigr]$ denote the intra-video and inter-video covariance matrices, with eigenvalues $\{\sigma_i\}_{i=1}^d$ and $\{\tau_i\}_{i=1}^d$, respectively. 
Assume $\forall j, \tau_j = \frac 1 d \sum_{i=1}^d\sigma_i = \tau$. For the linear projection $\bm{p}=g(\bm{z}) = \tW \bm{z}$ with eigenvalues of the optimal $\tW^{\star}$ are $\mu_i^\star = \sqrt{1 - \frac{\sigma_i}{2\lambda}}$ (when $\sigma_i\leq 2\lambda$), the improvement in the margin metric $\Delta = D(\bm{p}_1,\bm{p}_2)-D(\tza,\tzb)$ is given by:
\begin{equation}
\Delta = \sum_{\sigma_i \le 2\lambda} (\tau - \sigma_i)\left(1 - \frac{\sigma_i}{2\lambda}\right) > 0.
\end{equation}
\end{thm}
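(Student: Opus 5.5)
The plan is to compute both distance metrics exactly in the eigenbasis of $\tSigma$, show that the baseline margin vanishes under the stated assumption, and then prove positivity of $\Delta$ by a Chebyshev-type rearrangement argument.

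\textbf{Step 1: reduce both distances to traces.} Both distances are traces of quadratic forms, so I would first write
\begin{equation}
D_{intra}(\bm{p}_1,\bm{p}_2)=\operatorname{tr}\bigl(\tW\tSigma\tW^\top\bigr),\qquad D_{inter}(\bm{p}_1,\bm{p}_2)=\operatorname{tr}\bigl(\tW\tbarSigma\tW^\top\bigr).
\end{equation}
The second identity uses linearity of $g$, which gives $\bar{\bm{p}}_i=\tW\tbarzi$. The same formulas with $\tW=\bm{I}$ give the unprojected quantities.

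\textbf{Step 2: diagonalize.} Two facts make everything diagonal.
\begin{itemize}
\item The assumption that all eigenvalues of the symmetric PSD matrix $\tbarSigma$ equal $\tau$ means $\tbarSigma=\tau\bm{I}$.
\item The optimal $\tW^\star$ from \cref{thm:thm1_main} can be taken to share the eigenbasis of $\tSigma$. The objective $M$ for linear $g$ is $\tfrac12\operatorname{tr}(\tW\tSigma\tW^\top)+\tfrac{\lambda}{2}\lVert\tW\tW^\top-\bm{I}\rVert_F^2$. For symmetric PSD $\tW$ it is minimized by a $\tW$ commuting with $\tSigma$; this is how the eigenvalue formula in \cref{thm:thm1_main} is obtained, and I will invoke it.
\end{itemize}
With both facts,
\begin{equation}
D(\bm{p}_1,\bm{p}_2)=\sum_{i=1}^d {\mu_i^\star}^2\,(\tau-\gamma\sigma_i),\qquad D(\tza,\tzb)=\sum_{i=1}^d(\tau-\gamma\sigma_i).
\end{equation}
Here ${\mu_i^\star}^2=1-\sigma_i/(2\lambda)$ when $\sigma_i\le 2\lambda$, and ${\mu_i^\star}^2=0$ otherwise.

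\textbf{Step 3: the closed form.} The stated formula corresponds to $\gamma=1$, which I take as the normalization implicit in the informal statement. Then the hypothesis $\tau=\frac1d\sum_i\sigma_i$ gives $\sum_i(\tau-\sigma_i)=0$, so the baseline margin $D(\tza,\tzb)$ vanishes. Therefore
\begin{equation}
\Delta=\sum_{\sigma_i\le 2\lambda}(\tau-\sigma_i)\Bigl(1-\frac{\sigma_i}{2\lambda}\Bigr),
\end{equation}
which is exactly the claimed expression.

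\textbf{Step 4: positivity.} This is the only nontrivial part. Set $w(s)=\bigl(1-\frac{s}{2\lambda}\bigr)_+$, which is nonincreasing in $s$. Because $\sum_i(\tau-\sigma_i)=0$, I can subtract the constant $w(\tau)$ from every weight:
\begin{equation}
\Delta=\sum_{i=1}^d(\tau-\sigma_i)\,w(\sigma_i)=\sum_{i=1}^d(\tau-\sigma_i)\bigl(w(\sigma_i)-w(\tau)\bigr).
\end{equation}
Since $w$ is nonincreasing, $\tau-\sigma_i$ and $w(\sigma_i)-w(\tau)$ always have the same sign, so every term is nonnegative and $\Delta\ge 0$.

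Strictness requires a term that is strictly positive. It suffices to have some $\sigma_i<\tau$ with $\sigma_i<2\lambda$; then $w(\sigma_i)>w(\tau)$ and the term is positive. Such an index exists whenever the $\sigma_i$ are not all equal and $\min_i\sigma_i<2\lambda$. I would state this as the ``appropriate condition'' making the inequality strict. In the degenerate isotropic case $\sigma_i\equiv\tau$ one gets $\Delta=0$, so some such condition is genuinely needed.

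\textbf{Main obstacle.} The hard part is the positivity in Step 4, together with identifying exactly which nondegeneracy condition is required. The closed form in Steps 1 to 3 is routine once the shared eigenbasis from \cref{thm:thm1_main} is granted.
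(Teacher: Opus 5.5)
Your proof is correct. Steps 1--3 follow the paper's route: reduce both distances to traces, diagonalize in the common eigenbasis of $\tW^\star$ and $\tSigma$, and use $\sum_i(\tau-\sigma_i)=0$ to reach the closed form.

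You add two small refinements there. First, you note that $\tbarSigma=\tau\bm{I}$, so commutativity of $\tW$ with $\tbarSigma$ is automatic rather than assumed as in the paper. Second, you fix $\gamma=1$ explicitly. The paper instead subtracts $\gamma$ times the \emph{original} distances inside its $\Delta_{inter}$ and $\Delta_{intra}$, so $\gamma$ only multiplies the vanishing baseline $\sum_i(\tau-\sigma_i)$. With the literal definition $\Delta=D(\bm{p}_1,\bm{p}_2)-D(\tza,\tzb)$ and $\gamma\neq 1$, you would pick up an extra term $(1-\gamma)\sum_i({\mu_i^\star}^2-1)\sigma_i$, so your reading is the one under which the stated identity holds.

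The genuine difference is Step 4. The paper argues termwise under the sufficient condition $2\lambda<\tau$. Every retained index then has $\sigma_i\le 2\lambda<\tau$, so each summand $(\tau-\sigma_i)(1-\sigma_i/(2\lambda))$ is nonnegative. Your subtraction of the constant $w(\tau)$ is a Chebyshev-sum argument, and it gives $\Delta\ge 0$ for every $\lambda>0$. That includes the regime $2\lambda\ge\tau$, where retained directions with $\tau<\sigma_i\le 2\lambda$ can contribute negatively and the termwise argument fails. When $2\lambda<\tau$ you have $w(\tau)=0$, and your decomposition reduces exactly to the paper's.

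Your strictness condition (the $\sigma_i$ not all equal, and $\min_i\sigma_i<2\lambda$) is in fact necessary and sufficient. It also exposes a small gap in the paper's claim that $\Delta>0$ whenever $\lambda<\tau/2$: that hypothesis does not force any $\sigma_i<2\lambda$. For example, $\sigma_i\equiv\tau>2\lambda$ gives $\tW^\star=0$ and $\Delta=0$.

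The paper's version buys a per-direction reading: every direction kept by the soft threshold has below-average temporal variance and individually widens the margin. Yours buys validity for all $\lambda$ and an exact characterization of when the improvement is strict.
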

\begin{rem}
\cref{thm:thm2_main} 
indicates that the margin metric between inter- and intra-video distance can be enhanced under the optimal linear projection, thus yielding a better trade-off between temporal consistency and semantic separability.
\label{rem:rem2}
\end{rem}

\section{Experiments}
\label{sec:experiments}

\begin{table*}[htbp]
  \centering
    \caption{Comparison with image and video representation learning methods on three dense-level video downstream tasks. Results marked with $^\dagger$ are reported from prior works; $^\ddagger$ denotes evaluations based on the official pretrained weights. The best results are highlighted in \textbf{bold}. INet, K400, WIT, and LAION represent ImageNet-1k~\cite{Imagenet}, Kinetics-400~\cite{Kinetics}, WIT-400M~\cite{CLIP}, and LAION-400M~\cite{LAION} datasets.}  
    \vspace{-4pt}
    \setlength{\tabcolsep}{9pt}
  \resizebox{0.92\textwidth}{!}{%
    \begin{tabular}{ccl|ccc|cccccc}
    \toprule
    \multicolumn{2}{c}{\multirow{2}[2]{*}{Type}} & \multirow{2}[2]{*}{Method} & \multirow{2}[2]{*}{Backbone} & \multirow{2}[2]{*}{Dataset} & \multirow{2}[2]{*}{Epoch} & VIP & \multicolumn{3}{c}{DAVIS-2017}   & \multicolumn{2}{c}{JHMDB} \\
        &  &   &      &       &   & mIoU & $\mathcal{J}\&\mathcal{F}_{\mathrm{m}}$ & $\mathcal{J}_{\mathrm{m}}$   & $\mathcal{F}_{\mathrm{m}}$   & PCK@0.1 & PCK@0.2 \\

    \midrule
     \multicolumn{2}{c}{\multirow{6}[1]{*}{\centering \shortstack{Video \\ Pretrained}}} 
       & VideoMAE$^\ddagger$~\cite{VideoMAE} & ViT-L/16 & K400 & 1600 & \cellcolor[rgb]{ 1,  .973,  .969}25.6  & \cellcolor[rgb]{ .961,  .98,  .992}45.0  & \cellcolor[rgb]{ .961,  .98,  .992}43.6  & \cellcolor[rgb]{ .961,  .98,  .992}46.5  & \cellcolor[rgb]{ .8,  .894,  .737}43.3  & \cellcolor[rgb]{ .851,  .922,  .808}70.5  \\
       & & MAE-ST$^\dagger$~\cite{MAE_ST} & ViT-L/16 & K400  & 1600  & \cellcolor[rgb]{ .984,  .875,  .804}33.2  & \cellcolor[rgb]{ .863,  .918,  .965}54.6  & \cellcolor[rgb]{ .792,  .875,  .949}55.5  & \cellcolor[rgb]{ .925,  .957,  .98}53.6  & \cellcolor[rgb]{ .776,  .878,  .706}44.4  & \cellcolor[rgb]{ .804,  .894,  .741}72.5  \\
       & & DropMAE$^\dagger$~\cite{DropMAE}  & ViT-B/16 & K400 & 1600  & \cellcolor[rgb]{ .996,  .929,  .894}31.1  & \cellcolor[rgb]{ .898,  .941,  .976}53.4  & \cellcolor[rgb]{ .902,  .945,  .976}51.8  & \cellcolor[rgb]{ .89,  .937,  .973}55.0  & \cellcolor[rgb]{ .82,  .902,  .765}42.3  & \cellcolor[rgb]{ .882,  .937,  .851}69.2  \\
       & & SiamMAE~\cite{SiamMAE} & ViT-B/16  & K400  & 400  &  \cellcolor[rgb]{ .973,  .796,  .678}36.1  & \cellcolor[rgb]{ .675,  .8,  .918}60.9  & \cellcolor[rgb]{ .675,  .8,  .918}59.4  & \cellcolor[rgb]{ .698,  .816,  .925}62.4  & \cellcolor[rgb]{ .624,  .8,  .502}47.0  & \cellcolor[rgb]{ .671,  .824,  .565}74.9  \\
       & & CropMAE$^\dagger$~\cite{CropMAE}   &  ViT-B/16 & K400   & 400  & \cellcolor[rgb]{ .988,  .878,  .812}33.0  & \cellcolor[rgb]{ .769,  .859,  .941}57.8  & \cellcolor[rgb]{ .753,  .847,  .937}56.9  & \cellcolor[rgb]{ .792,  .875,  .949}58.7  & \cellcolor[rgb]{ .725,  .851,  .639}45.3  & \cellcolor[rgb]{ .776,  .878,  .706}73.3  \\
       & & RSP$^\dagger$~\cite{RSP}   & ViT-B/16 & K400 & 400   & \cellcolor[rgb]{ .984,  .851,  .769}34.0  & \cellcolor[rgb]{ .69,  .808,  .922}60.5  & \cellcolor[rgb]{ .725,  .831,  .933}57.8  & \cellcolor[rgb]{ .675,  .8,  .918}63.2  & \cellcolor[rgb]{ .682,  .831,  .58}46.0  & \cellcolor[rgb]{ .69,  .835,  .592}74.6  \\

    \midrule\midrule
    \multicolumn{1}{c}{\multirow{16}[1]{*}{\centering \shortstack{Image \\ Pretrained \\ +\textbf{\textit{Ours}}}}} 
       &  \multicolumn{1}{c}{\multirow{4}[1]{*}{\centering \shortstack{Mask \\ Modeling }}}
       & MAE$^\ddagger$~\cite{MAE}  & ViT-B/16 & INet & 800  &  \cellcolor[rgb]{ 1,  .973,  .969}29.3  & \cellcolor[rgb]{ .925,  .957,  .98}52.4  & \cellcolor[rgb]{ .925,  .957,  .98}51.0  & \cellcolor[rgb]{ .922,  .953,  .98}53.9  & \cellcolor[rgb]{ .831,  .91,  .78}41.6  & \cellcolor[rgb]{ .882,  .937,  .851}69.3  \\
       & & MAE +\textbf{\textit{Ours}} & ViT-B/16 & K400 &  +5 & \cellcolor[rgb]{ .984,  .859,  .78}\textbf{33.8} & \cellcolor[rgb]{ .714,  .827,  .929}\textbf{59.6} & \cellcolor[rgb]{ .718,  .827,  .929}\textbf{58.0} & \cellcolor[rgb]{ .729,  .835,  .933}\textbf{61.2} & \cellcolor[rgb]{ .537,  .753,  .392}\textbf{48.4} & \cellcolor[rgb]{ .545,  .757,  .404}\textbf{76.7} \\
       \cmidrule{3-12}
       
       & & I-JEPA~\cite{I_JEPA}  & ViT-B/16 & INet & 800  & \cellcolor[rgb]{ .992,  .918,  .878}31.5  & \cellcolor[rgb]{ .882,  .929,  .973}53.9  & \cellcolor[rgb]{ .871,  .922,  .969}52.9  & \cellcolor[rgb]{ .898,  .941,  .976}54.8  & \cellcolor[rgb]{ .812,  .898,  .757}42.6  & \cellcolor[rgb]{ .831,  .91,  .78}71.4  \\
       & & I-JEPA +\textbf{\textit{Ours}} & ViT-B/16 & K400 &  +5 & \cellcolor[rgb]{ .976,  .82,  .714}\textbf{35.3} & \cellcolor[rgb]{ .741,  .843,  .937}\textbf{58.7} & \cellcolor[rgb]{ .737,  .839,  .933}\textbf{57.4} & \cellcolor[rgb]{ .765,  .855,  .941}\textbf{59.9} & \cellcolor[rgb]{ .776,  .878,  .706}\textbf{44.4} & \cellcolor[rgb]{ .784,  .886,  .718}\textbf{73.2} \\
       \cmidrule{2-12}
       
       &  \multicolumn{1}{c}{\multirow{6}[1]{*}{\centering \shortstack{Contrastive \\ Learning }}}
       & CLIP$^\ddagger$~\cite{CLIP}  & ViT-B/16 & WIT  & 32  & \cellcolor[rgb]{ .973,  .773,  .639}38.1  & \cellcolor[rgb]{ .855,  .914,  .965}54.9  & \cellcolor[rgb]{ .855,  .914,  .965}53.4  & \cellcolor[rgb]{ .855,  .914,  .965}56.4  & \cellcolor[rgb]{ .949,  .973,  .933}36.9  & \cellcolor[rgb]{ .949,  .973,  .933}67.7  \\
       & & CLIP +\textbf{\textit{Ours}} & ViT-B/16 & K400 &  +5 & \cellcolor[rgb]{ .961,  .714,  .549}\textbf{39.2} & \cellcolor[rgb]{ .753,  .851,  .937}\textbf{58.3} & \cellcolor[rgb]{ .749,  .847,  .937}\textbf{57.0} & \cellcolor[rgb]{ .769,  .859,  .941}\textbf{59.7} & \cellcolor[rgb]{ .851,  .922,  .808}\textbf{40.6} & \cellcolor[rgb]{ .831,  .91,  .78}\textbf{71.4} \\
       \cmidrule{3-12}

       & & BLIP$^\ddagger$~\cite{BLIP} & ViT-B/16 &  LAION & 20 & \cellcolor[rgb]{ .973,  .796,  .678}37.6  & \cellcolor[rgb]{ .737,  .839,  .933}58.8  & \cellcolor[rgb]{ .741,  .843,  .937}57.3  & \cellcolor[rgb]{ .753,  .851,  .937}60.3  & \cellcolor[rgb]{ .957,  .976,  .945}35.1  & \cellcolor[rgb]{ .957,  .976,  .945}65.9  \\
       & & BLIP +\textbf{\textit{Ours}} & ViT-B/16 & K400 &  +5  & \cellcolor[rgb]{ .961,  .694,  .518}\textbf{39.6} & \cellcolor[rgb]{ .675,  .8,  .918}\textbf{62.0} & \cellcolor[rgb]{ .663,  .796,  .918}\textbf{60.2} & \cellcolor[rgb]{ .675,  .8,  .918}\textbf{63.8} & \cellcolor[rgb]{ .882,  .937,  .851}\textbf{38.9} & \cellcolor[rgb]{ .859,  .925,  .82}\textbf{70.2} \\
       \cmidrule{3-12}
       
       & & MoCo v3$^\ddagger$~\cite{MoCov3}  & ViT-B/16 & INet & 300  & \cellcolor[rgb]{ .965,  .737,  .58}38.8  & \cellcolor[rgb]{ .639,  .776,  .91}62.6  & \cellcolor[rgb]{ .675,  .8,  .918}60.0  & \cellcolor[rgb]{ .592,  .753,  .898}65.1  & \cellcolor[rgb]{ .792,  .89,  .729}43.6  & \cellcolor[rgb]{ .776,  .878,  .706}73.5  \\
       & & MoCo v3 +\textbf{\textit{Ours}} & ViT-B/16 & K400 &  +5 & \cellcolor[rgb]{ .957,  .686,  .498}\textbf{39.8} & \cellcolor[rgb]{ .62,  .765,  .906}\textbf{62.9} & \cellcolor[rgb]{ .639,  .78,  .91}\textbf{60.6} & \cellcolor[rgb]{ .584,  .745,  .898}\textbf{65.2} & \cellcolor[rgb]{ .725,  .851,  .639}\textbf{45.3} & \cellcolor[rgb]{ .663,  .82,  .557}\textbf{75.0} \\
       \cmidrule{2-12}

       &  \multicolumn{1}{c}{\multirow{6}[1]{*}{\centering \shortstack{Self- \\ Distillation }}} 
       & iBOT$^\ddagger$~\cite{iBOT}  & ViT-B/16 & INet & 400  & \cellcolor[rgb]{ .961,  .694,  .518}39.6  & \cellcolor[rgb]{ .51,  .702,  .878}64.6  & \cellcolor[rgb]{ .494,  .69,  .875}63.0  & \cellcolor[rgb]{ .529,  .714,  .882}66.1  & \cellcolor[rgb]{ .702,  .839,  .604}45.7  & \cellcolor[rgb]{ .643,  .808,  .529}75.3  \\
       & & iBOT +\textbf{\textit{Ours}} & ViT-B/16 & K400 &  +5 & \cellcolor[rgb]{ .949,  .631,  .416}\textbf{40.8} & \cellcolor[rgb]{ .475,  .678,  .867}\textbf{65.1} & \cellcolor[rgb]{ .475,  .678,  .867}\textbf{63.3} & \cellcolor[rgb]{ .475,  .678,  .867}\textbf{66.9} & \cellcolor[rgb]{ .678,  .827,  .573}\textbf{46.1} & \cellcolor[rgb]{ .608,  .792,  .482}\textbf{75.8} \\
       \cmidrule{3-12}
       
       & & DINO$^\ddagger$~\cite{DINO}  & ViT-B/16 & INet & 300  & \cellcolor[rgb]{ .965,  .722,  .557}39.1  & \cellcolor[rgb]{ .6,  .753,  .898}63.2  & \cellcolor[rgb]{ .624,  .769,  .906}60.9  & \cellcolor[rgb]{ .569,  .737,  .89}65.5  & \cellcolor[rgb]{ .776,  .878,  .706}44.4  & \cellcolor[rgb]{ .718,  .847,  .627}74.2  \\
       & & DINO +\textbf{\textit{Ours}} & ViT-B/16 & K400 &  +5 & \cellcolor[rgb]{ .957,  .686,  .498}\textbf{39.8} & \cellcolor[rgb]{ .533,  .714,  .882}\textbf{64.2} & \cellcolor[rgb]{ .537,  .718,  .882}\textbf{62.3} & \cellcolor[rgb]{ .533,  .718,  .882}\textbf{66.0} & \cellcolor[rgb]{ .671,  .824,  .565}\textbf{46.2} & \cellcolor[rgb]{ .647,  .812,  .537}\textbf{75.2} \\
       \cmidrule{3-12}
       
       & & DINO v2~\cite{DINOv2}   & ViT-B/16 & INet & 100   & \cellcolor[rgb]{ .969,  .757,  .616}38.4  & \cellcolor[rgb]{ .604,  .757,  .902}63.1  & \cellcolor[rgb]{ .58,  .741,  .894}61.6  & \cellcolor[rgb]{ .631,  .773,  .91}64.5  & \cellcolor[rgb]{ .647,  .812,  .533}46.6  & \cellcolor[rgb]{ .573,  .773,  .439}76.3  \\
       & & DINO v2 +\textbf{\textit{Ours}} & ViT-B/16 &  K400 &  +5 & \cellcolor[rgb]{ .957,  .678,  .49}\textbf{39.9} & \cellcolor[rgb]{ .569,  .733,  .89}\textbf{63.7} & \cellcolor[rgb]{ .561,  .733,  .89}\textbf{61.9} & \cellcolor[rgb]{ .573,  .737,  .894}\textbf{65.4} & \cellcolor[rgb]{ .604,  .788,  .482}\textbf{47.3} & \cellcolor[rgb]{ .537,  .753,  .392}\textbf{76.8} \\

    \bottomrule
    \end{tabular}%
    }
    
  \label{tab:dense_results}%
  \vspace{-6pt}
\end{table*}%

\begin{figure*}[htbp]
  \centering
     \includegraphics[width=0.238\linewidth]{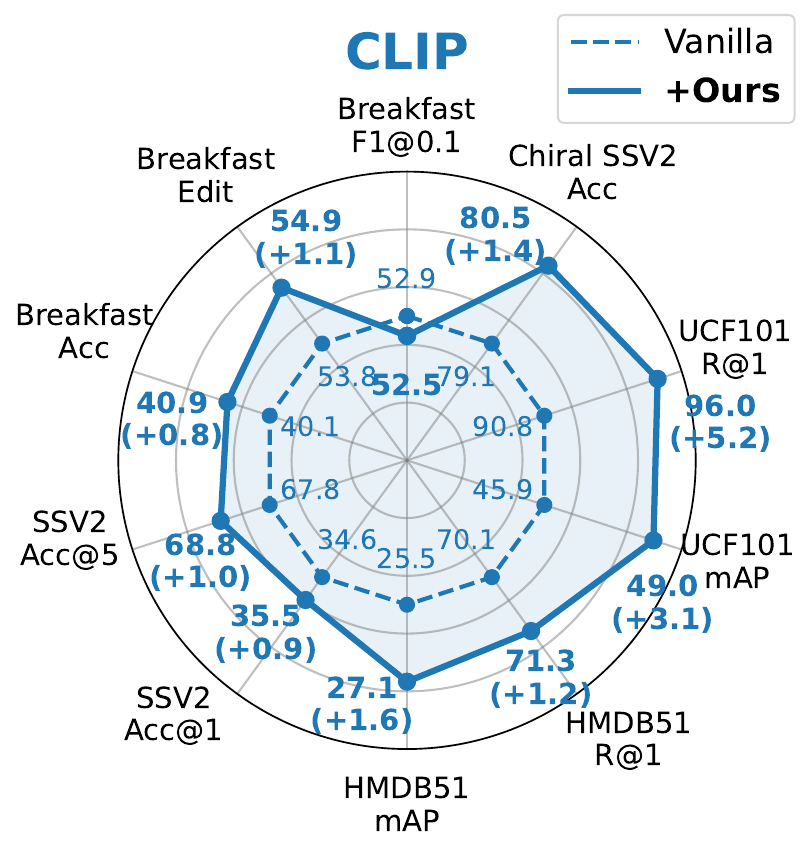}
     \includegraphics[width=0.238\linewidth]{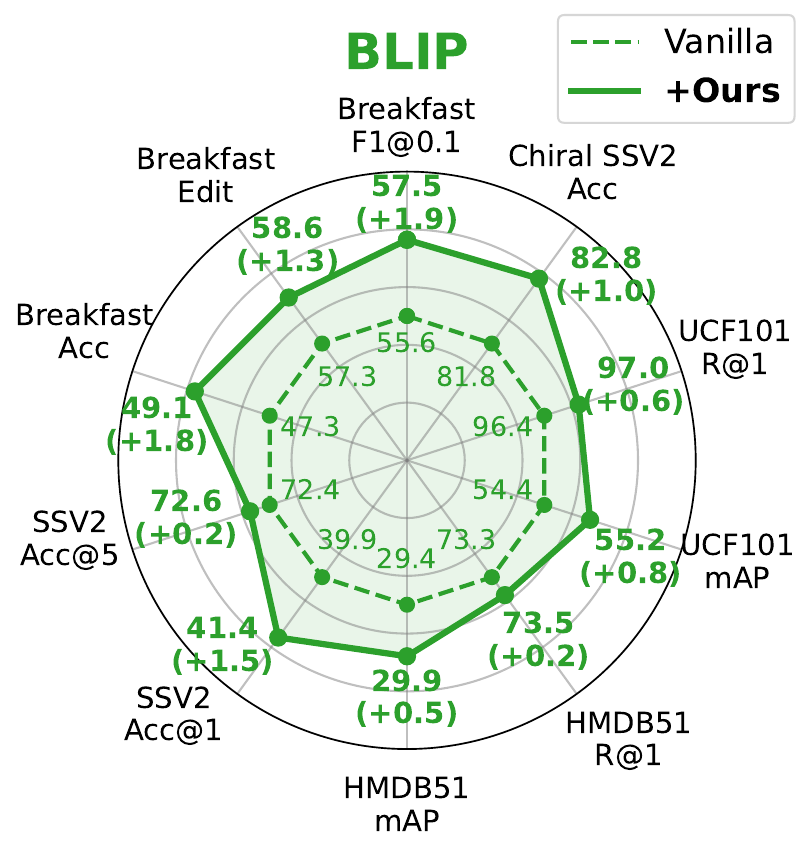}
     \includegraphics[width=0.238\linewidth]{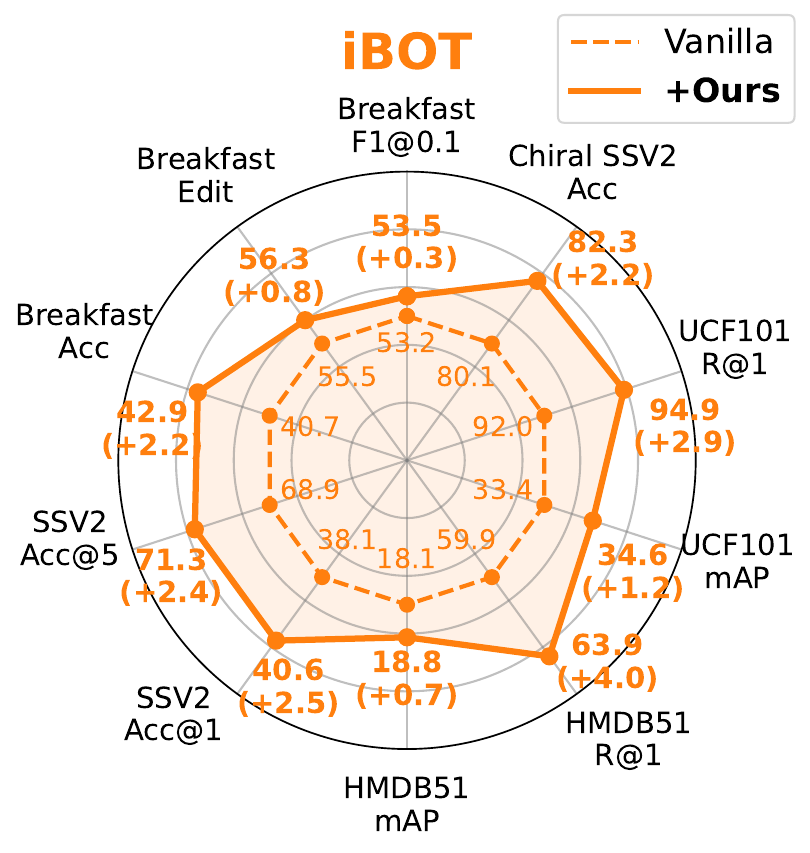}
     \includegraphics[width=0.238\linewidth]{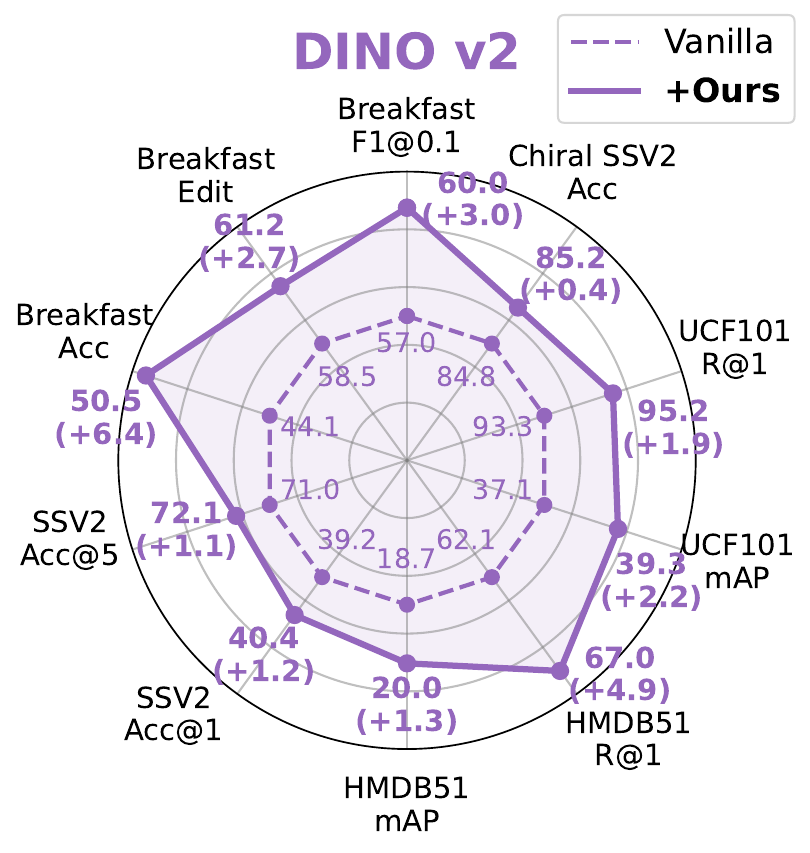}
     \vspace{-6pt}
     \caption{Evaluation results on frame-level and video-level tasks based on four representative image models.}
    \label{fig:frame_video_results}
  \vspace{-14pt}
\end{figure*}

In this section, we first introduce the implementation details in \cref{subsec:implementation_details}. Then, we evaluate the effectiveness and efficiency of our framework on multiple downstream tasks in \cref{subsec:main_results} and justify with distance metrics in \cref{subsec:distance_metrics}.
Finally, we perform ablation studies in \cref{subsec:ablation_study} to assess the impact of key configurations. 
Additional implementation details and results are provided in the \textit{Supplementary Material}.

\subsection{Implementation details}
\label{subsec:implementation_details}

\noindent\textbf{Fundamental image models.}
We evaluate our method across eight pretrained image encoders grouped into three categories: 1) \textit{Masked modeling}: MAE~\cite{MAE}, I-JEPA~\cite{I_JEPA}; 2) \textit{Contrastive learning}: CLIP~\cite{CLIP}, BLIP~\cite{BLIP}, MoCo v3~\cite{MoCov3}; 3) \textit{Self-distillation}: iBOT~\cite{iBOT}, DINO~\cite{DINO}, DINO v2~\cite{DINOv2}.

\noindent\textbf{Optimizing strategies.} 
During training, we only update the projection layer $g$ with feature dimension $d=768$ while the pretrained image encoders are kept frozen. The layer $g$ is trained on the Kinetics-400 dataset for 5 epochs with a total batch size of $512$ on 4 RTX4090 GPUs.
Parameters optimization is performed by AdamW~\cite{AdamW} with a basic learning rate of $1 \times 10^{-4}$ and a cosine decay schedule.

\subsection{Main results on different types of video tasks}
\label{subsec:main_results}

\subsubsection{Evaluation on dense-level benchmarks}
\label{subsubsec:evaluation_dense}
\noindent\textbf{Experiment setup.} 
We evaluate the representations on three dense-level video benchmarks: video object segmentation on DAVIS-2017~\cite{DAVIS17}, human part segmentation on VIP~\cite{VIP}, and pose propagation on JHMDB~\cite{JHMDB}. 
All evaluations are under a zero-shot semi-supervised protocol~\cite{SiamMAE,CropMAE,RSP,tcore}.

\noindent\textbf{Quantitative results.}
The evaluation results on three dense-level benchmarks are reported in \cref{tab:dense_results}, from which we make the following conclusions:
\textbf{1)} Our method consistently improves performance across all eight fundamental image models, demonstrating its effectiveness in self-supervised image-to-video representation transfer. Specifically, the image-pretrained encoders yield average improvements of $1.98\%$ mIoU on VIP, $2.63\%$ $\mathcal{J}\&\mathcal{F}_{\mathrm{m}}$ on DAVIS, and $2.59\%$ PCK@0.1 on JHMDB. 
\textbf{2)} The improvements hold across three categories of image models and, notably, also extend to supervised multimodal models (e.g., CLIP, BLIP), showing broad applicability to ViT-based image encoders under different pretraining paradigms.
\textbf{3)} Under comparable parameter scales, the transferred models match or surpass recent self-supervised video representation models while using simpler modules and lower training overhead, thus offering a lightweight alternative for dense-level video understanding.

\subsubsection{Evaluation on frame-level and video-level tasks}
\label{subsubsec:evaluation_frame_video}

\noindent\textbf{Experiment setup.} 
We evaluate the models on several frame- and video-level tasks: temporal action localization on Breakfast~\cite{Breakfast} with the FACT~\cite{FACT} backbone, zero-shot video retrieval on UCF101 and HMDB51~\cite{UCF101,HMDB51}, action classification on Something-Something-v2 (SSV2)~\cite{ssv2}, and temporal order discrimination on Chiral SSV2~\cite{Chirality}. For SSV2, the models are fine-tuned on the training set for 25 epochs before evaluation on the validation set with single-clip sampling. For Chiral SSV2, we concatenate frame embeddings along the temporal dimension and train a linear probe.

\noindent\textbf{Quantitative results.} 
\cref{fig:frame_video_results} depicts the performance of transferred representations from four representative image models on both frame- and video-level downstream tasks. Our method consistently improves performance across these tasks. For instance, on the frame-level Breakfast task, it achieves an average gain of $2.80\%~Acc$, indicating enhanced temporal awareness in the image models. On video-level tasks, it achieves a $2.58\%~R@1$ improvement on HMDB51, a $1.53\%~Acc@1$ gain on SSV2, and a $1.25\%~Acc$ gain on Chiral SSV2. These results reveal that our method generalizes well across different task granularities, highlighting its potential as a versatile solution for image-to-video transfer.

\begin{table}[t]
  \centering
  \caption{Results of integration into video dense tracking pipeline.}
  \label{tab:dense_tracking}
  \vspace{-8pt}
  \begin{minipage}{0.49\textwidth}
    \centering
    \begin{minipage}{0.31\textwidth}
      \centering
      \setlength{\tabcolsep}{4pt}
      \resizebox{1.0\textwidth}{!}{%
      \begin{tabular}{c|cc}
      \toprule
      \multirow{2}[2]{*}{Features}  & \multicolumn{2}{c}{BADJA} \\
           & $\delta^{seg}$ & $\delta^{3px}$ \\
      \midrule
      DINO v2     & 62.73 & 8.85  \\
      +\textbf{\textit{Ours}}   & \textbf{70.52} & \textbf{8.86} \\
      \midrule
      \midrule
      \multirow{2}[2]{*}{Features} & \multicolumn{2}{c}{TAP-DAVIS}  \\
           & $\delta_{avg}^x$ & OA  \\
      \midrule
      DINO v2    & \textbf{64.68} & 81.98  \\
      +\textbf{\textit{Ours}}    & 64.02 & \textbf{85.40}  \\
      \bottomrule
      \end{tabular}%
      }
    \end{minipage}%
    \hspace{0.02\textwidth}
    \begin{minipage}{0.66\textwidth}
      \includegraphics[width=0.98\linewidth]{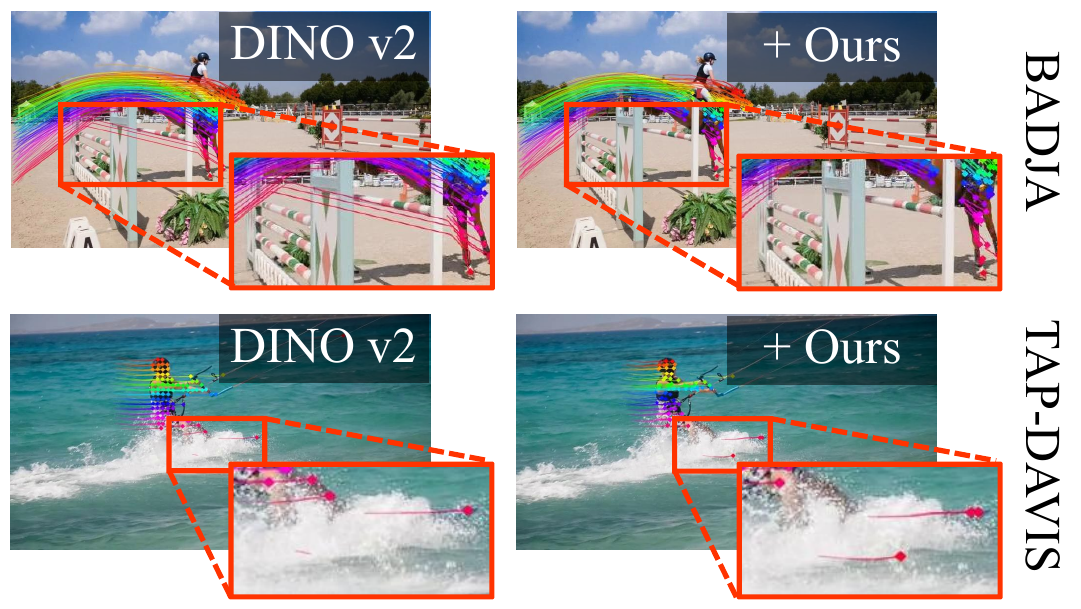}
    \end{minipage}
  \end{minipage}
  \vspace{-6pt}
\end{table}

\begin{table}[t]
  \centering
  \caption{Efficiency and performance comparison with image-to-video transfer methods. 
  $^\dagger$ marks the costs from prior studies.}
  \label{tab:efficiency}
  \vspace{-6pt}

  \setlength{\tabcolsep}{2pt}
  \resizebox{0.96\linewidth}{!}{
  \begin{tabular}{c|ccc|ccc}
  
    \toprule
    \multirow{2}[2]{*}{Method} & \multirow{2}[2]{*}{ \centering \shortstack{Tunable \\ Params $(\downarrow)$}} & \multirow{2}[2]{*}{\centering \shortstack{Peak GPU \\  Mem. $(\downarrow)$}} & \multirow{2}[2]{*}{ \centering \shortstack{ Tuning GPU \\  Hours $(\downarrow)$ }} & \multicolumn{1}{c}{VIP} & \multicolumn{1}{c}{DAVIS17} &  \multicolumn{1}{c}{JHMDB} \\
      &       &       &       & $\mathcal{J}\&\mathcal{F}_{\mathrm{m}}$ & mIoU  & PCK@0.1 \\
    \midrule
    MAE  & /     & /      & / & 29.3  & 52.4 &  41.6   \\
    Full Fine-tuning &    111.66 M   &   13.9 GB     & 20.1$\times$RTX4090       & 29.8  & 53.9  & 42.6   \\
    Partial Fine-tuning &   7.09 M    &    9.2 GB    &    15.6$\times$RTX4090  &  29.2  & 52.7  & 41.3  \\  
    I2V Adapter &  14.22 M   &   20.4 GB    & 21.2$\times$RTX4090      &  29.5  & 53.0 &  41.5   \\
    \textbf{\textit{Ours}} &   0.59 M    &   4.8 GB      & 1.2$\times$RTX4090    & \textbf{33.8}  & \textbf{59.6}  &  \textbf{48.4}   \\
    \midrule
    \midrule
    CLIP  & /     & /       & / & 38.1 & 54.9 &  36.9  \\
    AIM$^\dagger$~\cite{adapter3} &     11.00 M  &    8.7 GB    &   120$\times$V100    & 34.2  & 51.6 &  35.8  \\
    ST-Adapter$^\dagger$~\cite{adapter2} &   7.42 M     &   6.9 GB   &   23$\times$V100   & 36.5 &   54.4  & 37.5\\
    ZeroI2V$^\dagger$~\cite{adapter4} &   14.00 M    &  7.6 GB   &   100$\times$V100   &  37.2  & 54.8  & 37.2 \\
    \textbf{\textit{Ours}} &   0.59 M    &   5.2 GB  & 1.2$\times$RTX4090  & \textbf{39.2} & \textbf{58.3} &  \textbf{40.6}  \\
    \bottomrule
    
  \end{tabular}
  }
  
  \vspace{-14pt}
\end{table}

\subsubsection{Integration into existing video pipeline}
\label{subsubsec:evaluation_complex}
\noindent\textbf{Experiment setup.} 
We further explore the integration of the transferred representations into existing video analysis pipelines. Specifically, for the complex task of dense point tracking, we replace the original DINOv2 features in the DINO-Tracker~\cite{track_4} framework with our transferred DINOv2 representations and evaluate the performance on the BADJA~\cite{BADJA} and TAP-DAVIS~\cite{tap_davis} tracking benchmarks.

\noindent\textbf{Evaluation results.} 
As shown in \cref{tab:dense_tracking}, our method achieves overall improvements on both datasets. The tracking trajectories suggest that our representations provide enhanced spatiotemporal coherence and better occlusion handling. These findings provide preliminary evidence of its potential for integration into existing video processing frameworks for its application in more real-world scenarios.


\begin{table}[t]
  \centering
  \caption{Validation results on distance-based trade-off metrics.}
  \vspace{-6pt}
  \setlength{\tabcolsep}{8pt}
  \resizebox{0.43\textwidth}{!}{%
    \begin{tabular}{cc|ccccc}
    \toprule
    Type & {Method} & {$D_{inter}$} & {$D_{intra}$} & {$D (\uparrow)$} &{\centering \shortstack{Cyc. Acc. $(\uparrow)$}} \\
    \midrule
     \multicolumn{1}{c}{\multirow{3}[1]{*}{\centering \shortstack{Video \\ Pretrained}}}  & SiamMAE & \cellcolor[rgb]{ .906,  .875,  .98}0.5067  & \cellcolor[rgb]{ .945,  .925,  .988}0.1330  & \cellcolor[rgb]{ .886,  .843,  .976}0.4668  & \cellcolor[rgb]{ 1,  .847,  .847}0.4100  \\
    & CropMAE  & \cellcolor[rgb]{ .902,  .867,  .98}0.5216  & \cellcolor[rgb]{ .89,  .851,  .976}0.1736  & \cellcolor[rgb]{ .886,  .843,  .976}0.4695  & \cellcolor[rgb]{ 1,  .702,  .702}0.6522  \\
    &  RSP &  \cellcolor[rgb]{ .914,  .886,  .98}0.4662  & \cellcolor[rgb]{ .839,  .776,  .965}0.2130  & \cellcolor[rgb]{ .922,  .898,  .984}0.4023  & \cellcolor[rgb]{ 1,  .737,  .737}0.5990  \\
    \midrule\midrule
    \multicolumn{1}{c}{\multirow{16}[1]{*}{\centering \shortstack{Image \\ Pretrained \\ +\textbf{\textit{Ours}}}}}  & MAE & \cellcolor[rgb]{ .969,  .961,  .992}0.3122  & \cellcolor[rgb]{ .969,  .961,  .992}0.1131  & \cellcolor[rgb]{ .929,  .914,  .984}0.2783  & \cellcolor[rgb]{ 1,  .937,  .937}0.1366  \\
     & MAE +\textbf{\textit{Ours}} &  
     \cellcolor[rgb]{ .878,  .831,  .973}0.5073  & \cellcolor[rgb]{ .91,  .882,  .984}0.1834  & \cellcolor[rgb]{ .906,  .871,  .98}\textbf{0.4523}  & \cellcolor[rgb]{ 1,  .686,  .686}\textbf{0.7203}  \\
     \cmidrule{2-6}
    & I-JEPA  & \cellcolor[rgb]{ .969,  .961,  .992}0.2572  & \cellcolor[rgb]{ .933,  .91,  .984}0.1425  & \cellcolor[rgb]{ .969,  .961,  .992}0.2145  & \cellcolor[rgb]{ 1,  .937,  .937}0.1192  \\
    & I-JEPA +\textbf{\textit{Ours}}  & \cellcolor[rgb]{ .851,  .792,  .969}0.5904  & \cellcolor[rgb]{ .922,  .898,  .984}0.1745  & \cellcolor[rgb]{ .827,  .757,  .961}\textbf{0.5380}  & \cellcolor[rgb]{ 1,  .741,  .741}\textbf{0.5906}  \\
    \cmidrule{2-6}
    & CLIP  & \cellcolor[rgb]{ .894,  .855,  .976}0.5603  & \cellcolor[rgb]{ .831,  .765,  .961}0.2186  & \cellcolor[rgb]{ .871,  .82,  .973}0.4947  & \cellcolor[rgb]{ 1,  .867,  .867}0.3002  \\
    & CLIP +\textbf{\textit{Ours}}    & \cellcolor[rgb]{ .843,  .78,  .965}0.6162  & \cellcolor[rgb]{ .776,  .69,  .949}0.2626  & \cellcolor[rgb]{ .827,  .761,  .961}\textbf{0.5374}  & \cellcolor[rgb]{ 1,  .82,  .82}\textbf{0.4608}  \\
    \cmidrule{2-6}
    & BLIP   &  \cellcolor[rgb]{ .886,  .843,  .976}0.5858  & \cellcolor[rgb]{ .91,  .875,  .98}0.1598  & \cellcolor[rgb]{ .827,  .761,  .961}\textbf{0.5378}  & \cellcolor[rgb]{ 1,  .937,  .937}0.2245  \\
    & BLIP +\textbf{\textit{Ours}}   &  \cellcolor[rgb]{ .843,  .78,  .965}0.6102  & \cellcolor[rgb]{ .804,  .725,  .957}0.2457  & \cellcolor[rgb]{ .827,  .761,  .961}0.5365  & \cellcolor[rgb]{ 1,  .824,  .824}\textbf{0.4527}  \\
    \cmidrule{2-6}
    & MoCo v3   & \cellcolor[rgb]{ .894,  .855,  .976}0.5547  & \cellcolor[rgb]{ .835,  .773,  .965}0.2164  & \cellcolor[rgb]{ .875,  .824,  .973}0.4898  & \cellcolor[rgb]{ 1,  .867,  .867}0.3770  \\
    & MoCo v3  +\textbf{\textit{Ours}}  &  \cellcolor[rgb]{ .867,  .812,  .973}0.5503  & \cellcolor[rgb]{ .878,  .831,  .973}0.1909  & \cellcolor[rgb]{ .878,  .831,  .973}\textbf{0.4930}  & \cellcolor[rgb]{ 1,  .737,  .737}\textbf{0.5981}  \\
    \cmidrule{2-6}
    & iBOT   & \cellcolor[rgb]{ .878,  .831,  .973}0.6143  & \cellcolor[rgb]{ .875,  .827,  .973}0.1862  & \cellcolor[rgb]{ .831,  .765,  .961}0.5584  & \cellcolor[rgb]{ 1,  .827,  .827}0.4488  \\
    & iBOT +\textbf{\textit{Ours}}  & \cellcolor[rgb]{ .831,  .765,  .961}0.6399  & \cellcolor[rgb]{ .855,  .796,  .969}0.2092  & \cellcolor[rgb]{ .78,  .694,  .953}\textbf{0.5772}  & \cellcolor[rgb]{ 1,  .753,  .753}\textbf{0.5731}  \\
    \cmidrule{2-6}
    & DINO  & \cellcolor[rgb]{ .89,  .847,  .976}0.5756  & \cellcolor[rgb]{ .839,  .776,  .965}0.2144  & \cellcolor[rgb]{ .859,  .808,  .969}0.5112  & \cellcolor[rgb]{ 1,  .82,  .82}0.4590  \\
    & DINO +\textbf{\textit{Ours}}   & \cellcolor[rgb]{ .839,  .773,  .965}0.6246  & \cellcolor[rgb]{ .824,  .753,  .961}0.2316  & \cellcolor[rgb]{ .808,  .729,  .957}\textbf{0.5551} & \cellcolor[rgb]{ 1,  .741,  .741}\textbf{0.5927}  \\
    \cmidrule{2-6}
    & DINO v2   &  \cellcolor[rgb]{ .886,  .843,  .976}0.5926  & \cellcolor[rgb]{ .882,  .835,  .973}0.1808  & \cellcolor[rgb]{ .843,  .784,  .965}0.5384  & \cellcolor[rgb]{ 1,  .839,  .839}0.4276  \\
    & DINO v2 +\textbf{\textit{Ours}}  & \cellcolor[rgb]{ .835,  .769,  .965}0.6373  & \cellcolor[rgb]{ .871,  .82,  .973}0.1976  & \cellcolor[rgb]{ .776,  .69,  .949}\textbf{0.5780}  & \cellcolor[rgb]{ 1,  .773,  .773}\textbf{0.5383}  \\
    \bottomrule
    \end{tabular}%
    }
  \label{tab:distance_metrics}%
  \vspace{-12pt}
\end{table}%

\begin{table*}[t]
  \centering
  \caption{Ablation results on the components, structures, and training configurations of our method. The best and secondary best results are highlighted in \textbf{bold} and \underline{underlined}. Default settings are marked with \colorbox[rgb]{ .867,  .922,  .969}{blue}.}
  \vspace{-16pt}
  \begin{tabular}{cc}
  
    \begin{subtable}[t]{0.47\textwidth}
      \centering
      \caption{Ablation on components of $\mathcal{L}_{cyc}$, $\mathcal{L}_{reg}$ and PEA.}
      \setlength{\tabcolsep}{8pt}
      \resizebox{\textwidth}{!}{%
        \begin{tabular}{cccc|ccc}
        \toprule
        \multirow{2}{*}{\centering \shortstack{Base \\ Model}} & \multirow{2}{*}{$\mathcal{L}_{cyc}$}  & \multirow{2}{*}{$\mathcal{L}_{reg}$}  & \multirow{2}{*}{PEA} & \multicolumn{1}{c}{VIP} & \multicolumn{1}{c}{DAVIS17} &  \multicolumn{1}{c}{JHMDB} \\
         &   &   &  &   mIoU & $\mathcal{J}\&\mathcal{F}_{\mathrm{m}}$ & PCK@0.1 \\
        \midrule
        \multirow{4}{*}{MAE}    & \ding{51}  &   &    &  16.2    &   26.2   &     38.5   \\  
                   & \ding{51}    & \ding{51}  &    &   23.1    &  42.3   &    42.4 \\
                & \ding{51}    &    & \ding{51}     & \underline{33.3}    & \underline{59.3}     &    \underline{48.1} \\
              &  \cellcolor[rgb]{ .867,  .922,  .969}\ding{51}    & \cellcolor[rgb]{ .867,  .922,  .969}\ding{51}    & \cellcolor[rgb]{ .867,  .922,  .969}\ding{51}    & \cellcolor[rgb]{ .867,  .922,  .969}\textbf{33.8}   &    \cellcolor[rgb]{ .867,  .922,  .969}\textbf{59.6}   &   \cellcolor[rgb]{ .867,  .922,  .969}\textbf{48.4}   \\
        \midrule
        \multirow{4}{*}{DINO}    & \ding{51}  &  &    &  17.5  &  30.6  & 39.3   \\
               & \ding{51}    & \ding{51}  &        &  33.6    &  58.9   &   \textbf{46.4} \\
                 & \ding{51}   &  & \ding{51}  &    \underline{38.0}    &   \underline{61.8}   &    46.1  \\
              & \cellcolor[rgb]{ .867,  .922,  .969}\ding{51}    & \cellcolor[rgb]{ .867,  .922,  .969}\ding{51}    &  \cellcolor[rgb]{ .867,  .922,  .969}\ding{51}    &  \cellcolor[rgb]{ .867,  .922,  .969}\textbf{39.8}    &  \cellcolor[rgb]{ .867,  .922,  .969}\textbf{64.2}    &    \cellcolor[rgb]{ .867,  .922,  .969}\underline{46.2} \\
        \bottomrule
        \end{tabular}

      }
      \label{tab:ablation_components}
    \end{subtable}

    \begin{subtable}[t]{0.48\textwidth}
      \centering
      \caption{Ablation on the projection layer structures.}
      \setlength{\tabcolsep}{10pt}
      \resizebox{\textwidth}{!}{%
        \begin{tabular}{cc|ccc}
        \toprule
        \multirow{2}{*}{\centering \shortstack{Base \\ Model}} & \multirow{2}{*}{\centering \shortstack{Projection \\ Structure}} & \multicolumn{1}{c}{VIP} & \multicolumn{1}{c}{DAVIS17} &  JHMDB \\
              &     &  mIoU & $\mathcal{J}\&\mathcal{F}_{\mathrm{m}}$ & PCK@0.1 \\
        \midrule
        \multirow{4}{*}{MAE} & Vanilla &  29.3    & 52.4    &    41.6  \\
              & \cellcolor[rgb]{ .867,  .922,  .969}Linear-based layer &   
              \cellcolor[rgb]{ .867,  .922,  .969}\textbf{33.8}  & \cellcolor[rgb]{ .867,  .922,  .969}\textbf{59.6}  &    \cellcolor[rgb]{ .867,  .922,  .969}\textbf{47.9} \\
              & MLP (2 layers) &  \underline{33.2}    &   \underline{59.2}   &   \underline{47.5}  \\
              & MLP (3 layers) & 32.6   &  58.4    &     47.4  \\
        \midrule
        \multirow{4}{*}{DINO} & Vanilla &  39.1    &   63.2   &    44.4  \\
              & \cellcolor[rgb]{ .867,  .922,  .969}Linear-based layer &    \cellcolor[rgb]{ .867,  .922,  .969}\underline{39.8}   &  \cellcolor[rgb]{ .867,  .922,  .969}\textbf{64.2}    &  \cellcolor[rgb]{ .867,  .922,  .969}\textbf{46.2} \\
              & MLP (2 layers) &  39.7    &  \underline{64.1}   &     \underline{45.9} \\
              & MLP (3 layers) &  \textbf{39.9}    &  63.8    & 45.7  \\
        \bottomrule
        \end{tabular}
      }
      \label{tab:ablation_projection}
    \end{subtable}

    \\

    \begin{subtable}[t]{0.305\textwidth}
      \centering
      \vspace{-4pt}
      \caption{Ablation on training dataset.}
      \setlength{\tabcolsep}{4pt}
      \resizebox{\textwidth}{!}{%
        \begin{tabular}{cc|ccc}
        \toprule
        \multirow{2}{*}{\centering \shortstack{Base \\ Model}} & \multirow{2}{*}{\centering \shortstack{Training \\ Dataset}} & \multicolumn{1}{c}{VIP}  & \multicolumn{1}{c}{DAVIS17} &  \multicolumn{1}{c}{JHMDB} \\
         &  & mIoU & $\mathcal{J}\&\mathcal{F}_{\mathrm{m}}$ &  PCK@0.1 \\
        \midrule
        \multirow{2}{*}{MAE} &  \cellcolor[rgb]{ .867,  .922,  .969}K400  &  \cellcolor[rgb]{ .867,  .922,  .969}33.8    & \cellcolor[rgb]{ .867,  .922,  .969}59.6    &     \cellcolor[rgb]{ .867,  .922,  .969}48.4  \\  
        &  SSV2  &  33.6    &  60.4   &    48.2 \\
        \midrule
        \multirow{2}{*}{BLIP}  &  \cellcolor[rgb]{ .867,  .922,  .969}K400  & \cellcolor[rgb]{ .867,  .922,  .969}39.6   &   \cellcolor[rgb]{ .867,  .922,  .969}62.0   &   \cellcolor[rgb]{ .867,  .922,  .969}38.9  \\  
        &  SSV2  &  38.2   & 60.0    &    37.7  \\
        \midrule
        \multirow{2}{*}{DINO}  &  \cellcolor[rgb]{ .867,  .922,  .969}K400  &   \cellcolor[rgb]{ .867,  .922,  .969}39.8   &    \cellcolor[rgb]{ .867,  .922,  .969}64.2   & \cellcolor[rgb]{ .867,  .922,  .969}46.2  \\  
        &  SSV2  &   39.4   & 63.8    &   45.8  \\
        \bottomrule
        \end{tabular}
        }
      \label{tab:ablation_training_dataset}
    \end{subtable}

    \begin{subtable}[t]{0.32\textwidth}
      \centering
      \vspace{-4pt}
      \caption{Ablation on the backbone scales.}
      \setlength{\tabcolsep}{4pt}
            \resizebox{\textwidth}{!}{%
        \begin{tabular}{cc|ccc}
        \toprule
        \multirow{2}{*}{\centering \shortstack{Backbone \\ Model}} & \multirow{2}{*}{\centering \shortstack{Method}}  &  \multicolumn{1}{c}{VIP} & \multicolumn{1}{c}{DAVIS17} & \multicolumn{1}{c}{JHMDB} \\
         &  & mIoU & $\mathcal{J}\&\mathcal{F}_{\mathrm{m}}$ & PCK@0.1 \\
        \midrule
        \multirow{2}{*}{ViT-B/16}  &  MAE  & 29.3   & 52.4    &      41.6   \\  
        &  \cellcolor[rgb]{ .867,  .922,  .969}+\textbf{\textit{Ours}}  &   \cellcolor[rgb]{ .867,  .922,  .969}33.8   &  \cellcolor[rgb]{ .867,  .922,  .969}59.6    &   \cellcolor[rgb]{ .867,  .922,  .969}48.4    \\  
        \midrule
        \multirow{2}{*}{ViT-L/16}  &  MAE   &  29.9   & 55.8   &     44.6\\
        &  +\textbf{\textit{Ours}}  & 33.4    &   59.9    &  48.9    \\  
        \midrule
        \multirow{2}{*}{ViT-H/14}  &  MAE  &  29.5  & 55.8   &    / \\
        &  +\textbf{\textit{Ours}}  & 33.4    &  60.1    &    /  \\  
        \bottomrule
        \end{tabular}
      }
      \label{tab:ablation_backbone_scales}
    \end{subtable}

    \begin{subtable}[t]{0.318\textwidth}
      \centering
      \vspace{-4pt}
      \caption{Ablation on the training epochs.}
      \setlength{\tabcolsep}{4pt}
      \resizebox{\textwidth}{!}{%
        \begin{tabular}{cc|ccc}
        \toprule
        \multirow{2}{*}{\centering \shortstack{Base \\ Model}} & \multirow{2}{*}{\centering \shortstack{Training \\ Epochs}}  & \multicolumn{1}{c}{VIP} &  \multicolumn{1}{c}{DAVIS17} & \multicolumn{1}{c}{JHMDB} \\
         &  & mIoU & $\mathcal{J}\&\mathcal{F}_{\mathrm{m}}$ &  PCK@0.1 \\
        \midrule
        \multirow{3}{*}{MAE}   &  \cellcolor[rgb]{ .867,  .922,  .969}5  &    \cellcolor[rgb]{ .867,  .922,  .969}33.8    &   \cellcolor[rgb]{ .867,  .922,  .969}59.6   &    \cellcolor[rgb]{ .867,  .922,  .969}48.4   \\  
                  &  10  & 33.6   & 59.4  &  48.3 \\
                  &  20  &  33.5   & 59.1  &   48.1 \\
        \midrule
        \multirow{3}{*}{DINO}   &  \cellcolor[rgb]{ .867,  .922,  .969}5  & \cellcolor[rgb]{ .867,  .922,  .969}39.8    &   \cellcolor[rgb]{ .867,  .922,  .969}64.2  &      \cellcolor[rgb]{ .867,  .922,  .969}46.2  \\  
                  &  10  & 39.9   &  64.1   &     46.3 \\
                  &  20  & 40.0   & 64.0  &   46.3  \\
        \bottomrule
        \end{tabular}
      }
      \label{tab:ablation_training_epochs}
    \end{subtable}

  \end{tabular}
  \vspace{-15pt}
\end{table*}

\subsubsection{Efficiency analysis}
\label{subsubsec:efficiency_analysis}

\noindent\textbf{Experiment setup.}
We compare our method with several image-to-video transfer methods, all of which involve post-training or fine-tuning on Kinetics-400. We first compare with three common strategies: full fine-tuning, partial fine-tuning (i.e., updating only the final Transformer block), and the I2V adapter~\cite{adapter2} applied to each block. Then we compare our method with CLIP-based supervised adaptation methods.

\noindent\textbf{Evaluation results.} 
As shown in \cref{tab:efficiency}, the baseline methods yield suboptimal performance, primarily due to degraded discriminability caused by the limited semantic diversity of the video dataset.
In contrast, our method delivers superior results with a $\sim13\times$ speed-up while updating only $0.59~M$ parameters. 
Additionally, our method outperforms CLIP-based adaptive baselines with lower training cost. These baselines emphasize global semantic separability but lack explicit temporal correspondence for dense video understanding.
Overall, on dense-level tasks, our method delivers improved performance over common image-to-video transfer baselines while maintaining high computational efficiency. These outcomes also indicate that the performance gain chiefly arises from a more favorable trade-off between intra-video temporal consistency and inter-video semantic separability instead of image-to-video domain adaptation.

\subsection{Distance-based trade-off metrics validation}
\label{subsec:distance_metrics}

\noindent\textbf{Experiment setup.} 
To provide an interpretable assessment of the method’s effectiveness, we validate the distance-based metrics proposed in \cref{sec:theory}. Specifically, we randomly sample 1000 videos from the Kinetics-400 validation set and measure four metrics for the original image-pretrained models and our transferred models. The metrics includes: \textbf{1)} Inter-video distance $D_{inter}={D_{inter}^{ori}}/{2R_{inter}}$ (normalized by its diameter); \textbf{2)} Intra-video distance $D_{intra}={D_{intra}^{ori}}/{2R_{intra}}$ (normalized by its diameter); \textbf{3)} Distance margin $D=D_{inter}-\gamma D_{intra}$ where the scale factor $\gamma={\mathbb{E}_{\mathcal{M}}\left[{D_{intra}^{ori}}\right]}/{\mathbb{E}_{\mathcal{M}}\left[{D_{inter}^{ori}}\right]}$ is the average ratio of the original intra-/inter-video distance for each model $\mathcal{M}$; \textbf{4)} Cycle consistency accuracy ($Cyc. Acc.$), defined as the proportion of patches that return to their original positions under a palindrome sequence constructed from two frames.

\noindent\textbf{Evaluation results.} 
As shown in \cref{tab:distance_metrics}, our method generally increases the margin $D$ by widening the gap between inter- and intra-video distances. This yields a better trade-off between intra-video temporal consistency and inter-video semantic separability, supporting the conclusion of \cref{thm:thm2_main}. In addition, our method significantly improves cycle consistency accuracy to a level comparable to video-pretrained models, indicating the effectiveness of introducing dense temporal correspondence into image-pretrained models.

\subsection{Ablation Study}
\label{subsec:ablation_study}

In this part, we conduct ablation studies to assess the contribution of each component in our method.
All settings are kept consistent across variants, except for the ablated factors.

\noindent\textbf{Analysis of components.}
We first examine the effect of cycle consistency loss, regularization loss, and PEA strategy. 
As shown in \cref{tab:ablation_components}, applying cycle consistency loss without the PEA strategy leads to obvious performance degradations, as the model exploits positional shortcuts (\textit{line 1-2}).  
With the PEA strategy enabled (\textit{line 3}), the model learns meaningful temporal correspondences, showing that cycle consistency becomes effective only when shortcuts are suppressed.  
Additionally, incorporating the regularization term (\textit{line 4}) yields further improvements, highlighting the importance of preserving semantic separability during transfer learning.

\noindent\textbf{Effect of projection structure.}
We compare different designs of the projection structure, including a linear-based layer and MLPs with 2 or 3 layers in \cref{tab:ablation_projection}.
Empirically, the linear-based layer achieves equal or superior performance relative to the deeper MLPs, likely because more complex projections are prone to perturbing the semantic structure of image-pretrained representations.
These results indicate that a simple linear projection is often sufficient, which is consistent with the theoretical analysis in \cref{thm:thm1_main}.

\noindent\textbf{Generalization on training configurations.}
We evaluate the generalization ability of our method under different training configurations as shown in \cref{tab:ablation_training_dataset,tab:ablation_backbone_scales,tab:ablation_training_epochs}.
In addition to K400, training on the SSV2 dataset yields consistent improvements, demonstrating the robustness of the proposed method on datasets with stronger temporal dynamics.
Scaling up the backbone from ViT-B to ViT-L and ViT-H still enhances downstream performance, indicating its adaptability with larger models.
Moreover, our model converges within 5 epochs, which is selected as the default setting.

\section{Conclusion}
\label{sec:conclusion}

This work explores self-supervised image-to-video transfer learning for an effective trade-off between intra-video temporal consistency and inter-video semantic separability. 
We propose \methodname{} framework to project the representation space via a lightweight layer and provide a theoretical analysis.
Experimental results with eight image models present the effectiveness of \methodname{} across multiple video tasks.

\section*{Acknowledgments}
This work was supported in part by National Natural Science Foundation of China: 62525212, U23B2051, 62236008, 62441232, 62521007 and U21B2038, in part by Youth Innovation Promotion Association CAS, in part by the Strategic Priority Research Program of the Chinese Academy of Sciences, Grant No. XDB0680201, in part by the project ZR2025ZD01 supported by Shandong Provincial Natural Science Foundation, in part by the China National Postdoctoral Program for Innovative Talents under Grant BX20250377, and in part by the Beijing Major Science and Technology Project under Contract No. Z251100008125059. This work was supported by Beijing Academy of Artificial Intelligence (BAAI).

{
    \small
    \bibliographystyle{ieeenat_fullname}
    \bibliography{main}

@String(PAMI = {IEEE Transactions on Pattern Analysis and Machine Intelligence})

@String(CVPR = {IEEE/CVF Conference on Computer Vision and Pattern Recognition})

@String(ICCV = {International Conference on Computer Vision})

@String(ECCV = {European Conference on Computer Vision})

@String(NIPS = {Advances in Neural Information Processing Systems})

@String(ICML = {International Conference on Machine Learning})

@String(ACMMM = {ACM International Conference on Multimedia})

@String(ACCV = {Asian Conference on Computer Vision})

@String(ICLR = {International Conference on Learning Representations})

@inproceedings{volume_1,
  title={Motion segmentation and tracking using normalized cuts},
  author={Shi, Jianbo and Malik, Jitendra},
  booktitle=ICCV,
  pages={1154--1160},
  year={1998},
  organization={IEEE}
}

@inproceedings{volume_2,
  title={Event-based analysis of video},
  author={Zelnik-Manor, Lihi and Irani, Michal},
  booktitle=CVPR,
  volume={2},
  pages={II--II},
  year={2001},
  organization={IEEE}
}

@inproceedings{MoCov1,
  title={Momentum contrast for unsupervised visual representation learning},
  author={He, Kaiming and Fan, Haoqi and Wu, Yuxin and Xie, Saining and Girshick, Ross},
  booktitle=CVPR,
  pages={9729--9738},
  year={2020}
}

@inproceedings{MoCov3,
  title={An empirical study of training self-supervised vision transformers},
  author={Chen, Xinlei and Xie, Saining and He, Kaiming},
  booktitle=ICCV,
  pages={9640--9649},
  year={2021}
}

@inproceedings{SimCLRv1,
  title={A simple framework for contrastive learning of visual representations},
  author={Chen, Ting and Kornblith, Simon and Norouzi, Mohammad and Hinton, Geoffrey},
  booktitle=ICML,
  pages={1597--1607},
  year={2020},
  organization={PMLR}
}

@article{BYOL,
  title={Bootstrap your own latent-a new approach to self-supervised learning},
  author={Grill, Jean-Bastien and Strub, Florian and Altch{\'e}, Florent and Tallec, Corentin and Richemond, Pierre and Buchatskaya, Elena and Doersch, Carl and Avila Pires, Bernardo and Guo, Zhaohan and Gheshlaghi Azar, Mohammad and others},
  journal=NIPS,
  volume={33},
  pages={21271--21284},
  year={2020}
}

@article{SwAV,
  title={Unsupervised learning of visual features by contrasting cluster assignments},
  author={Caron, Mathilde and Misra, Ishan and Mairal, Julien and Goyal, Priya and Bojanowski, Piotr and Joulin, Armand},
  journal=NIPS,
  volume={33},
  pages={9912--9924},
  year={2020}
}

@inproceedings{SimSiam,
  title={Exploring simple siamese representation learning},
  author={Chen, Xinlei and He, Kaiming},
  booktitle=CVPR,
  pages={15750--15758},
  year={2021}
}

@inproceedings{huangtowards,
  title={Towards the Generalization of Contrastive Self-Supervised Learning},
  author={Huang, Weiran and Yi, Mingyang and Zhao, Xuyang and Jiang, Zihao},
  booktitle=ICLR,
  year={2023}
}

@inproceedings{SlowFast,
  title={A large-scale study on unsupervised spatiotemporal representation learning},
  author={Feichtenhofer, Christoph and Fan, Haoqi and Xiong, Bo and Girshick, Ross and He, Kaiming},
  booktitle=CVPR,
  pages={3299--3309},
  year={2021}
}

@inproceedings{SimMIM,
  title={Simmim: A simple framework for masked image modeling},
  author={Xie, Zhenda and Zhang, Zheng and Cao, Yue and Lin, Yutong and Bao, Jianmin and Yao, Zhuliang and Dai, Qi and Hu, Han},
  booktitle=CVPR,
  pages={9653--9663},
  year={2022}
}

@inproceedings{I_JEPA,
  title={Self-supervised learning from images with a joint-embedding predictive architecture},
  author={Assran, Mahmoud and Duval, Quentin and Misra, Ishan and Bojanowski, Piotr and Vincent, Pascal and Rabbat, Michael and LeCun, Yann and Ballas, Nicolas},
  booktitle=CVPR,
  pages={15619--15629},
  year={2023}
}

@inproceedings{StoP,
  title={Stochastic positional embeddings improve masked image modeling},
  author={Bar, Amir and Bordes, Florian and Shocher, Assaf and Assran, Mido and Vincent, Pascal and Ballas, Nicolas and Darrell, Trevor and Globerson, Amir and LeCun, Yann},
  booktitle=ICML,
  year={2024}
}

@article{BEiTv1,
  title={Beit: Bert pre-training of image transformers},
  author={Bao, Hangbo and Dong, Li and Piao, Songhao and Wei, Furu},
  booktitle=ICLR,
  year={2022}
}

@article{BEiTv2,
  title={Beit v2: Masked image modeling with vector-quantized visual tokenizers},
  author={Peng, Zhiliang and Dong, Li and Bao, Hangbo and Ye, Qixiang and Wei, Furu},
  journal={arXiv preprint arXiv:2208.06366},
  year={2022}
}

@inproceedings{TimeSformer,
  title={Is space-time attention all you need for video understanding?},
  author={Bertasius, Gedas and Wang, Heng and Torresani, Lorenzo},
  booktitle=ICML,
  year={2021}
}

@inproceedings{ViViT,
  title={Vivit: A video vision transformer},
  author={Arnab, Anurag and Dehghani, Mostafa and Heigold, Georg and Sun, Chen and Lu{\v{c}}i{\'c}, Mario and Schmid, Cordelia},
  booktitle=ICCV,
  pages={6836--6846},
  year={2021}
}

@article{XViT,
  title={Space-time mixing attention for video transformer},
  author={Bulat, Adrian and Perez Rua, Juan Manuel and Sudhakaran, Swathikiran and Martinez, Brais and Tzimiropoulos, Georgios},
  journal=NIPS,
  volume={34},
  pages={19594--19607},
  year={2021}
}

@article{MAE_ST,
  title={Masked autoencoders as spatiotemporal learners},
  author={Feichtenhofer, Christoph and Li, Yanghao and He, Kaiming and others},
  journal=NIPS,
  volume={35},
  pages={35946--35958},
  year={2022}
}

@article{VideoMAE,
  title={Videomae: Masked autoencoders are data-efficient learners for self-supervised video pre-training},
  author={Tong, Zhan and Song, Yibing and Wang, Jue and Wang, Limin},
  journal=NIPS,
  volume={35},
  pages={10078--10093},
  year={2022}
}

@inproceedings{VideoMAEv2,
  title={Videomae v2: Scaling video masked autoencoders with dual masking},
  author={Wang, Limin and Huang, Bingkun and Zhao, Zhiyu and Tong, Zhan and He, Yinan and Wang, Yi and Wang, Yali and Qiao, Yu},
  booktitle=CVPR,
  pages={14549--14560},
  year={2023}
}

@inproceedings{MaskViT,
  title={MaskViT: Masked Visual Pre-Training for Video Prediction},
  author={Gupta, Agrim and Tian, Stephen and Zhang, Yunzhi and Wu, Jiajun and Mart{\'\i}n-Mart{\'\i}n, Roberto and Fei-Fei, Li},
  booktitle=ICLR,
  year={2023}
}

@inproceedings{DropMAE,
  title={Dropmae: Masked autoencoders with spatial-attention dropout for tracking tasks},
  author={Wu, Qiangqiang and Yang, Tianyu and Liu, Ziquan and Wu, Baoyuan and Shan, Ying and Chan, Antoni B},
  booktitle=CVPR,
  pages={14561--14571},
  year={2023}
}

@article{STP,
  title={Spatiotemporal Predictive Pre-training for Robotic Motor Control},
  author={Yang, Jiange and Liu, Bei and Fu, Jianlong and Pan, Bocheng and Wu, Gangshan and Wang, Limin},
  journal={arXiv preprint arXiv:2403.05304},
  year={2024}
}

@inproceedings{SiameseIM,
  title={Siamese image modeling for self-supervised vision representation learning},
  author={Tao, Chenxin and Zhu, Xizhou and Su, Weijie and Huang, Gao and Li, Bin and Zhou, Jie and Qiao, Yu and Wang, Xiaogang and Dai, Jifeng},
  booktitle=CVPR,
  pages={2132--2141},
  year={2023}
}

@inproceedings{VideoMAC,
  title={VideoMAC: Video Masked Autoencoders Meet ConvNets},
  author={Pei, Gensheng and Chen, Tao and Jiang, Xiruo and Liu, Huafeng and Sun, Zeren and Yao, Yazhou},
  booktitle=CVPR,
  pages={22733--22743},
  year={2024}
}

@inproceedings{centering,
  title={Weighted Ensemble Self-Supervised Learning},
  author={Ruan, Yangjun and Singh, Saurabh and Morningstar, Warren Richard and Alemi, Alexander A and Ioffe, Sergey and Fischer, Ian and Dillon, Joshua V},
  booktitle=ICLR,
  year={2023}
}

@inproceedings{KoLeo,
  title={Spreading vectors for similarity search},
  author={Sablayrolles, Alexandre and Douze, Matthijs and Schmid, Cordelia and J{\'e}gou, Herv{\'e}},
  booktitle=ICLR,
  year={2019}
}

@article{adapting_resolution,
  title={Fixing the train-test resolution discrepancy},
  author={Touvron, Hugo and Vedaldi, Andrea and Douze, Matthijs and J{\'e}gou, Herv{\'e}},
  journal=NIPS,
  volume={32},
  year={2019}
}

@article{contrast1,
  title={Self-supervised co-training for video representation learning},
  author={Han, Tengda and Xie, Weidi and Zisserman, Andrew},
  journal=NIPS,
  volume={33},
  pages={5679--5690},
  year={2020}
}

@article{contrast2,
  title={Compressed video contrastive learning},
  author={Huo, Yuqi and Ding, Mingyu and Lu, Haoyu and Fei, Nanyi and Lu, Zhiwu and Wen, Ji-Rong and Luo, Ping},
  journal=NIPS,
  volume={34},
  pages={14176--14187},
  year={2021}
}

@inproceedings{contrast3,
  title={Self-supervised learning of compressed video representations},
  author={Yu, Youngjae and Lee, Sangho and Kim, Gunhee and Song, Yale},
  booktitle=ICLR,
  year={2020}
}

@inproceedings{forgetting1,
  title={FROSTER: Frozen CLIP is A Strong Teacher for Open-Vocabulary Action Recognition},
  author={Huang, Xiaohu and Zhou, Hao and Yao, Kun and Han, Kai},
  booktitle=ICLR,
  year={2024},
}

@inproceedings{forgetting2,
  title={Clipood: Generalizing clip to out-of-distributions},
  author={Shu, Yang and Guo, Xingzhuo and Wu, Jialong and Wang, Ximei and Wang, Jianmin and Long, Mingsheng},
  booktitle=ICML,
  pages={31716--31731},
  year={2023},
  organization={PMLR}
}

@inproceedings{forgetting3,
  title={Fine-tuned clip models are efficient video learners},
  author={Rasheed, Hanoona and Khattak, Muhammad Uzair and Maaz, Muhammad and Khan, Salman and Khan, Fahad Shahbaz},
  booktitle=CVPR,
  pages={6545--6554},
  year={2023}
}

@inproceedings{collapse1,
  title={Rankme: Assessing the downstream performance of pretrained self-supervised representations by their rank},
  author={Garrido, Quentin and Balestriero, Randall and Najman, Laurent and Lecun, Yann},
  booktitle=ICML,
  pages={10929--10974},
  year={2023},
  organization={PMLR}
}

@article{collapse2,
  title={Towards a unified theoretical understanding of non-contrastive learning via rank differential mechanism},
  author={Zhuo, Zhijian and Wang, Yifei and Ma, Jinwen and Wang, Yisen},
  journal=ICLR,
  year={2023}
}

@inproceedings{collapse3,
  title={On the importance of asymmetry for siamese representation learning},
  author={Wang, Xiao and Fan, Haoqi and Tian, Yuandong and Kihara, Daisuke and Chen, Xinlei},
  booktitle=CVPR,
  pages={16570--16579},
  year={2022}
}

@article{cluster,
  title={The hidden uniform cluster prior in self-supervised learning},
  author={Assran, Mahmoud and Balestriero, Randall and Duval, Quentin and Bordes, Florian and Misra, Ishan and Bojanowski, Piotr and Vincent, Pascal and Rabbat, Michael and Ballas, Nicolas},
  journal=ICLR,
  year={2023}
}

@inproceedings{two_stage_1,
  title={Spatial-then-temporal self-supervised learning for video correspondence},
  author={Li, Rui and Liu, Dong},
  booktitle=CVPR,
  pages={2279--2288},
  year={2023}
}

@inproceedings{two_stage_2,
  title={Semantic-aware fine-grained correspondence},
  author={Hu, Yingdong and Wang, Renhao and Zhang, Kaifeng and Gao, Yang},
  booktitle=ECCV,
  pages={97--115},
  year={2022},
  organization={Springer}
}

@inproceedings{two_stage_3,
  title={Learning fine-grained features for pixel-wise video correspondences},
  author={Li, Rui and Zhou, Shenglong and Liu, Dong},
  booktitle=ICCV,
  pages={9632--9641},
  year={2023}
}

@inproceedings{two_stage_4,
  title={Bootstap: Bootstrapped training for tracking-any-point},
  author={Doersch, Carl and Luc, Pauline and Yang, Yi and Gokay, Dilara and Koppula, Skanda and Gupta, Ankush and Heyward, Joseph and Rocco, Ignacio and Goroshin, Ross and Carreira, Jo{\~a}o and others},
  booktitle=ACCV,
  pages={3257--3274},
  year={2024}
}

@article{adapter1,
  title={Adaptformer: Adapting vision transformers for scalable visual recognition},
  author={Chen, Shoufa and Ge, Chongjian and Tong, Zhan and Wang, Jiangliu and Song, Yibing and Wang, Jue and Luo, Ping},
  journal=NIPS,
  volume={35},
  pages={16664--16678},
  year={2022}
}

@article{adapter2,
  title={St-adapter: Parameter-efficient image-to-video transfer learning},
  author={Pan, Junting and Lin, Ziyi and Zhu, Xiatian and Shao, Jing and Li, Hongsheng},
  journal=NIPS,
  volume={35},
  pages={26462--26477},
  year={2022}
}

@article{adapter3,
  title={Aim: Adapting image models for efficient video action recognition},
  author={Yang, Taojiannan and Zhu, Yi and Xie, Yusheng and Zhang, Aston and Chen, Chen and Li, Mu},
  journal=ICLR,
  year={2023}
}

@inproceedings{adapter4,
  title={Zeroi2v: Zero-cost adaptation of pre-trained transformers from image to video},
  author={Li, Xinhao and Zhu, Yuhan and Wang, Limin},
  booktitle=ECCV,
  pages={425--443},
  year={2024},
  organization={Springer}
}

@inproceedings{adapter5,
  title={Frozen clip models are efficient video learners},
  author={Lin, Ziyi and Geng, Shijie and Zhang, Renrui and Gao, Peng and De Melo, Gerard and Wang, Xiaogang and Dai, Jifeng and Qiao, Yu and Li, Hongsheng},
  booktitle=ECCV,
  pages={388--404},
  year={2022},
  organization={Springer}
}

@inproceedings{dual1,
  title={Dual-path adaptation from image to video transformers},
  author={Park, Jungin and Lee, Jiyoung and Sohn, Kwanghoon},
  booktitle=CVPR,
  pages={2203--2213},
  year={2023}
}

@inproceedings{dual2,
  title={Disentangling spatial and temporal learning for efficient image-to-video transfer learning},
  author={Qing, Zhiwu and Zhang, Shiwei and Huang, Ziyuan and Zhang, Yingya and Gao, Changxin and Zhao, Deli and Sang, Nong},
  booktitle=ICCV,
  pages={13934--13944},
  year={2023}
}

@article{slot_attenion,
  title={Object-centric learning with slot attention},
  author={Locatello, Francesco and Weissenborn, Dirk and Unterthiner, Thomas and Mahendran, Aravindh and Heigold, Georg and Uszkoreit, Jakob and Dosovitskiy, Alexey and Kipf, Thomas},
  journal=NIPS,
  volume={33},
  pages={11525--11538},
  year={2020}
}

@inproceedings{ssv2,
  title={The" something something" video database for learning and evaluating visual common sense},
  author={Goyal, Raghav and Ebrahimi Kahou, Samira and Michalski, Vincent and Materzynska, Joanna and Westphal, Susanne and Kim, Heuna and Haenel, Valentin and Fruend, Ingo and Yianilos, Peter and Mueller-Freitag, Moritz and others},
  booktitle=ICCV,
  pages={5842--5850},
  year={2017}
}

@article{smooth,
  title={Some informational aspects of visual perception.},
  author={Attneave, Fred},
  journal={Psychological review},
  volume={61},
  number={3},
  pages={183},
  year={1954},
  publisher={American Psychological Association}
}

@article{continuity,
  title={Natural image statistics and neural representation},
  author={Simoncelli, Eero P and Olshausen, Bruno A},
  journal={Annual review of neuroscience},
  volume={24},
  number={1},
  pages={1193--1216},
  year={2001},
  publisher={Annual Reviews 4139 El Camino Way, PO Box 10139, Palo Alto, CA 94303-0139, USA}
}

@inproceedings{cycle_patch_1,
  title={Learning correspondence from the cycle-consistency of time},
  author={Wang, Xiaolong and Jabri, Allan and Efros, Alexei A},
  booktitle=CVPR,
  pages={2566--2576},
  year={2019}
}

@article{cycle_patch_2,
  title={Joint-task self-supervised learning for temporal correspondence},
  author={Li, Xueting and Liu, Sifei and De Mello, Shalini and Wang, Xiaolong and Kautz, Jan and Yang, Ming-Hsuan},
  journal=NIPS,
  volume={32},
  year={2019}
}

@inproceedings{cycle_patch_3,
  title={Boosting video object segmentation via space-time correspondence learning},
  author={Zhang, Yurong and Li, Liulei and Wang, Wenguan and Xie, Rong and Song, Li and Zhang, Wenjun},
  booktitle=CVPR,
  pages={2246--2256},
  year={2023}
}

@inproceedings{cycle_patch_4,
  title={Self-Supervised Cross-View Correspondence with Predictive Cycle Consistency},
  author={Baade, Alan and Chen, Changan},
  booktitle=CVPR,
  pages={16753--16763},
  year={2025}
}

@inproceedings{cycle_match_1,
  title={Representation learning via global temporal alignment and cycle-consistency},
  author={Hadji, Isma and Derpanis, Konstantinos G and Jepson, Allan D},
  booktitle=CVPR,
  pages={11068--11077},
  year={2021}
}

@inproceedings{cycle_match_2,
  title={Temporal cycle-consistency learning},
  author={Dwibedi, Debidatta and Aytar, Yusuf and Tompson, Jonathan and Sermanet, Pierre and Zisserman, Andrew},
  booktitle=CVPR,
  pages={1801--1810},
  year={2019}
}

@inproceedings{cycle_match_3,
  title={Contrastive learning of image representations with cross-video cycle-consistency},
  author={Wu, Haiping and Wang, Xiaolong},
  booktitle=ICCV,
  pages={10149--10159},
  year={2021}
}

@article{cycle_walk_1,
  title={Space-time correspondence as a contrastive random walk},
  author={Jabri, Allan and Owens, Andrew and Efros, Alexei},
  journal=NIPS,
  volume={33},
  pages={19545--19560},
  year={2020}
}

@inproceedings{cycle_walk_2,
  title={Learning pixel trajectories with multiscale contrastive random walks},
  author={Bian, Zhangxing and Jabri, Allan and Efros, Alexei A and Owens, Andrew},
  booktitle=CVPR,
  pages={6508--6519},
  year={2022}
}

@inproceedings{cycle_walk_3,
  title={Self-supervised Any-Point Tracking by Contrastive Random Walks},
  author={Shrivastava, Ayush and Owens, Andrew},
  booktitle=ECCV,
  pages={267--284},
  year={2024},
  organization={Springer}
}

@article{shortcut_2,
  title={Breaking shortcut: Exploring fully convolutional cycle-consistency for video correspondence learning},
  author={Tang, Yansong and Jiang, Zhenyu and Xie, Zhenda and Cao, Yue and Zhang, Zheng and Torr, Philip HS and Hu, Han},
  journal={arXiv preprint arXiv:2105.05838},
  year={2021}
}

@inproceedings{track_4,
  title={Dino-tracker: Taming dino for self-supervised point tracking in a single video},
  author={Tumanyan, Narek and Singer, Assaf and Bagon, Shai and Dekel, Tali},
  booktitle=ECCV,
  year={2024}
}

@inproceedings{compression_2,
  title={Deep learning and the information bottleneck principle},
  author={Tishby, Naftali and Zaslavsky, Noga},
  booktitle={2015 ieee information theory workshop (itw)},
  pages={1--5},
  year={2015},
  organization={IEEE}
}

@inproceedings{compression_3,
  title={How does information bottleneck help deep learning?},
  author={Kawaguchi, Kenji and Deng, Zhun and Ji, Xu and Huang, Jiaoyang},
  booktitle=ICML,
  pages={16049--16096},
  year={2023},
  organization={PMLR}
}

@inproceedings{CLIP,
  title={Learning transferable visual models from natural language supervision},
  author={Radford, Alec and Kim, Jong Wook and Hallacy, Chris and Ramesh, Aditya and Goh, Gabriel and Agarwal, Sandhini and Sastry, Girish and Askell, Amanda and Mishkin, Pamela and Clark, Jack and others},
  booktitle=ICML,
  pages={8748--8763},
  year={2021},
  organization={PMLR}
}

@inproceedings{BLIP,
  title={Blip: Bootstrapping language-image pre-training for unified vision-language understanding and generation},
  author={Li, Junnan and Li, Dongxu and Xiong, Caiming and Hoi, Steven},
  booktitle=ICML,
  pages={12888--12900},
  year={2022},
  organization={PMLR}
}

@article{LAION,
  title={Laion-400m: Open dataset of clip-filtered 400 million image-text pairs},
  author={Schuhmann, Christoph and Vencu, Richard and Beaumont, Romain and Kaczmarczyk, Robert and Mullis, Clayton and Katta, Aarush and Coombes, Theo and Jitsev, Jenia and Komatsuzaki, Aran},
  journal={arXiv preprint arXiv:2111.02114},
  year={2021}
}

@inproceedings{DINO,
  title={Emerging properties in self-supervised vision transformers},
  author={Caron, Mathilde and Touvron, Hugo and Misra, Ishan and J{\'e}gou, Herv{\'e} and Mairal, Julien and Bojanowski, Piotr and Joulin, Armand},
  booktitle=ICCV,
  pages={9650--9660},
  year={2021}
}

@inproceedings{MAE,
  title={Masked autoencoders are scalable vision learners},
  author={He, Kaiming and Chen, Xinlei and Xie, Saining and Li, Yanghao and Doll{\'a}r, Piotr and Girshick, Ross},
  booktitle=CVPR,
  pages={16000--16009},
  year={2022}
}

@article{SiamMAE,
  title={Siamese masked autoencoders},
  author={Gupta, Agrim and Wu, Jiajun and Deng, Jia and Li, Fei-Fei},
  journal=NIPS,
  volume={36},
  pages={40676--40693},
  year={2023}
}

@InProceedings{RSP,
  title={Visual Representation Learning with Stochastic Frame Prediction},
  author={Jang, Huiwon and Kim, Dongyoung and Kim, Junsu and Shin, Jinwoo and Abbeel, Pieter and Seo, Younggyo},
  booktitle=ICML,
  volume={235},
  pages={21289--21305},
  year={2024}
}

@inproceedings{CropMAE,
  title={Efficient Image Pre-Training with Siamese Cropped Masked Autoencoders},
  author={Eyma{\"e}l, Alexandre and Vandeghen, Renaud and Cioppa, Anthony and Giancola, Silvio and Ghanem, Bernard and Van Droogenbroeck, Marc},
  booktitle=ECCV,
  year={2024}
}

@article{iBOT,
  title={ibot: Image bert pre-training with online tokenizer},
  author={Zhou, Jinghao and Wei, Chen and Wang, Huiyu and Shen, Wei and Xie, Cihang and Yuille, Alan and Kong, Tao},
  journal=ICLR,
  year={2022}
}

@article{DINOv2,
  title={Dinov2: Learning robust visual features without supervision},
  author={Oquab, Maxime and Darcet, Timoth{\'e}e and Moutakanni, Th{\'e}o and Vo, Huy and Szafraniec, Marc and Khalidov, Vasil and Fernandez, Pierre and Haziza, Daniel and Massa, Francisco and El-Nouby, Alaaeldin and others},
  journal={Transactions on Machine Learning Research},
  year={2024},
}

@article{DINOv3,
  title={Dinov3},
  author={Sim{\'e}oni, Oriane and Vo, Huy V and Seitzer, Maximilian and Baldassarre, Federico and Oquab, Maxime and Jose, Cijo and Khalidov, Vasil and Szafraniec, Marc and Yi, Seungeun and Ramamonjisoa, Micha{\"e}l and others},
  journal={arXiv preprint arXiv:2508.10104},
  year={2025}
}

@inproceedings{tcore,
  title={When the future becomes the past: Taming temporal correspondence for self-supervised video representation learning},
  author={Liu, Yang and Xu, Qianqian and Wen, Peisong and Dai, Siran and Huang, Qingming},
  booktitle=CVPR,
  pages={24033--24044},
  year={2025}
}

@inproceedings{Vision_Transformer,
  title={An Image is Worth 16x16 Words: Transformers for Image Recognition at Scale},
  author={Dosovitskiy, Alexey and Beyer, Lucas and Kolesnikov, Alexander and Weissenborn, Dirk and Zhai, Xiaohua and Unterthiner, Thomas and Dehghani, Mostafa and Minderer, Matthias and Heigold, Georg and Gelly, Sylvain and others},
  booktitle=ICLR,
  year={2021}
}

@article{UCF101,
  title={UCF101: A dataset of 101 human actions classes from videos in the wild},
  author={Soomro, K},
  journal={arXiv preprint arXiv:1212.0402},
  year={2012}
}

@inproceedings{HMDB51,
  title={HMDB: a large video database for human motion recognition},
  author={Kuehne, Hildegard and Jhuang, Hueihan and Garrote, Est{\'\i}baliz and Poggio, Tomaso and Serre, Thomas},
  booktitle=ICCV,
  pages={2556--2563},
  year={2011},
  organization={IEEE}
}

@article{DAVIS17,
  title={The 2017 davis challenge on video object segmentation},
  author={Pont-Tuset, Jordi and Perazzi, Federico and Caelles, Sergi and Arbel{\'a}ez, Pablo and Sorkine-Hornung, Alex and Van Gool, Luc},
  journal={arXiv preprint arXiv:1704.00675},
  year={2017}
}

@inproceedings{JHMDB,
  title={Towards understanding action recognition},
  author={Jhuang, Hueihan and Gall, Juergen and Zuffi, Silvia and Schmid, Cordelia and Black, Michael J},
  booktitle={Proceedings of the IEEE international conference on computer vision},
  pages={3192--3199},
  year={2013}
}

@inproceedings{VIP,
  title={Adaptive temporal encoding network for video instance-level human parsing},
  author={Zhou, Qixian and Liang, Xiaodan and Gong, Ke and Lin, Liang},
  booktitle={Proceedings of the 26th ACM international conference on Multimedia},
  pages={1527--1535},
  year={2018}
}

@inproceedings{FACT,
  title={Fact: Frame-action cross-attention temporal modeling for efficient action segmentation},
  author={Lu, Zijia and Elhamifar, Ehsan},
  booktitle=CVPR,
  pages={18175--18185},
  year={2024}
}

@inproceedings{Breakfast,
  title={The language of actions: Recovering the syntax and semantics of goal-directed human activities},
  author={Kuehne, Hilde and Arslan, Ali and Serre, Thomas},
  booktitle=CVPR,
  pages={780--787},
  year={2014}
}

@article{Kinetics,
  title={The kinetics human action video dataset},
  author={Kay, Will and Carreira, Joao and Simonyan, Karen and Zhang, Brian and Hillier, Chloe and Vijayanarasimhan, Sudheendra and Viola, Fabio and Green, Tim and Back, Trevor and Natsev, Paul and others},
  journal={arXiv preprint arXiv:1705.06950},
  year={2017}
}

@inproceedings{Kinetics_track,
  title={Quo vadis, action recognition? a new model and the kinetics dataset},
  author={Carreira, Joao and Zisserman, Andrew},
  booktitle=CVPR,
  pages={6299--6308},
  year={2017}
}

@inproceedings{BADJA,
  title={Creatures great and smal: Recovering the shape and motion of animals from video},
  author={Biggs, Benjamin and Roddick, Thomas and Fitzgibbon, Andrew and Cipolla, Roberto},
  booktitle=ACCV,
  pages={3--19},
  year={2019},
  organization={Springer}
}

@article{tap_davis,
  title={Tap-vid: A benchmark for tracking any point in a video},
  author={Doersch, Carl and Gupta, Ankush and Markeeva, Larisa and Recasens, Adria and Smaira, Lucas and Aytar, Yusuf and Carreira, Joao and Zisserman, Andrew and Yang, Yi},
  journal={Advances in Neural Information Processing Systems},
  volume={35},
  pages={13610--13626},
  year={2022}
}

@inproceedings{Imagenet,
  title={Imagenet: A large-scale hierarchical image database},
  author={Deng, Jia and Dong, Wei and Socher, Richard and Li, Li-Jia and Li, Kai and Fei-Fei, Li},
  booktitle=CVPR,
  pages={248--255},
  year={2009},
  organization={Ieee}
}

@inproceedings{AdamW,
  title={Decoupled Weight Decay Regularization},
  author={Loshchilov, Ilya and Hutter, Frank},
  booktitle=ICLR,
  year={2019}
}

@article{bottleneck1,
  title={The information bottleneck method},
  author={Tishby, Naftali and Pereira, Fernando C and Bialek, William},
  journal={arXiv preprint physics/0004057},
  year={2000}
}

@inproceedings{bottleneck2,
  title={Deep learning and the information bottleneck principle},
  author={Tishby, Naftali and Zaslavsky, Noga},
  booktitle={2015 ieee information theory workshop (itw)},
  pages={1--5},
  year={2015},
  organization={Ieee}
}

@misc{vae1,
  title={Auto-encoding variational bayes},
  author={Kingma, Diederik P and Welling, Max and others},
  year={2013},
  publisher={Banff, Canada}
}

@inproceedings{vae2,
  title={Regularized autoencoders for isometric representation learning},
  author={Lee, Yonghyeon and Yoon, Sangwoong and Son, MinJun and Park, Frank C},
  booktitle=ICLR,
  year={2022}
}

@inproceedings{vae3,
  title={Learning flat latent manifolds with VAEs},
  author={Chen, Nutan and Klushyn, Alexej and Ferroni, Francesco and Bayer, Justin and Van Der Smagt, Patrick},
  booktitle=ICML,
  pages={1587--1596},
  year={2020}
}

@article{vae4,
  title={Isometric quotient variational auto-encoders for structure-preserving representation learning},
  author={Huh, In and Choe, Jae Myung and KIM, YOUNGGU and Kim, Daesin and others},
  journal=NIPS,
  volume={36},
  pages={39075--39087},
  year={2023}
}

@article{sam2,
  title={Sam 2: Segment anything in images and videos},
  author={Ravi, Nikhila and Gabeur, Valentin and Hu, Yuan-Ting and Hu, Ronghang and Ryali, Chaitanya and Ma, Tengyu and Khedr, Haitham and R{\"a}dle, Roman and Rolland, Chloe and Gustafson, Laura and others},
  journal={arXiv preprint arXiv:2408.00714},
  year={2024}
}

@article{Chirality,
  title={Chirality in action: Time-aware video representation learning by latent straightening},
  author={Bagad, Piyush and Zisserman, Andrew},
  journal={arXiv preprint arXiv:2509.08502},
  year={2025}
}

@inproceedings{dense_adapter,
  title={Rethinking image-to-video adaptation: An object-centric perspective},
  author={Qian, Rui and Ding, Shuangrui and Lin, Dahua},
  booktitle=ECCV,
  pages={329--348},
  year={2024},
  organization={Springer}
}

@inproceedings{clip_adapter,
  title={CLIP's Visual Embedding Projector is a Few-shot Cornucopia},
  author={Fahes, Mohammad and Vu, Tuan-Hung and Bursuc, Andrei and P{\'e}rez, Patrick and De Charette, Raoul},
  booktitle=WACV,
  pages={3254--3264},
  year={2026}
}

@article{wen2025semantic,
  title={Semantic Concentration for Self-Supervised Dense Representations Learning},
  author={Wen, Peisong and Xu, Qianqian and Dai, Siran and Cong, Runmin and Huang, Qingming},
  journal=PAMI,
  year={2025},
  publisher={IEEE}
}

@article{dai2025exploring,
  title={Exploring Structural Degradation in Dense Representations for Self-supervised Learning},
  author={Dai, Siran and Xu, Qianqian and Wen, Peisong and Liu, Yang and Huang, Qingming},
  journal={arXiv preprint arXiv:2510.17299},
  year={2025}
}

@article{dai2025exploring2,
  title={Exploring non-contrastive self-supervised representation learning for image-based profiling},
  author={Dai, Siran and Xu, Qianqian and Wen, Peisong and Liu, Yang and Huang, Qingming},
  journal={arXiv e-prints},
  pages={arXiv--2506},
  year={2025}
}

@inproceedings{s2vs,
  title={Self-supervised video similarity learning},
  author={Kordopatis-Zilos, Giorgos and Tolias, Giorgos and Tzelepis, Christos and Kompatsiaris, Ioannis and Patras, Ioannis and Papadopoulos, Symeon},
  booktitle=CVPR,
  pages={4756--4766},
  year={2023}
}

@inproceedings{liu2024not,
  title={Not All Pairs are Equal: Hierarchical Learning for Average-Precision-Oriented Video Retrieval},
  author={Liu, Yang and Xu, Qianqian and Wen, Peisong and Dai, Siran and Huang, Qingming},
  booktitle=ACMMM,
  pages={3828--3837},
  year={2024}
}

@article{wang2025unified,
  title={A unified perspective for loss-oriented imbalanced learning via localization},
  author={Wang, Zitai and Xu, Qianqian and Yang, Zhiyong and Xu, Zhikang and Zhang, Linchao and Cao, Xiaochun and Huang, Qingming},
  journal=PAMI,
  year={2025},
  publisher={IEEE}
}

@article{wang2022openauc,
  title={Openauc: Towards auc-oriented open-set recognition},
  author={Wang, Zitai and Xu, Qianqian and Yang, Zhiyong and He, Yuan and Cao, Xiaochun and Huang, Qingming},
  journal=NIPS,
  volume={35},
  pages={25033--25045},
  year={2022}
}

@article{han2024aucseg,
  title={Aucseg: Auc-oriented pixel-level long-tail semantic segmentation},
  author={Han, Boyu and Xu, Qianqian and Yang, Zhiyong and Bao, Shilong and Wen, Peisong and Jiang, Yangbangyan and Huang, Qingming},
  journal=NIPS,
  volume={37},
  pages={126863--126907},
  year={2024}
}

@article{li2024size,
  title={Size-invariance matters: Rethinking metrics and losses for imbalanced multi-object salient object detection},
  author={Li, Feiran and Xu, Qianqian and Bao, Shilong and Yang, Zhiyong and Cong, Runmin and Cao, Xiaochun and Huang, Qingming},
  journal={arXiv preprint arXiv:2405.09782},
  year={2024}
}

@article{pytorch,
  title={Pytorch: An imperative style, high-performance deep learning library},
  author={Paszke, Adam and Gross, Sam and Massa, Francisco and Lerer, Adam and Bradbury, James and Chanan, Gregory and Killeen, Trevor and Lin, Zeming and Gimelshein, Natalia and Antiga, Luca and others},
  journal=NIPS,
  volume={32},
  year={2019}
}

@inproceedings{he2015delving,
  title={Delving deep into rectifiers: Surpassing human-level performance on imagenet classification},
  author={He, Kaiming and Zhang, Xiangyu and Ren, Shaoqing and Sun, Jian},
  booktitle=ICCV,
  pages={1026--1034},
  year={2015}
}

@inproceedings{glorot2010understanding,
  title={Understanding the difficulty of training deep feedforward neural networks},
  author={Glorot, Xavier and Bengio, Yoshua},
  booktitle={Proceedings of the thirteenth international conference on artificial intelligence and statistics},
  pages={249--256},
  year={2010},
  organization={JMLR Workshop and Conference Proceedings}
}

@article{RoPE,
  title={Roformer: Enhanced transformer with rotary position embedding},
  author={Su, Jianlin and Ahmed, Murtadha and Lu, Yu and Pan, Shengfeng and Bo, Wen and Liu, Yunfeng},
  journal={Neurocomputing},
  volume={568},
  pages={127063},
  year={2024},
  publisher={Elsevier}
}

@article{STCN,
  title={Rethinking space-time networks with improved memory coverage for efficient video object segmentation},
  author={Cheng, Ho Kei and Tai, Yu-Wing and Tang, Chi-Keung},
  journal=NIPS,
  volume={34},
  pages={11781--11794},
  year={2021}
}

@article{SwinB,
  title={Associating objects with transformers for video object segmentation},
  author={Yang, Zongxin and Wei, Yunchao and Yang, Yi},
  journal=NIPS,
  volume={34},
  pages={2491--2502},
  year={2021}
}

@inproceedings{SimVOS,
  title={Scalable video object segmentation with simplified framework},
  author={Wu, Qiangqiang and Yang, Tianyu and Wu, Wei and Chan, Antoni B},
  booktitle=ICCV,
  pages={13879--13889},
  year={2023}
}

@inproceedings{Cutie,
  title={Putting the object back into video object segmentation},
  author={Cheng, Ho Kei and Oh, Seoung Wug and Price, Brian and Lee, Joon-Young and Schwing, Alexander},
  booktitle=CVPR,
  pages={3151--3161},
  year={2024}
}

@inproceedings{thoker2023tubelet,
  title={Tubelet-contrastive self-supervision for video-efficient generalization},
  author={Thoker, Fida Mohammad and Doughty, Hazel and Snoek, Cees GM},
  booktitle=ICCV,
  pages={13812--13823},
  year={2023}
}
}

\maketitlesupplementary

\appendix

\section*{{\Large{Appendix Contents}}}
\startcontents[appendices]
\printcontents[appendices]{l}{1}{\setcounter{tocdepth}{3}}
\newpage

\section{Symbol Definitions}
\label{sec:symbol}

We summarize the key notations used in the Method and Theoretical Analysis sections in \cref{tab:symbols_method} and \cref{tab:symbols_theory}.

\begin{table}[htbp]
\vspace{6pt}
  \centering
  \caption{A summary of key notations and descriptions used in the Method Section.}
  \vspace{-6pt}
  \resizebox{0.48\textwidth}{!}{%
    \begin{tabular}{ll}
    \toprule
    Notations & Descriptions \\
    \midrule
    $\mathcal{D}$ & Training dataset. \\
    $H/W/C$ & The height/width/channel dimension of a frame. \\
    $T$ & The number of frames in a video. \\
    $\bm{V}$ & A video containing $T$ frames. \\
    $\bm{v}_{t}$ & The frame at the moment $t$ in a video, $\bm{v}_{t} \in \mathbb{R}^{H \times W \times C}$. \\
    $\bm{v}_{t}(i)$ & A frame patch of $\bm{v}_{t}$, $\bm{v}_{t} \in \mathbb{R}^{H \times W \times C}$. \\

    $\delta$ & The temporal offset between two frames, $\delta \in (0,1)$. \\
    $p$ & The size of a frame patch. \\
    $N_H$ & The patch number on the height dimension, $N_H=H/p$. \\
    $N_W$ & The patch number on the width dimension, $N_W=W/p$. \\
    $N$ & The patch number of a frame, $N=N_H \times N_W$. \\
    $d$ & The embedding dimension of a frame patch. \\
    $f(\cdot)$ & The image-pretrained encoder, $f: \mathbb{R}^{H \times W \times C} \rightarrow \mathbb{R}^{N \times d}$. \\
    $g(\cdot)$ & The projection layer, $g: \mathbb{R}^{N \times d} \rightarrow \mathbb{R}^{N \times d}$. \\

    $\alpha$ & The amplitude of the positional encoding interpolation. \\
    $\mathbf{E}_{\text{pos}}$ & The positional encoding of $f$. \\
    $\widetilde{\mathbf{E}}_{\text{pos}}$ & The augmented version of $\mathbf{E}_{\text{pos}}$. \\
    $\bm{z}_t$ & The original representation of $\bm{v}_t$, where $\bm{z}_t = f(\bm{v}_t)$. \\
    $\bm{p}_t$ & The projected representation of $\bm{z}_t$, where $\bm{p}_t = g(\bm{z}_t)$. \\
    $\Atatb$ & The transition matrix between representations $\bm{p}_{t1}$ and $\bm{p}_{t2}$. \\
    $\lambda$ & The strength of the constraint term. \\
    \bottomrule
    \end{tabular}%
    }
  \label{tab:symbols_method}%
\end{table}%

\begin{table}[htbp]
\vspace{4pt}
  \centering
  \caption{A summary of key notations and descriptions used in the Theoretical Analysis Section.}
  \vspace{-6pt}
  \resizebox{0.48\textwidth}{!}{%
    \begin{tabular}{ll}
    \toprule
    Notations & Descriptions \\
    \midrule
     $d$ & The embedding dimension of a frame patch. \\
     $\lambda$ & The strength of the constraint term. \\
     $f(\cdot)$ & The image-pretrained encoder, $f: \mathbb{R}^{H \times W \times C} \rightarrow \mathbb{R}^{N \times d}$. \\
     $g(\cdot)$ & The projection layer, $g: \mathbb{R}^{N \times d} \rightarrow \mathbb{R}^{N \times d}$. \\
     $\tzi$ & The latent representation of an input patch. \\
     $\tbarzi$ & The mean representation of video $\bm{V}_i$, $\tbarzi = \mathbb{E}_{\bm{z} \in f(\bm{V}_i)} \left[\bm{z}\right]$ . \\
     $\bm{p}_i$ & The projected representation of $\tzi$. \\
     $\tW$ & The projection weight of the linear layer. \\
     $\tWla/\tWlb$ & The projection weight of the two-layer MLP. \\
     $\phi(\cdot)$ & The \texttt{tanh} activation function. \\
     $\tJ_g(\cdot)$ & The Jacobian matrix of $g$.  \\
     $\tSigma$ & The intra-video covariance matrix between two patches. \\
     $\tbarSigma$ & The inter-video covariance matrix between two videos.\\
     $\tU$ & The orthogonal basis for spectral decomposition. \\
     $\tLambdaW/\tLambdaWla/\tLambdaWlb$ & The eigenvalue matrices of $\tW/\tWla/\tWlb$.  \\
     $\tLambdaSigma/\tLambdabarSigma$ & The eigenvalue matrices of $\tSigma/\tbarSigma$.  \\
     $\mu_{i}/\mu_{1,i}/\mu_{2,i}$ & The eigenvalues of $\tW/\tWla/\tWlb$. \\
     $\sigma_{i}/\tau_{i}$ & The eigenvalues of $\tSigma/\tbarSigma$.  \\
     $D_{intra}$ & The intra-video distance between two patches. \\
     $D_{inter}$ & The inter-video distance between two videos. \\
     $\gamma$ & The scale factor between $D_{intra}$ and $D_{inter}$. \\
     $D$ & The margin of inter-/intra-video distances.  \\
     $\Delta$ & The improvement of $D(\tza,\tzb)$.  \\  
    \bottomrule
    \end{tabular}%
    }
  \label{tab:symbols_theory}%
\end{table}%

\section{Formal Theorems and Proofs}
\label{sec:formal_theory}

In this section, we provide detailed proofs for the theoretical analysis of how the proposed method achieves our target, \ie, improving the \textbf{intra-video temporal consistency} without largely affecting the \textbf{inter-video semantic separability}.
Generally, our analysis leads to two main conclusions: 
\textbf{a)} Within our proposed method, both linear-based and MLP projection rebalance different dimensions of the representation space in a similar mechanism (\cref{thm:thm1}).
\textbf{b)} This rebalance yields a better trade-off between the two properties under appropriate conditions (\cref{thm:thm2}).

Formally, given the original representation of a patch $\tzi\in\mathbb R^d$, we aim to learn a projection $g$ that maps $\tzi$ to $\bm{p}_i = g(\tzi) \in \mathbb{R}^{d}$.
Since directly analyzing the original objectives $\mathcal{L}_{\text{cyc}}$ and $\mathcal{L}_{\text{reg}}$ is challenging, we introduce simplified yet equivalent surrogates to facilitate the analysis.

\textbf{Objective 1 (Temporal Cycle Consistency).} 
This term encourages alignment between temporally corresponding patches. We quantify it with the metric in \cref{eq:M_cyc}.
Note that minimizing $M_{\text{cyc}}$ is equivalent to minimizing the cycle-consistency loss $L_{\text{cyc}}$, since both decrease as temporal consistency improves and share the same optimality conditions.
\begin{equation}
M_{\text{cyc}} = \frac12\,\mathbb{E}_{\tza,\tzb}\!\bigl[\lVert g(\tza)-g(\tzb)\rVert^{2}\bigr].
\label{eq:M_cyc}
\end{equation}

\textbf{Objective 2 (Semantic Separability Constraint)}: 
The KL divergence constraint $\mathcal{L}_{\text{reg}}$ preserves the distance relationships between patches before and after the projection,
which is equivalent to constraining the projection to be isometric. This property can be measured by the orthogonality of the Jacobian matrix~\cite{vae1,vae2,vae3,vae4} of $g$, as formulated in \cref{eq:M_reg}. Therefore, we use it as an approximation of $L_{\text{reg}}$.
\begin{equation}
M_{\text{reg}} = \frac{1}{2}\,\mathbb{E}_{\tzi}\!\bigl[\lVert \tJ_g(\tzi)\tJ_g(\tzi)^{\top}-\bm{I}\rVert_{F}^{2}\bigr].
\label{eq:M_reg}
\end{equation}

Combining the two surrogates yields the overall objective:
\begin{equation}
\min_g \; M(g) = M_{\text{cyc}} + \lambda M_{\text{reg}}.
\label{eq:theory_obj}
\end{equation}

We now consider two representative cases for $g$:
\textbf{i)} A linear projection: $g(\bm{z}) = \tW \bm{z}$;
\textbf{ii)} A two-layer MLP: $g(\bm{z}) = \tWlb \, \phi(\tWla \bm{z})$ with activation function $\phi(\cdot)=tanh(\cdot)$, and this case represents more complex modules.
The following theorem analyzes the spectral properties of the optimal solution under both cases, illustrating how the projection affects the quality of the transferred representation.

\subsection{Spectral Properties of Optimal Projections}
\label{subsec:theorem1}

\subsubsection{Settings for Linear Projection}
\label{subsubsec:linear_projection}

For the linear projection $g(\bm{z}) = \tW \bm{z}$, the Jacobian matrix can be expressed as $\tJ_g(\tzi) = \dfrac{\partial g}{\partial \tzi} = \tW$ , thereby the optimization objective can be reformulated as:

\begin{equation}
\begin{aligned}
\min_{\tW}\;M(\tW)
&=\frac12\mathbb{E}_{\tza,\tzb}\bigl[\lVert \tW\tza-\tW\tzb \rVert^{2}\bigr] \\
&\quad +\frac{\lambda}{2}\lVert \tW\tW^{\top}-\bm{I}\rVert_{F}^{2}.
\label{eq:theory_obj_1}
\end{aligned}
\end{equation}

To facilitate the analysis, we begin by introducing several definitions and assumptions.

\begin{defn}[Intra-video Covariance Matrix]
\label{defn:intra_covariance} 
Define the intra-video covariance matrix as the covariance of the patch representations that exhibit corresponding relationships between different frames in a single video: $\tSigma = \mathbb{E}_{\tza,\tzb}\bigl[(\tza-\tzb)(\tza-\tzb)^\top\bigr]$, where $(\tza,\tzb)$ denotes a pair of temporally aligned patch representations.
\end{defn}

\begin{defn}[Inter-video Covariance Matrix]
\label{defn:inter_covariance} 
Define the inter-video covariance matrix as the covariance of the video-level representations across the dataset: $\tbarSigma = \mathbb{E}_{\tbarza,\tbarzb}\bigl[(\tbarza-\tbarzb)(\tbarza-\tbarzb)^\top\bigr]$, where $\tbarzi = \mathbb{E}_{\bm{z} \in f(\bm{V}_i)} \left[\bm{z}\right]$ denotes the mean representation of video $\bm{V}_i$.
\end{defn}

\begin{assmp}[Symmetric Operator]\label{assmp:sym_1} Without loss of generality, $\tW$, $\tSigma$ and $\tbarSigma$ are constrained to be symmetric ($\tW^\top = \tW, \tSigma^\top = \tSigma, \tbarSigma^\top = \tbarSigma$). This is justified because any optimal $\tW$ can be symmetrized without increasing~\eqref{eq:theory_obj}.
\end{assmp}

\begin{assmp}[Positive Semi-definite]\label{assmp:psd_1} The transformation operator $\tW$, the intra-video covariance matrix
$\tSigma$, and the inter-video covariance matrix
$\tbarSigma$ are positive semi-definite: $\tW\succeq0, \tSigma\succeq0, \tbarSigma\succeq0$. This ensures all eigenvalues are non-negative.
\end{assmp}

\begin{assmp}[Commutative Minimizer]\label{assmp:commuting_1} we restrict the analysis to real symmetric commuting pairs $(\tW,\tSigma)$ and $(\tW,\tbarSigma)$, \ie, $\tSigma\tW=\tW\tSigma$ and  $\tbarSigma\tW=\tW\tbarSigma$. This allows simultaneous diagonalization with a common orthogonal basis $\tU$, yielding $\tW=\tU\tLambdaW\tU^\top$, $\tSigma=\tU\tLambdaSigma\tU^\top$, and $\tbarSigma=\tU\tLambdabarSigma\tU^\top$, where $\tLambdaW, \tLambdaSigma, \tLambdabarSigma$ denote corresponding eigenvalue matrices.
\end{assmp}

\subsubsection{Settings for MLP Projection}
\label{subsubsec:mlp_projection}

For the MLP projection $g(\bm{z}) = \tWlb  \phi(\tWla \bm{z})$, the optimization objective can be reformulated as:

\begin{equation}
\begin{aligned}
\min_{g}\;M(g)
&=\frac12\mathbb{E}_{\tza,\tzb}\bigl[\lVert g(\tza)-g(\tzb)\rVert^{2}\bigr] \\
&\quad +\frac{\lambda}{2}\mathbb{E}_{\tzi}\bigl[\lVert \tJ_g(\tzi)\tJ_g(\tzi)^{\top}-\bm{I}\rVert_{F}^{2}\bigr].
\label{eq:theory_obj_2}
\end{aligned}
\end{equation}

To facilitate the analysis, we begin by introducing a set of assumptions analogous to those in Case~\textbf{i)}. Specifically, we replace the matrix $\bm{W}$ in \cref{assmp:sym_1,assmp:psd_1,assmp:commuting_1} with $\bm{W}_1$ and $\bm{W}_2$, respectively. In addition to these modifications, we introduce the following additional assumptions:

\begin{assmp}[Gaussian Distribution]\label{assmp:distribution_2}
Without loss of generality, we assume that each patch representation $\bm{z}_i \in \mathbb{R}^{d}$ is independently drawn from a multivariate Gaussian distribution: $\bm{z} \sim \mathcal{N}(\bm{0}, \tSigma)$. This assumption is justified by the observation that patch-level features extracted from natural videos tend to exhibit approximately Gaussian behavior due to the high-dimensional embedding and the central limit effect. Consequently, $\mathbb{E}_{\bm{z}_i}[\bm{z}_i\bm{z}_i^\top] = \tSigma$.
\end{assmp}

\begin{assmp}[Linear Approximation]\label{assmp:approx_2}
Assuming most values fall within the near-linear region of the \texttt{tanh} activation, we adopt the approximation $\phi(\tU\bm{x}) \approx \tU\phi(\bm{x})$, where $\tU$ is an orthogonal matrix and $\phi(\cdot) = \tanh(\cdot)$ is applied element-wise.
\end{assmp}

\subsubsection{Proof for Theorem 1}
\label{subsubsec:thm_1_proof}

Based on the settings above, we establish the following theorem, which characterizes the spectral properties of the optimal solution in both cases and illustrates how the projection affects the quality of the transferred representation.

\begin{thm}[Spectral Properties of Optimal Projections, Formal]
\label{thm:thm1}
Denote the intra-video covariance matrix as $\tSigma = \mathbb{E}_{\tza,\tzb}\bigl[(\tza-\tzb)(\tza-\tzb)^\top\bigr]$. Let $\{\sigma_{i}\}_{i=1}^d$ be the eigenvalues of $\tSigma$.

For case i), assume symmetric matrices $\tW$ and $\tSigma$ are positive semi-definite and mutually commuting. Let $\{\mu_{i}\}_{i=1}^d$ be the eigenvalues of $\tW$.
Then the eigenvalues of the optimal projection $\tW^\star$ obey:
\begin{equation}
    \mu_i^\star=
    \begin{cases}
        0, & \sigma_i > 2\lambda,\\
        \sqrt{1-\dfrac{\sigma_i}{2\lambda}}, & \sigma_i\leq 2\lambda.
    \end{cases}
\end{equation}
For case ii), assume $\phi(u\tzi) \approx u\phi(\tzi)$ holds for $\tzi\sim\mathcal{N}(\bm{0},\tSigma)$, and that symmetric matrices $\tWla$, $\tWlb$, $\tSigma$ are positive semi-definite and mutually commuting. Let $\{\mu_{1,i}\}_{i=1}^d$ and $\{\mu_{2,i}\}_{i=1}^d$ be the eigenvalues of $\tWla$ and $\tWlb$, respectively.
Then the eigenvalues of the optimal projections ${\tWla}^\star$, ${\tWlb}^\star$ satisfy:
\begin{equation}
    \mu_{1,i}^\star\mu_{2,i}^\star=
    \begin{cases}
        0, & \sigma_i > 2\lambda,\\
        \sqrt{1-\dfrac{\sigma_i}{2\lambda}}, & \sigma_i\leq 2\lambda.
    \end{cases}
\end{equation}
\end{thm}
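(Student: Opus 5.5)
The plan is to reduce both cases to a family of decoupled scalar problems, one per shared eigendirection of the commuting matrices, and then minimize each scalar problem in closed form.

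\textbf{Case i).} Since $\tJ_g=\tW$, the objective is \cref{eq:theory_obj_1}. First, rewrite the consistency term using \cref{defn:intra_covariance} and the symmetry in \cref{assmp:sym_1}:
\begin{equation*}
\tfrac12\mathbb{E}\lVert\tW(\tza-\tzb)\rVert^2=\tfrac12\operatorname{tr}(\tW\tSigma\tW).
\end{equation*}
Next, use \cref{assmp:commuting_1} to write $\tW=\tU\tLambdaW\tU^\top$ and $\tSigma=\tU\tLambdaSigma\tU^\top$ in a common orthogonal basis. Orthogonal invariance of the trace and of the Frobenius norm then gives
\begin{equation*}
M(\tW)=\sum_{i=1}^d\Bigl[\tfrac{\sigma_i}{2}\mu_i^2+\tfrac{\lambda}{2}(\mu_i^2-1)^2\Bigr].
\end{equation*}
This separates into $d$ independent problems. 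In each one I substitute $x_i=\mu_i^2\ge 0$, which is legitimate because $\mu_i\ge0$ by \cref{assmp:psd_1}. The function
\begin{equation*}
h_i(x)=\tfrac{\sigma_i}{2}x+\tfrac{\lambda}{2}(x-1)^2
\end{equation*}
is a strictly convex quadratic on $[0,\infty)$. Its stationary point is $x=1-\sigma_i/(2\lambda)$.

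\begin{itemize}
\item If $\sigma_i\le 2\lambda$, the stationary point is feasible and is the minimizer, so $\mu_i^\star=\sqrt{1-\sigma_i/(2\lambda)}$.
\item If $\sigma_i>2\lambda$, then $h_i'(0)=\sigma_i/2-\lambda>0$. By convexity the constrained minimum sits at the boundary $x=0$, so $\mu_i^\star=0$.
\end{itemize}

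\textbf{Case ii).} I will reduce this case to case i) by showing that, under the stated assumptions, the MLP acts as the linear map $\tWlb\tWla$. Write $\tWla=\tU\tLambdaWla\tU^\top$ and $\tWlb=\tU\tLambdaWlb\tU^\top$ in the common basis. Combining \cref{assmp:approx_2} with the near-linear regime of $\tanh$ (so $\phi'\approx1$ and $\phi(u\bm z)\approx u\phi(\bm z)$), I expect:
\begin{itemize}
\item the Jacobian $\tJ_g(\tzi)=\tWlb\,\mathrm{diag}\bigl(\phi'(\tWla\tzi)\bigr)\tWla$ to be approximated by $\tWlb\tWla=\tU\tLambdaWlb\tLambdaWla\tU^\top$;
\item $g(\tza)-g(\tzb)$ to be approximated by $\tWlb\tWla(\tza-\tzb)$.
\end{itemize}
The Gaussian assumption (\cref{assmp:distribution_2}) is used to take the expectation over $\tzi$, so that the regularizer no longer depends on $\tzi$. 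With these approximations the objective \cref{eq:theory_obj_2} becomes exactly the case-i) objective with $\mu_i$ replaced by the product $\mu_{1,i}\mu_{2,i}$. The per-coordinate scalar problem depends only on this product, which is nonnegative because both factors are PSD. The case-i) solution therefore transfers directly and gives the stated formula for $\mu_{1,i}^\star\mu_{2,i}^\star$. The individual factors are not identifiable; only their product is, which matches how the statement is phrased.

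\textbf{Main obstacle.} The linear case is routine. The delicate step is the Jacobian reduction in case ii): replacing $\mathrm{diag}(\phi'(\tWla\tzi))$ by the identity and commuting $\phi$ past $\tU$. I would state it explicitly as a first-order approximation justified by \cref{assmp:approx_2}. If more care were wanted, I would first try the following. Under the Gaussian assumption, a constant factor $c=\mathbb{E}[\phi'(\cdot)]$ could be absorbed into $\tLambdaWla$, because the scalar problem depends only on the product of eigenvalues. That would make the result robust to a uniform slope below $1$.
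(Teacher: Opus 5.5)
Your proposal is correct and shares the paper's skeleton. Simultaneous diagonalization collapses $M$ to $\sum_i\bigl[\tfrac{\sigma_i}{2}\mu_i^2+\tfrac{\lambda}{2}(\mu_i^2-1)^2\bigr]$, and case ii) is linearized until it becomes the same scalar problem in the product $\mu_{1,i}\mu_{2,i}$. The differences are in execution, and each route buys something.

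\textbf{Case i).} The paper differentiates in $\mu_i$, lists the stationary points $\mu_i=0$ and $\mu_i^2=1-\sigma_i/(2\lambda)$, and simply asserts which one is the minimizer. Your substitution $x=\mu_i^2\ge 0$ makes each problem a strictly convex quadratic on $[0,\infty)$, so it actually proves global optimality.

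\textbf{Case ii).} The paper keeps the factor $1-\phi^2(\tLambdaWla\tU^\top\tzi)$ in the Jacobian. It expands this with the Gaussian moments $\mathbb{E}[\bm{Q}_i^2]=\sigma_i$ and $\mathbb{E}[\bm{Q}_i^4]=3\sigma_i^2$ into $1-2\sigma_i\mu_{1,i}^2+3\sigma_i^2\mu_{1,i}^4$. It then discards this correction under an explicitly stated extra hypothesis $\mu_{1,i}\ll 1$ (small initialization), and solves separate stationarity conditions in $\mu_{1,i}$ and $\mu_{2,i}$. That route shows where Gaussianity enters and which smallness condition is being used.

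Your direct replacement $\phi'\approx 1$ reaches the same reduced objective more cleanly and lets you reuse case i) through the product variable. However, it silently needs the same smallness of $\tWla\tzi$ that the paper states. In your route the Gaussian assumption does no work, so don't cite it as if it did. Your non-identifiability remark is exactly what makes the linearization self-consistent: only the product is pinned down, so one may take $\mu_{1,i}$ small and $\mu_{2,i}$ large and keep $\tanh$ in its linear regime at the optimum.

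\textbf{Closing aside.} Replacing the slope by a constant $c=\mathbb{E}[\phi'(\cdot)]$ gives $c\,\mu_{1,i}^\star\mu_{2,i}^\star=\sqrt{1-\sigma_i/(2\lambda)}$. This changes the stated formula by a factor $1/c$ rather than leaving it intact. Also, a random slope cannot be replaced by its mean inside the squared terms without a further approximation.
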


\begin{proof}
We first derive the \textbf{case i)} for the optimization objective of linear projection:
\begin{equation}
\begin{aligned}
M(\tW)
&=\underbrace{\frac12\mathbb{E}_{\tza,\tzb}\bigl[\lVert \tW\tza-\tW\tzb \rVert^{2}\bigr]}_{\text{Term A}} \\
&\quad +\underbrace{\frac{\lambda}{2}\lVert \tW\tW^{\top}-\bm{I}\rVert_{F}^{2}}_{\text{Term B}}.
\label{eq:theory_obj_1}
\end{aligned}
\end{equation}

The $\text{Term A}$ can be derived as:
\begin{equation}
\begin{aligned}
&\text{Term A} \\
&= \frac12\mathbb{E}_{\tza,\tzb}\bigl[\lVert \tW\tza-\tW\tzb \rVert^{2}\bigr] \\
&=\frac12\mathbb{E}_{\tza,\tzb}\bigl[ (\tW\tza-\tW\tzb)^\top(\tW\tza-\tW\tzb) \bigr]  \\
&=\frac12\mathbb{E}_{\tza,\tzb}\bigl[ \Tr( (\tW\tza-\tW\tzb)(\tW\tza-\tW\tzb)^\top ) \bigr]  \\
&=\frac12\mathbb{E}_{\tza,\tzb}\bigl[ \Tr(\tW(\tza-\tzb)(\tza-\tzb)^\top\tW^\top) \bigr]  \\
&=\frac12 \Tr(\tW^\top\tW\mathbb{E}_{\tza,\tzb}\bigl[(\tza-\tzb)(\tza-\tzb)^\top) \bigr] ) \\
&=\frac12 \Tr(\tW^\top\tW \tSigma ) \\
&=\frac12 \Tr((\tU\tLambdaW\tU^\top)^\top(\tU\tLambdaW\tU^\top) (\tU\tLambdaSigma\tU^\top)) \\
&=\frac12 \Tr(\tLambdaW^2\tLambdaSigma).
\label{eq:term_A}
\end{aligned}
\end{equation}

The derivation in \cref{eq:term_A} converts the squared $\ell_2$ norm into a matrix trace and further reduces it to a product of eigenvalues via orthogonal decomposition.

The $\text{Term B}$ can be derived as:
\begin{equation}
\begin{aligned}
& \text{Term B} \\
&= \frac{\lambda}{2}\lVert \tW\tW^{\top}-\bm{I}\rVert_{F}^{2} \\
 &= \frac{\lambda}{2} \Tr( (\tW\tW^{\top}-\bm{I})(\tW\tW^{\top}-\bm{I})^\top ) \\
 &= \frac{\lambda}{2} \Tr( (\tW^4 -2\tW^2+\bm{I} )) \\
&= \frac{\lambda}{2} \Tr( (\tU\tLambdaW\tU^\top)^4 -2(\tU\tLambdaW\tU^\top)^2+\bm{I} ) \\
&= \frac{\lambda}{2} \Tr( \tLambdaW^4 -2\tLambdaW^2)+\frac{\lambda d}{2}.
\end{aligned}
\end{equation}

Similarly, this derivation transforms the Frobenius norm into a matrix trace and reduces it to a function of eigenvalues via orthogonal decomposition.

Then the original objective can be rewritten as:
\begin{equation}
\begin{aligned}
M(\tW) 
&=\frac12\mathbb{E}_{\tza,\tzb}\bigl[\lVert \tW\tza-\tW\tzb \rVert^{2}\bigr] \\
&\quad +\frac{\lambda}{2}\lVert \tW\tW^{\top}-\bm{I}\rVert_{F}^{2} \\
&=\frac12 \Tr(\tLambdaW^2\tLambdaSigma) + \frac{\lambda}{2} \Tr( \tLambdaW^4 -2\tLambdaW^2)+\frac{\lambda d}{2} \\
&=\frac12 \sum_{i=1}^{d}({\mu_i}^2\sigma_i) + \frac{\lambda}{2} \sum_{i=1}^{d}( {\mu_i}^4 -2{\mu_i}^2)+\frac{\lambda d}{2}.
\end{aligned}
\label{eq:intermedia_1}
\end{equation}

Differentiating \cref{eq:intermedia_1} \wrt{} $\mu_i$ gives:
\begin{equation}
\begin{aligned}
\frac{\partial M}{\partial\mu_i}
&=\mu_i\sigma_i + 2\lambda\mu_i^3 - 2\lambda\mu_i \\
&=\mu_i(2\lambda\mu_i^2 - (2\lambda-\sigma_i )).
\end{aligned}
\end{equation}

Setting the derivative of the objective function to zero yields two possible solutions for each eigenvalue $\mu_i$:

\begin{itemize}[leftmargin=*]
    \item $\mu_i=0$. This is always a solution. It is optimal whenever the cubic term renders the quartic penalization unnecessary, \ie, when $\sigma_i>2\lambda$.
    \item $2\lambda\mu_i^2 - (2\lambda-\sigma_i )=0$.
    Solving for $\mu_i$ yields the non-zero stationary points, which exist precisely when $\sigma_i\le 2\lambda$.
\end{itemize}

In summary, the objective \cref{eq:theory_obj_1} reaches its minimum at
$
    \tW^\star=
      \tU
      \operatorname{diag}\bigl(\mu_1^\star,\dots,\mu_d^\star\bigr)
      \tU^\top,
$
where the eigenvalues of the optimal projection $\tW^\star$ obey:
\begin{equation}
    \mu_i^\star=
    \begin{cases}
        0, & \sigma_i \geq 2\lambda,\\
        \sqrt{1-\dfrac{\sigma_i}{2\lambda}}, & \sigma_i < 2\lambda.
    \end{cases}
\end{equation}

Afterward, we derive the \textbf{case ii)} for the optimization objective of MLP projection:

\begin{equation}
\begin{aligned}
\min_{g}\;M(g)
&=\underbrace{\frac12\mathbb{E}_{\tza,\tzb}\bigl[\lVert g(\tza)-g(\tzb)\rVert^{2}\bigr]}_{\text{Term C}} \\
&\quad +\underbrace{\frac{\lambda}{2}\mathbb{E}_{\tzi}\bigl[\lVert \tJ_g(\tzi)\tJ_g(\tzi)^{\top}-\bm{I}\rVert_{F}^{2}\bigr]}_{\text{Term D}}.
\label{eq:theory_obj_2}
\end{aligned}
\end{equation}

The \text{Term C} can be derived as:
\begin{equation}
\begin{split}
&\text{Term C} \\
&= \frac12\mathbb{E}_{\tza,\tzb}\bigl[\lVert g(\tza)-g(\tzb) \bigr \rVert^{2}\bigr] \\
&= \frac12\mathbb{E}_{\tza,\tzb}\bigl[\lVert \tWlb \bigl( \phi(\tWla\tza)-\phi(\tWlb\tzb) \bigr) \rVert^{2} \bigr] \\
&= \frac12\mathbb{E}_{\tza,\tzb}\bigl[\lVert \bigl( \tU\tLambdaWlb\tU^\top \bigr) \\
&\phantom{=\frac12\mathbb{E}_{\tza,\tzb}\bigl[\lVert \bigl(}
\bigl( \phi(\tU\tLambdaWla\tU^\top\tza)-\phi(\tU\tLambdaWla\tU^\top\tzb) \bigr) \rVert^{2} \bigr].
\end{split}
\end{equation}
Following \cref{assmp:approx_2}, the MLP projection becomes $g(\bm{z}) = \tWlb  \phi(\tWla \bm{z}) \approx \tU\tLambdaWlb\phi(\tLambdaWla\tU^\top\bm{z})$, we can continue to derive \text{Term C}:
\begin{equation}
\begin{aligned}
&\text{Term C} \\
&\approx \frac12\mathbb{E}_{\tza,\tzb}\bigl[\lVert \bigl( \tU\tLambdaWlb\tU^\top \bigr) \tU \\
&\phantom{\approx \frac12\mathbb{E}_{\tza,\tzb}\bigl[\lVert \bigl(}
\bigl( \phi(\tLambdaWla\tU^\top\tza)-\phi(\tLambdaWla\tU^\top\tzb) \bigr) \rVert^{2} \bigr] \\
&= \frac12\mathbb{E}_{\tza,\tzb}\bigl[\lVert \tLambdaWlb \bigl( \phi(\tLambdaWla\tU^\top\tza)-\phi(\tLambdaWla\tU^\top\tzb) \bigr) \rVert^{2} \bigr].
\end{aligned}
\end{equation}

Let $\bm{Q}=\tU^\top\bm{z}$. Since $\tU$ is orthogonal and $\bm{z} \sim \mathcal{N}(\bm{0}, \tSigma)$, it follows that $\bm{Q} \sim \mathcal{N}(\bm{0}, \tSigma)$.
Based on this property, we have:
\begin{equation}
\begin{aligned}
& \text{Term C} \\
&= \frac12\mathbb{E}_{\tza,\tzb}\bigl[\lVert \tLambdaWlb \bigl( \phi(\tLambdaWla\tQa)-\phi(\tLambdaWla\tQb) \bigr) \rVert^{2} \bigr] \\
&= \frac12\sum_{i=1}^{d} \mu_{2,i}^2  \mathbb{E}_{\tza,\tzb}\bigl[\bigl(  \phi(\mu_{1,i}\bm{Q}_{1,i})-\phi(\mu_{1,i}\bm{Q}_{2,i})  \bigr)^2 \bigr] \\
&\approx \frac12\sum_{i=1}^{d} \mu_{2,i}^2 \mu_{1,i}^2 \mathbb{E}_{\tza,\tzb}\bigl[\bigl( \bm{Q}_{1,i}-\bm{Q}_{2,i}  \bigr)^2 \bigr] \\
&= \frac12\sum_{i=1}^{d} \mu_{2,i}^2 \mu_{1,i}^2 \sigma_i. 
\end{aligned}
\end{equation}

Next, we consider the derivation of the \text{Term D}.
Note that $\tJ_g$ is the Jacobian matrix of $g(\tzi)$, it can be formulated as:
\begin{equation}
\begin{aligned}
\tJ_g(\tzi) 
&\approx \tU\tLambdaWlb\phi(\tLambdaWla\tU^\top\tzi) \\
&=\tU\tLambdaWlb\phi'(\tLambdaWla\tU^\top\tzi)\tLambdaWla\tU^\top \\
&=\tU\tLambdaWlb\bigl(1-\phi^2(\tLambdaWla\tU^\top\tzi)\bigr)\tLambdaWla\tU^\top.
\end{aligned}
\end{equation}

Therefore, the \text{Term D} can be derived as:
\begin{equation}
\begin{aligned}
&\text{Term D} \\
&= \frac{\lambda}{2}\mathbb{E}_{\tzi}\bigl[\lVert \tJ_g(\tzi)\tJ_g(\tzi)^{\top}-\bm{I}\rVert_{F}^{2}\bigr] \\
&= \frac{\lambda}{2} \mathbb{E}_{\tzi}\bigl[ \lVert \bigl( \tU\tLambdaWlb\bigl(1-\phi^2(\tLambdaWla\tU^\top\tzi)\bigr)\tLambdaWla\tU^\top \bigr)  \\
& \phantom{= \frac{\lambda}{2} \mathbb{E}_{\tzi}\bigl[ \lVert }
\bigl( \tU\tLambdaWlb\bigl(1-\phi^2(\tLambdaWla\tU^\top\tzi)\bigr)\tLambdaWla\tU^\top \bigr)^\top -\bm{I} \rVert_{F}^{2} \bigr] \\
&= \frac{\lambda}{2} \mathbb{E}_{\tzi}\bigl[ \lVert  \tLambdaWlb\bigl(1-\phi^2(\tLambdaWla\tU^\top\tzi)\bigr)\tLambdaWla \tLambdaWla^\top \\
& \phantom{= \frac{\lambda}{2} \mathbb{E}_{\tzi}\bigl[ \lVert }
\bigl(1-\phi^2(\tLambdaWla\tU^\top\tzi)\bigr)^\top\tLambdaWlb^\top -\bm{I} \rVert_{F}^{2} \bigr] \\
&= \frac{\lambda}{2} \sum_{i=1}^d \bigl( \mu_{1,i}^2\mu_{2,i}^2 \mathbb{E}_{\tzi}\bigl[  \bigl(1-\phi^2(\mu_{1,i}\bm{Q}_{i})\bigr)^2 \bigr] -1 \bigr)^2.
\end{aligned}
\end{equation}

Similarly, based on \cref{assmp:approx_2}, we can approximately move the coefficient of $\tzi$ outside the activation function $\phi(\cdot)$, leading to the following transformation into the function of eigenvalues:

\begin{equation}
\begin{aligned}
&\text{Term D} \\
&\approx \frac{\lambda}{2} \sum_{i=1}^d \bigl( \mu_{1,i}^2\mu_{2,i}^2  \bigl(1-\mu_{1,i}^2 \mathbb{E}_{\tzi}\bigl[ \bm{Q}_{i}^\top\bm{Q}_{i} \bigr]\bigr)^2  -1 \bigr)^2 \\
&= \frac{\lambda}{2} \sum_{i=1}^d \bigl( \mu_{1,i}^2\mu_{2,i}^2  \bigl(1-2\mu_{1,i}^2\mathbb{E}_{\tzi}\bigl[ \bm{Q}_{i}^\top\bm{Q}_{i} \bigr] \\
& \phantom{\frac{\lambda}{2} \sum_{i=1}^d \bigl( \mu_{1,i}^2\mu_{2,i}^2  \bigl(}
+ \mathbb{E}_{\tzi}\bigl[ (\bm{Q}_{i}^\top\bm{Q}_{i})^\top(\bm{Q}_{i}^\top\bm{Q}_{i}) \bigr]  \bigr)  -1 \bigr)^2 \\
&= \frac{\lambda}{2} \sum_{i=1}^d \bigl( \mu_{1,i}^2\mu_{2,i}^2  \bigl(1-2\sigma_i\mu_{1,i}^2 + 3\sigma_i^2\mu_{1,i}^4 \bigr)  -1 \bigr)^2 .
\end{aligned}
\end{equation}

Then the original objective can be rewritten as:
\begin{equation}
\begin{aligned}
M(g)
&=\frac12\sum_{i=1}^{d} \mu_{2,i}^2 \mu_{1,i}^2 \sigma_i \\
&+\frac{\lambda}{2} \sum_{i=1}^d \bigl( \mu_{1,i}^2\mu_{2,i}^2  \bigl(1-2\sigma_i\mu_{1,i}^2 + 3\sigma_i^2\mu_{1,i}^4 \bigr)  -1 \bigr)^2 .
\end{aligned}
\end{equation}

For simplicity, we assume $\mu_{1,i} \ll 1$, which is a reasonable approximation at initialization when using Kaiming~\cite{he2015delving} or Xavier~\cite{glorot2010understanding} schemes. Under this setting, the term $1 - 2\sigma_i\mu_{1,i}^2 + 3\sigma_i^2\mu_{1,i}^4$ approaches $1$, leading to the following simplification:

\begin{equation}
\begin{aligned}
M(g)
&=\frac12\sum_{i=1}^{d} \mu_{2,i}^2 \mu_{1,i}^2 \sigma_i
+\frac{\lambda}{2} \sum_{i=1}^d \bigl( \mu_{1,i}^2\mu_{2,i}^2  -1 \bigr)^2 .
\end{aligned}
\label{eq:intermedia_2}
\end{equation}

Differentiating \cref{eq:intermedia_2} \wrt{} $\mu_{1,i}$ gives:
\begin{equation}
\begin{aligned}
\frac{\partial M}{\partial\mu_{1,i}}
&=\mu_{1,i}\mu_{2,i}^2\sigma_i + 2\lambda \bigl(\mu_{1,i}^2\mu_{2,i}^2-1\bigr)\mu_{1,i}\mu_{2,i}^2 \\
&=\mu_{1,i}\mu_{2,i}^2  \bigl( \sigma_i + 2\lambda \bigl(\mu_{1,i}^2\mu_{2,i}^2-1\bigr) \bigr).
\end{aligned}
\end{equation}

Setting the derivative to zero produces two cases:
\begin{itemize}
    \item $\mu_{1,i}=0$ when $\sigma_i>2\lambda$.
    \item $ \mu_{1,i} = \dfrac{1}{\mu_{2,i}} \sqrt{1-\dfrac{\sigma_i}{2\lambda}}$.
    Solving for $\mu_{1,i}$ yields the non-zero stationary points, which exist precisely when $\sigma_i\le 2\lambda$.
\end{itemize}

Differentiating \cref{eq:intermedia_2} \wrt{} $\mu_{2,i}$ gives:
\begin{equation}
\begin{aligned}
\frac{\partial M}{\partial\mu_{2,i}}
&=\mu_{2,i}\mu_{1,i}^2\sigma_i + 2\lambda \bigl(\mu_{2,i}^2\mu_{1,i}^2-1\bigr)\mu_{2,i}\mu_{1,i}^2 \\
&=\mu_{2,i}\mu_{1,i}^2  \bigl( \sigma_i + 2\lambda \bigl(\mu_{2,i}^2\mu_{1,i}^2-1\bigr) \bigr).
\end{aligned}
\end{equation}

Setting the derivative to zero produces two cases:
\begin{itemize}
    \item $\mu_{2,i}=0$ when $\sigma_i>2\lambda$.
    \item $ \mu_{2,i} = \dfrac{1}{\mu_{1,i}} \sqrt{1-\dfrac{\sigma_i}{2\lambda}}$.
    Solving for $\mu_{2,i}$ yields the non-zero stationary points, which exist precisely when $\sigma_i\le 2\lambda$.
\end{itemize}

In summary, the objective \cref{eq:theory_obj_2} reaches its minimum at
$
    \tWla^\star= \tU\operatorname{diag}\bigl(\mu_{1,1}^\star,\dots,\mu_{1,d}^\star\bigr)\tU^\top,
    \quad 
    \tWlb^\star= \tU\operatorname{diag}\bigl(\mu_{2,1}^\star,\dots,\mu_{2,d}^\star\bigr)\tU^\top,
$
where the eigenvalues of the optimal projection $\tWla^\star, \tWlb^\star$ obey:
\begin{equation}
    \mu_{1,i}^\star\mu_{2,i}^\star=
    \begin{cases}
        0, & \sigma_i > 2\lambda,\\
        \sqrt{1-\dfrac{\sigma_i}{2\lambda}}, & \sigma_i\leq 2\lambda.
    \end{cases}
\end{equation}

This completes the proof.
\end{proof}

\subsection{Trade-off Improvement}
\label{subsec:theorem1}

Since both cases in \cref{subsec:theorem1} yield similar spectral effects, we conduct the theoretical analysis based on a linear layer.
To this end, we first provide the justification of the existence of the trade-off between temporal consistency and semantic separability.
Subsequently, we define the distance-based metrics to quantify the two competing objectives and then present a theorem that reveals how the margin evolves after applying the optimal projection.

\begin{lem}[Trade-off between temporal consistency and semantic separability] For the objective $M(\tW)$ consisting of a temporal consistency term and a semantic separability term, the gradients of these two terms induce opposing directions in a certain parameter space. This misalignment indicates an inherent trade-off between temporal consistency and semantic separability when optimizing $M(\tW)$.
\label{lem:lemma1}
\end{lem}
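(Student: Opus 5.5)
We will work in the spectral coordinates fixed by Assumption~\ref{assmp:commuting_1}, writing $\tW=\tU\tLambdaW\tU^\top$ with eigenvalues $\{\mu_i\}$. In these coordinates, the computation in the proof of \cref{thm:thm1} already splits $M(\tW)$ into two pieces:
\[
M_{\text{cyc}}(\tW)=\tfrac12\sum_{i}\mu_i^2\sigma_i,
\qquad
\lambda M_{\text{reg}}(\tW)=\tfrac{\lambda}{2}\sum_i(\mu_i^2-1)^2 .
\]
We will take the ``parameter space'' of the lemma to be the vector $\bm{\mu}=(\mu_1,\dots,\mu_d)$ restricted to the region $\mu_i\ge 0$. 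Both pieces are separable across coordinates, so the gradients can be compared coordinate by coordinate.

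The first step is to compute the two gradients explicitly:
\[
\partial_{\mu_i}M_{\text{cyc}}=\mu_i\sigma_i\ \ge 0,
\qquad
\partial_{\mu_i}\lambda M_{\text{reg}}=2\lambda\mu_i(\mu_i^2-1).
\]
The second step is to read off the signs on the region $\{0<\mu_i<1,\ \sigma_i>0\}$. There the consistency gradient is strictly positive, so it pushes $\mu_i$ toward $0$, shrinking the intra-video distance. The regularization gradient is strictly negative, so it pushes $\mu_i$ toward $1$, preserving isometry and hence semantic separability. Consequently, whenever at least one coordinate lies in this region,
\[
\langle \nabla M_{\text{cyc}},\nabla \lambda M_{\text{reg}}\rangle=\sum_i 2\lambda\sigma_i\mu_i^2(\mu_i^2-1)<0 ,
\]
so the two gradients point in opposing directions.

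The third step is to show that this region is exactly where the optimum lies, so the conflict is not vacuous. By \cref{thm:thm1}, for $0<\sigma_i<2\lambda$ the optimal eigenvalue satisfies $\mu_i^\star=\sqrt{1-\sigma_i/(2\lambda)}\in(0,1)$. At $\mu_i^\star$ the two gradients cancel exactly and are both nonzero. This means the optimizer sits at a genuine balance point between two opposing forces, not at a point where both terms are simultaneously minimized. To see this directly, we will note that the minimizer of $M_{\text{cyc}}$ alone is $\bm{\mu}=0$, where $M_{\text{reg}}$ is maximal among $\mu_i\in[0,1]$. Conversely, the minimizer of $M_{\text{reg}}$ alone is $\bm{\mu}=\bm{1}$, where $M_{\text{cyc}}=\tfrac12\Tr(\tSigma)>0$. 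No single $\tW$ therefore minimizes both terms unless $\tSigma=0$.

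The main obstacle is making ``opposing directions in a certain parameter space'' precise, since the statement is informal. We will commit to the negative-inner-product formulation on the band $0<\mu_i<1$, together with the disjointness of the two individual minimizers. We will also remark on the region $\mu_i>1$, where both gradients are positive and the terms agree. This explains why the trade-off is only claimed in a certain region, and why the PSD and commuting assumptions are needed to reduce everything to these scalar sign comparisons.
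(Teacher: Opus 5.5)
Your argument is essentially the paper's: you pass to the common eigenbasis, differentiate the separable objective coordinate-wise, and observe that for $\mu_i<1$ the consistency gradient $\mu_i\sigma_i\ge 0$ and the regularization gradient $2\lambda\mu_i(\mu_i^2-1)$ have opposite signs. Two of your additions improve on the paper. Your restriction to $0<\mu_i<1$, $\sigma_i>0$ is more careful than the paper's strict inequality, which fails at $\mu_i=0$. Your non-vacuity step, using $\mu_i^\star\in(0,1)$ and the disjoint individual minimizers, is a worthwhile addition that the paper does not make.

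One claim needs its quantifier fixed. The inner product $\langle\nabla M_{\text{cyc}},\nabla\lambda M_{\text{reg}}\rangle=\sum_i 2\lambda\sigma_i\mu_i^2(\mu_i^2-1)$ is \emph{not} negative merely because one coordinate lies in the band. Any coordinate with $\mu_j>1$ and $\sigma_j>0$ contributes an arbitrarily large positive term, which is exactly the region your own closing remark describes. State the inner-product version on the box $[0,1]^d$, with at least one coordinate satisfying $0<\mu_i<1$ and $\sigma_i>0$, or keep the comparison coordinate-wise as the paper does.
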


\begin{proof}
According to \cref{eq:intermedia_1}, the objective $M({\tW})$ of our method can be derived as:
\begin{equation}
\begin{aligned}
M({\tW})
&=\underbrace{\frac12\mathbb{E}_{{z}_1,{z}_2}\bigl[\lVert {\bm{W}}{z}_1-{\bm{W}}{z}_2 \rVert^{2}\bigr]}_{\text{Temporal Consistency}}
+\underbrace{\frac{\lambda}{2}\lVert {\bm{W}}{\bm{W}}^{\top}-{I}\rVert_{F}^{2}}_{\text{Semantic Separability}} \\
&=\frac12 \text{Tr}(\tLambdaW^2\tLambdaSigma) + \frac{\lambda}{2} \text{Tr}( {\tLambdaW}^4 -2\tLambdaW^2)+\frac{\lambda d}{2} \\
&=\frac12 \sum_{i=1}^{d}({\mu_i}^2\sigma_i)+ \frac{\lambda}{2} \sum_{i=1}^{d}( {\mu_i}^4 -2{\mu_i}^2)+\frac{\lambda d}{2}.
\end{aligned}
\end{equation}

According to the \cref{assmp:psd_1} , $\tW$ and $\tSigma$ are positive semi-definite, implying that $\mu_i$ and $\sigma_i$ are non-negative. Therefore, by differentiating $M(\bm{W})$ w.r.t. $\mu_i$ gives:
\begin{equation}
\begin{aligned}
\frac{\partial M}{\partial\mu_i}
&=\underbrace{\mu_i\sigma_i}_{\text{Temporal Consistency}} + \underbrace{2\lambda\mu_i^3 - 2\lambda\mu_i}_{\text{Semantic Separability}}.
\label{eq:grad}
\end{aligned}
\end{equation}

Based on the formulation in \cref{eq:grad}, the gradient of these two derived terms can be inferred as:

\begin{itemize}
  \item Temporal Consistency: $\mu_i\sigma_i \geq 0$.
  \item Semantic Separability: $2\lambda\mu_i^3 - 2\lambda\mu_i 
  = 2\lambda\mu_i(\mu_i+1)(\mu_i-1) < 0$ when $\mu_i < 1$.
\end{itemize}

Therefore, the two terms may change in opposite directions during optimization, since updates that increase temporal consistency tend to decrease semantic separability in the same eigen-direction, and vice versa. This behavior reflects an inherent trade-off between temporal consistency and semantic separability in our objective. A similar argument can be made for the trade-off in the MLP case.

\end{proof}

\begin{defn}[Intra-video Distance]
\label{defn:intra_distance} 
Define the intra-video distance as $D_{intra}(\tza,\tzb) = \mathbb{E}_{\tza, \tzb}\left[\lVert \tza - \tzb\rVert^2\right]$, which measures the average distance between temporally corresponding patches within a video.
\end{defn}

\begin{defn}[Inter-video Distance]
\label{defn:inter_distance} 
Define the inter-video distance as $D_{inter}(\tza,\tzb) = \mathbb{E}_{\tbarza, \tbarzb}\left[\lVert \tbarza - \tbarzb\rVert^2\right]$, calculating the average distance between video-level representations, where $\tbarzi = \mathbb{E}_{\bm{z} \in f(\bm{V}_i)} \left[\bm{z}\right]$ is the mean representation of the video $\bm{V}_i$. This reflects the average distance between different video-level representations.
\end{defn}

\begin{defn}[Distance Margin]
\label{defn:margin} 
Define the margin of these two metrics as $D(\tza,\tzb) = D_{inter}(\tza,\tzb)-\gamma D_{intra}(\tza,\tzb)$, reflecting the degree of separation between the two properties, where a larger value indicates a better trade-off between the two objectives.
\end{defn}

\begin{assmp}[Mean Eigenvalue Approximation]
\label{assmp:eigen_1} 
The eigenvalues of the inter-video covariance matrix approximate the average of those of the intra-video covariance matrix, \ie, $\forall j,\ \tau_j = \frac{1}{d} \sum_{i=1}^d \sigma_i$.
\end{assmp}

\begin{thm}[Trade-off Improvement, Formal]
\label{thm:thm2}
Let 
$\tSigma = \mathbb{E}_{\tza,\tzb}\bigl[(\tza-\tzb)(\tza-\tzb)^\top\bigr]$, $\tbarSigma = \mathbb{E}_{\tbarza,\tbarzb}\bigl[(\tbarza-\tbarzb)(\tbarza-\tbarzb)^\top\bigr]$ be the intra-video and inter-video covariance matrices, with eigenvalues $\{\sigma_i\}_{i=1}^d$ and $\{\tau_i\}_{i=1}^d$, respectively.  
Assume symmetric matrices $\tW$, $\tSigma$, and $\tbarSigma$ are positive semi-definite and mutually commuting, and that $\forall j,\ \tau_j = \frac{1}{d} \sum_{i=1}^d \sigma_i = \tau$. Let $\{\mu_{i}\}_{i=1}^d$ be the eigenvalues of $\tW$.
For the linear projection $g(\bm{z}) = \tW \bm{z}$, where the optimal eigenvalues are given by $\mu_i^\star = \sqrt{1 - \dfrac{\sigma_i}{2\lambda}}$ for $\sigma_i \leq 2\lambda$, the improvement in the margin metric is: given by:
\begin{equation}
\begin{aligned}
\Delta &= D(g(\tza),g(\tzb))-D(\tza,\tzb) \\
&= \sum_{\sigma_i \le 2\lambda} (\tau - \sigma_i)\left(1 - \dfrac{\sigma_i}{2\lambda}\right) > 0.
\end{aligned}
\end{equation}
\end{thm}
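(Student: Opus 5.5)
The plan is to compute the margin before and after the projection in the common eigenbasis $\tU$ given by \cref{assmp:commuting_1}, and then show positivity with a rearrangement (Chebyshev-type) argument. One point must be fixed first: the closed form in the statement holds only if the scale factor in \cref{defn:margin} is $\gamma=1$. I will work with $\gamma=1$ and say so explicitly; for general $\gamma$ the same steps give $\sum_i \mu_i^{\star2}(\tau-\gamma\sigma_i)-\sum_i(\tau-\gamma\sigma_i)$ instead.

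\textbf{Step 1 (margin before the projection).} The trace manipulation of Term A in the proof of \cref{thm:thm1} gives $D_{intra}(\tza,\tzb)=\Tr(\tSigma)=\sum_i\sigma_i$. The same manipulation applied to the video means gives $D_{inter}(\tza,\tzb)=\Tr(\tbarSigma)=\sum_j\tau_j$. By \cref{assmp:eigen_1}, $\sum_j\tau_j=d\tau=\sum_i\sigma_i$, so the original margin is $D(\tza,\tzb)=\sum_i(\tau-\sigma_i)=0$.

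\textbf{Step 2 (margin after the projection).} The projection is linear, so the mean of the projected video is $\tW\tbarzi$. Hence $D_{intra}(\bm p_1,\bm p_2)=\Tr(\tW^{2}\tSigma)=\sum_i\mu_i^{2}\sigma_i$ and $D_{inter}(\bm p_1,\bm p_2)=\Tr(\tW^2\tbarSigma)=\sum_i\mu_i^2\tau$, using simultaneous diagonalization. Now substitute the optimal eigenvalues from \cref{thm:thm1}: $\mu_i^{\star2}=1-\sigma_i/(2\lambda)$ when $\sigma_i\le 2\lambda$, and $\mu_i^\star=0$ otherwise. Combined with Step 1, this gives
\[
\Delta=\sum_i \mu_i^{\star2}(\tau-\sigma_i)=\sum_{\sigma_i\le 2\lambda}(\tau-\sigma_i)\Bigl(1-\frac{\sigma_i}{2\lambda}\Bigr),
\]
which is exactly the claimed identity.

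\textbf{Step 3 (positivity).} This is the only non-mechanical step. Write $w(s)=\max(0,1-s/(2\lambda))$, which is nonincreasing in $s$. Then $\Delta=\sum_i w(\sigma_i)(\tau-\sigma_i)$ over all $i$. Because $\sum_i(\tau-\sigma_i)=0$, subtracting $w(\tau)\sum_i(\tau-\sigma_i)$ costs nothing, and
\[
\Delta=\sum_i\bigl(w(\sigma_i)-w(\tau)\bigr)(\tau-\sigma_i).
\]
Each summand is $\ge 0$, since $w(\sigma_i)-w(\tau)$ and $\tau-\sigma_i$ have the same sign by monotonicity of $w$. This gives $\Delta\ge0$.

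\textbf{Main obstacle: strictness.} Strict inequality needs some index with $\sigma_i\ne\tau$ and $w(\sigma_i)\ne w(\tau)$. If all $\sigma_i$ are equal, then $\Delta=0$. If $2\lambda\le\min_i\sigma_i$, every weight vanishes and again $\Delta=0$. I would therefore state the nondegeneracy condition that the $\sigma_i$ are not all equal and $\min_i\sigma_i<2\lambda$. Under it, the index attaining $\min_i\sigma_i<\tau$ has $w(\sigma_i)>w(\tau)$, since $w$ is strictly decreasing on $[0,2\lambda]$ and $\min_i\sigma_i<2\lambda$. That summand is strictly positive, so $\Delta>0$.
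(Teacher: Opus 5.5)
Your proposal is correct, and Steps 1 and 2 coincide with the paper's computation. The paper also reduces $D_{intra}$ and $D_{inter}$ to $\Tr(\tW^2\tSigma)$ and $\Tr(\tW^2\tbarSigma)$ in the common eigenbasis, then uses $\sum_i(\tau-\sigma_i)=0$ to reach the closed form.

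Your remark about $\gamma$ is well founded. The paper's derivation writes $\Delta$ as a difference of $D_{inter}-D_{intra}$ terms and carries a stray $\gamma$ that cancels only because of the zero-sum identity. So what is actually established is the $\gamma=1$ case, as you say.

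Where you genuinely differ is positivity. The paper argues termwise under the extra hypothesis $2\lambda<\tau$, i.e.\ $\lambda<\frac{1}{2d}\sum_i\sigma_i$. Every active index then has $\sigma_i\le 2\lambda<\tau$, so each summand $(\tau-\sigma_i)(1-\sigma_i/(2\lambda))$ is nonnegative. This is quick and gives an interpretable sufficient condition (a small regularization weight). However, it needs a condition that is not in the statement. Its claim of strict positivity is also unjustified: if no $\sigma_i$ lies strictly below $2\lambda$, the sum vanishes. For instance, if all $\sigma_i$ are equal, they all equal $\tau>2\lambda$, and $\Delta=0$.

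Your rearrangement argument uses the zero-sum identity a second time, subtracting $w(\tau)\sum_i(\tau-\sigma_i)$. This yields $\Delta\ge 0$ for every $\lambda>0$, with no constraint linking $\lambda$ and $\tau$. Your nondegeneracy condition (the $\sigma_i$ not all equal and $\min_i\sigma_i<2\lambda$) is exactly necessary and sufficient for $\Delta>0$. It is implied by the paper's regime once the missing requirement that some $\sigma_i<2\lambda$ is added.

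Your weight $w(s)=\max(0,1-s/(2\lambda))$ is precisely $\mu^{\star 2}$ as a function of $\sigma$. It is the minimizer over $x=\mu^2\ge 0$ of the per-direction objective $\frac{\lambda}{2}x^2+(\frac{\sigma}{2}-\lambda)x$ from \cref{thm:thm1}. So the monotonicity you exploit is intrinsic to the optimal projection, not an artifact of the closed form.
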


\begin{proof}

The margin metric of intra-video distance between the projected representations and the original representations can be derived as:
\begin{equation}
\begin{aligned}
\Delta_{intra} &= D_{intra}(\tW\tza,\tW\tzb) - \gamma D_{intra}(\tza,\tzb) \\
&= \mathbb{E}_{\tza, \tzb}\left[\lVert \tW\tza - \tW\tzb\rVert^2\right]  \\ 
&\quad - \gamma \mathbb{E}_{\tza, \tzb}\left[\lVert \tza - \tzb\rVert^2\right] \\
&= \mathbb{E}_{\tza, \tzb}\left[ (\tW\tza - \tW\tzb)^\top(\tW\tza - \tW\tzb) \right] \\
&\quad - \gamma  \mathbb{E}_{\tza, \tzb}\left[ (\tza - \tzb)^\top(\tza - \tzb) \right]  \\
&=  \mathbb{E}_{\tza, \tzb}\left[  \Tr ( \tW (\tza - \tzb)(\tza - \tzb)^\top \tW^\top )  \right]  \\
&\quad - \gamma   \mathbb{E}_{\tza, \tzb}\left[  \Tr ( (\tza - \tzb)(\tza - \tzb)^\top )  \right] \\
&=   \Tr ( \tW^\top\tW \mathbb{E}_{\tza, \tzb}\left[   (\tza - \tzb)(\tza - \tzb)^\top \right] ) \\
&\quad -  \gamma  \Tr ( \mathbb{E}_{\tza, \tzb}\left[ (\tza - \tzb)(\tza - \tzb)^\top \right] )   \\
&= \Tr(\tLambdaW^\top\tLambdaW \tLambdaSigma) - \gamma  \Tr(\tLambdaSigma) \\
&= \sum_{i=1}^{d} (\mu_i^{2}-\gamma) \sigma_i.
\end{aligned}
\end{equation}

The margin metric of inter-video distance between the projected representations and the original representations can be derived as:
\begin{equation}
\begin{aligned}
\Delta_{inter} &= D_{inter}(\tW\tza,\tW\tzb) -  D_{inter}(\tza,\tzb) \\
&= \mathbb{E}_{\tbarza, \tbarzb}\left[\lVert \tW\tbarza - \tW\tbarzb\rVert^2\right] \\
&\quad - \gamma  \mathbb{E}_{\tbarza, \tbarzb}\left[\lVert \tbarza - \tbarzb\rVert^2\right] \\
&= \mathbb{E}_{\tbarza, \tbarzb}\left[ (\tW\tbarza - \tW\tbarzb)^\top(\tW\tbarza - \tW\tbarzb) \right] \\
&\quad - \gamma  \mathbb{E}_{\tbarza, \tbarzb}\left[ (\tbarza - \tbarzb)^\top(\tbarza - \tbarzb) \right]  \\
&=  \mathbb{E}_{\tbarza, \tbarzb}\left[  \Tr ( \tW (\tbarza - \tbarzb)(\tbarza - \tbarzb)^\top \tW^\top )  \right] \\
&\quad - \gamma  \mathbb{E}_{\tbarza, \tbarzb}\left[  \Tr ( (\tbarza - \tbarzb)(\tbarza - \tbarzb)^\top )  \right] \\
&=   \Tr ( \tW^\top\tW \mathbb{E}_{\tbarza, \tbarzb}\left[   (\tbarza - \tbarzb)(\tbarza - \tbarzb)^\top \right] ) \\
&\quad - \gamma  \Tr ( \mathbb{E}_{\tbarza, \tbarzb}\left[ (\tbarza - \tbarzb)(\tbarza - \tbarzb)^\top \right] )   \\
&= \Tr(\tLambdaW^\top\tLambdaW \tLambdabarSigma) - \gamma  \Tr(\tLambdabarSigma) \\
&= \sum_{i=1}^{d} (\mu_i^{2}-\gamma) \tau_i.
\end{aligned}
\end{equation}

Then the improvement of the margin metrics can be formulated as:

\begin{equation}
\begin{aligned}
\Delta &= (D_{inter}(\tW\tza,\tW\tzb) - D_{intra}(\tW\tza,\tW\tzb)) \\
&\quad - (D_{inter}(\tza,\tzb) - D_{intra}(\tza,\tzb)) \\
&= (D_{inter}(\tW\tza,\tW\tzb) - D_{inter}(\tza,\tzb)) \\
&\quad - (D_{intra}(\tW\tza,\tW\tzb) - D_{intra}(\tza,\tzb)) \\
&= \sum_{i=1}^{d} (\mu_i^{2}-\gamma) \tau_i - \sum_{i=1}^{d} (\mu_i^{2}-\gamma) \sigma_i \\
&= \sum_{i=1}^{d} (\mu_i^{2}-\gamma) (\tau_i - \sigma_i).
\end{aligned}
\end{equation}

Under \cref{assmp:sym_1,assmp:psd_1,assmp:commuting_1}, the optimal linear projection $\tW^\star= \tU\operatorname{diag}\bigl(\mu_1^\star,\dots,\mu_d^\star\bigr)\tU^\top$ has eigenvalues given by:
\begin{equation}
    \mu_i^\star=
    \begin{cases}
        0, & \sigma_i \geq 2\lambda,\\
        \sqrt{1-\dfrac{\sigma_i}{2\lambda}}, & \sigma_i < 2\lambda.
    \end{cases}
\end{equation}

Under \cref{assmp:eigen_1}, due to $\forall j, \tau_j = \frac 1 d \sum_{i=1}^d\sigma_i = \tau$, we have $\sum_{i=1}^d (\tau - \sigma_i) = 0$.

By substituting $\mu_i^\star=\sqrt{1-\dfrac{\sigma_i}{2\lambda}}$ and $\sum_{i=1}^d (\tau - \sigma_i) = 0$ under the condition $\sigma_i < 2\lambda$, the change in the margin metric can be expressed as:
\begin{equation}
\Delta = \sum_{i=1}^{d} (\mu_i^{2}-\gamma) (\tau_i - \sigma_i) = \sum_{\sigma_i < 2\lambda} (\tau - \sigma_i)\left(1 - \frac{\sigma_i}{2\lambda}\right).
\label{eq:final_delta}
\end{equation}

When $\lambda < \dfrac{\tau}{2}$ (\ie, $2\lambda < \tau$), all $\sigma_i \le 2\lambda$ necessarily obey $\sigma_i < \tau$. In this case, each term in \cref{eq:final_delta} satisfies $(\tau - \sigma_i)\left(1 - \dfrac{\sigma_i}{2\lambda}\right)>0$ because:
\begin{itemize}[leftmargin=*]
    \item $\tau - \sigma_i > 0$ follows from $\sigma_i < \tau$,
    \item $1 - \dfrac{\sigma_i}{2\lambda} \ge 0$ since $\sigma_i \le 2\lambda$.
\end{itemize}
Therefore, $\Delta > 0$ holds whenever $\lambda < \dfrac{1}{2d} \sum_{i=1}^d \sigma_i$.
\end{proof}

In summary, this section provides a theoretical analysis of the trade-off between temporal consistency and semantic separability, leading to the following two key insights:

\begin{enumerate}[leftmargin=*]
    \item \textbf{\cref{thm:thm1}} shows that linear projection exhibits similar behavior to shallow MLPs in adjusting representations, yielding comparable effects in similar feature scaling behavior as the linear layer.
    \item \textbf{\cref{thm:thm2}} demonstrates that under optimal conditions, a linear projection is sufficient to improve the trade-off between temporal consistency and semantic separability.
\end{enumerate}

\section{Supplementary Explanation of Method}
\label{sec:method}

\subsection{Differences with Previous Methods}
\label{subsec:differences}

\begin{figure*}[htbp]
  \centering

  \begin{subfigure}[b]{0.46\linewidth}
    \includegraphics[width=\linewidth]{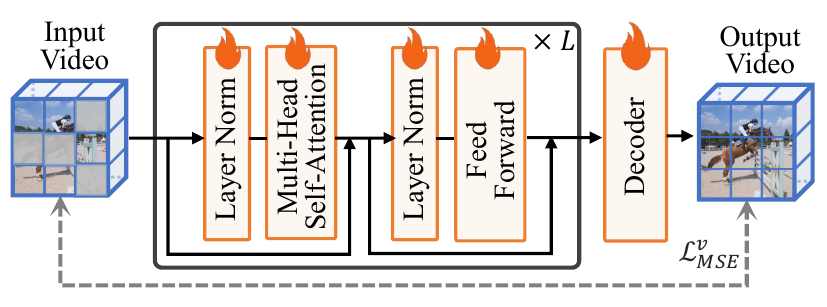}
    \caption{Video-pretrained methods~\cite{VideoMAE,MAE_ST}.}
  \end{subfigure}
  \begin{subfigure}[b]{0.46\linewidth}
    \includegraphics[width=\linewidth]{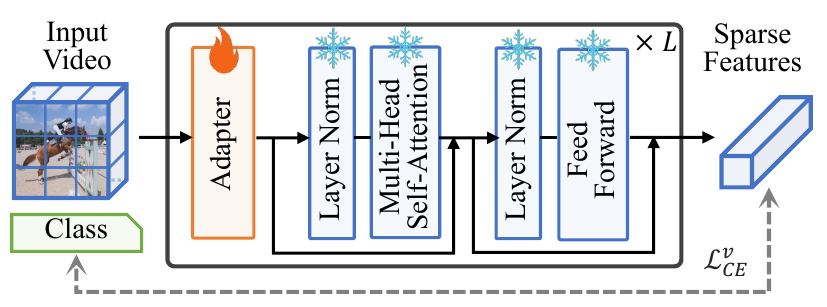}
    \caption{Supervised adaptation methods~\cite{adapter2,adapter3}.}
  \end{subfigure}

  \begin{subfigure}[b]{0.46\linewidth}
    \includegraphics[width=\linewidth]{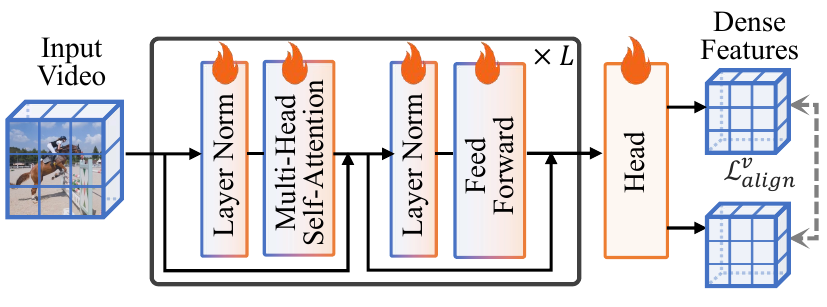}
    \caption{Video fine-tuning methods~\cite{two_stage_1,two_stage_2}.}
  \end{subfigure}
  \begin{subfigure}[b]{0.46\linewidth}
    \includegraphics[width=\linewidth]{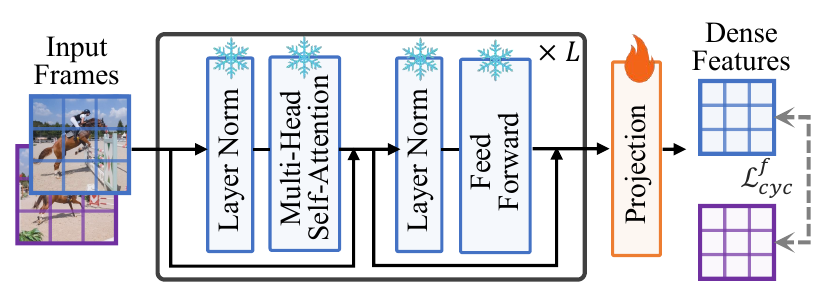}
    \caption{Image-to-video transfer method (Ours).}
  \end{subfigure}

  \caption{Comparison of several categories of video representation learning methods with ours.}
  \label{fig:compare_methods}
\end{figure*}

In \cref{fig:compare_methods}, we provide a comparative overview of several categories of video representation learning works alongside our method.

\textbf{1) Video-pretrained methods} extend the masked image modeling paradigm to the video domain by masking 3D volumes and reconstructing raw pixels for spatiotemporal learning~\cite{VideoMAE,VideoMAEv2,MAE_ST}. Subsequent variants incorporate conditional frames to enhance temporal modeling~\cite{SiamMAE,CropMAE,RSP,tcore}. These approaches typically require large-scale video pretraining from scratch, incurring substantial computational cost due to video redundancy and pixel-level reconstruction overhead.

\textbf{2) Supervised adaptation methods} adapt Vision Transformers pretrained with CLIP~\cite{CLIP} by inserting lightweight adapters in serial or parallel configurations~\cite{adapter1,adapter2,adapter3,adapter4,adapter5}. These adapters are usually trained on supervised action recognition datasets~\cite{Kinetics,ssv2}, making them highly task-dependent and less generalizable without additional task-specific fine-tuning.

\textbf{3) Video fine-tuning methods} follow a two-stage training scheme: models are first pretrained on task-specific datasets to learn static features for instance-level discrimination, then fine-tuned on video datasets with additional temporal branches introduced to handle motion reasoning~\cite{two_stage_1,two_stage_2,two_stage_3,two_stage_4}. Although it can perform well on specific video tasks, its increased model complexity and training cost make it difficult to perform fast cross-domain transfer.

\textbf{4) Our image-to-video transfer method} takes a different approach by leveraging pretrained image representations and adapting them to video tasks via structure-preserving projection. The main advantages are as follows:
\begin{itemize}[leftmargin=*]
    \item \textbf{Efficient transfer}: We sample two frames per video and insert a lightweight linear-based projection head after a frozen image encoder, enabling fast transfer with reduced temporal and spatial cost.
    \item \textbf{Joint optimization}: We simultaneously optimize temporal consistency and semantic separability via a temporal cycle-consistency objective and a semantic separability regularization term.
    \item \textbf{Label-free training}: Our method is fully self-supervised, requiring no manual annotations, which enhances scalability and promotes better generalization across diverse video tasks of different granularity.
\end{itemize}

\subsection{Algorithm of the Framework}
\label{subsec:algorithm}

The complete optimization procedure of our framework is summarized in \cref{alg:algo}. The batch-level for-loop can be implemented
via matrix operations to reduce computational burden.

\begin{algorithm}[tb]
\caption{Consistency-Separability Trade-off Transfer Learning Algorithm}
\label{alg:algo}
\LinesNumbered
\KwIn{Unlabeled dataset $\mathcal{D}$, number of iterations $L$, interpolation ratio $\alpha$, constraint weight $\lambda$.}
\KwOut{Parameters $\bm{\theta}_{L+1}$ of projection layer $g$.}
Initialize parameters $\bm{\theta}_{1}$ for $g$.

\For{$l=1$ {\bfseries to} $L$}{
    Sample a batch of videos $\{\bm{V}_i\}_{i=1}^B$.

    \For{$i=1$ {\bfseries to} $B$}{
        \textcolor{cvprblue}{$\triangleright$ Temporal Correspondence Establishment}
        
        Select frames $\vta^f$, $\vtb$, $\vta^b$ from $\bm{V}_m$ and prepare position encoding $\mathbf{E}_{\text{pos}}$ and $\widetilde{\mathbf{E}}_{\text{pos}}$.
        
        Extract representations $\zta^f$, $\ztb$, $\ztatilde^b$ with $f$ and projections $\pta^f$, $\ptb$, $\ptatilde^b$ via $g$.

        Calculate correlation matrices $\Atatb$ and $\Atbtatilde$.

        \textcolor{cvprblue}{$\triangleright$ Temporal Consistency and Semantic Separability Trade-off}

        Enhance temporal consistency of $\pta^f$, $\ptb$, $\ptatilde^b$ via $\mathcal{L}_{cyc}$.

        Align the semantic separability of $ \{ (\pta^f, \zta^f), (\ptb, \ztb), (\ptatilde^b, \ztatilde^b) \}$ by $\mathcal{L}_{reg}$.

    }

    Update the projection layer $g$ with $\mathcal{L}_{total} = \mathcal{L}_{cyc} + \lambda\mathcal{L}_{reg}$.
}
\Return

\end{algorithm}

\section{Detailed Description of Experiments}
\label{sec:implementation}

\subsection{Training Datasets}
\label{subsec:training_datasets}

\textbf{Kinetics-400}~\cite{Kinetics} is a widely used large-scale video benchmark comprising 400 human action categories collected from YouTube. It provides 239,789 trimmed video clips, each lasting around 10 seconds, making it suitable for various video understanding tasks. In our experiments, we sample video frames at $2$ FPS for pretraining to reduce redundancy while retaining sufficient temporal cues. In this work, unless otherwise noted, all the models equipped with our method are trained for 5 epochs using the Kinetics-400 training set.

\textbf{SSV2} (Something-Something V2)~\cite{ssv2} is a large-scale video benchmark emphasizing human-object interactions and temporal reasoning. It comprises 220,847 short, crowdsourced clips across 174 action classes, with each clip lasting a few seconds, making it well-suited for evaluating temporal understanding beyond appearance cues. In our experiments, we only use SSV2 in the ablation study on training datasets, training for 5 epochs on the SSV2 training split.

\textbf{ImageNet-1k}~\cite{Imagenet} is a well-known image dataset containing over 1.28 million training images across 1,000 real-world object categories. It has played a central role in the development of deep visual representation learning and serves as the pretraining corpus for most high-performance image encoders. In this work, most of the image models are already pretrained on this dataset, providing a strong foundation of semantic separability.

\textbf{WIT-400M} (WebImageText)~\cite{CLIP} is a large-scale web-crawled dataset consisting of 400 million image-text pairs, designed to support vision-language pretraining. The dataset was constructed using 500,000 diverse natural language queries to guide image-text pair retrieval, with up to 20,000 pairs per query to encourage approximate class balancing. Its overall scale and linguistic richness make it suitable for training multimodal models such as CLIP~\cite{CLIP}.

\textbf{LAION-400M}~\cite{LAION} is a large-scale dataset of 400 million image-text pairs designed to support vision-language pretraining. The image-text pairs are extracted from Common Crawl web pages and filtered using CLIP-based similarity to retain pairs with stronger semantic alignment between images and captions. It is used as a pretraining dataset for vision-language models, including BLIP~\cite{BLIP}.

\subsection{Training Settings}
\label{subsec:training_settings}

During training, we freeze the pretrained image encoder $f$ and update only the projection layer $g$. The training is performed on the Kinetics-400 dataset for 5 epochs with a total batch size of 512, using the first epoch for learning rate warm-up. 
We employ the AdamW optimizer~\cite{AdamW} with a cosine learning rate decay schedule. The base learning rate is set to $blr = 1 \times 10^{-4}$ and scaled according to the batch size as $lr = blr / 256$.
For each video clip, two frames are randomly sampled with a temporal interval of $\delta = 0.15$ relative to the total video length. The softmax temperature is set to $\tau = 0.03$. The output dimension of the projection head $g$ is set to $d = 768$ for ViT-Base backbones.
Detailed hyperparameter settings for training and method components are summarized in~\cref{tab:hyper_1} and~\cref{tab:hyper_2}. All experiments are implemented in PyTorch~\cite{pytorch} and conducted on a Linux server equipped with an AMD EPYC 9654 96-Core CPU and 4 NVIDIA RTX4090 GPUs.

\begin{table}[htbp]
  \centering
  \caption{Summary of hyperparameter settings used during training and evaluation.}
  \vspace{-6pt}
  \label{tab:hyper_summary}

  \begin{subtable}[t]{0.85\linewidth}
    \centering
    \caption{Training hyperparameters.}
    \label{tab:hyper_1}
    \setlength{\tabcolsep}{12pt}
    \resizebox{\linewidth}{!}{%
      \begin{tabular}{lc|c}
        \toprule
        Hyperparameter & Notation & Value \\
        \midrule
        Image size & $H\times W$ & $224\times224$ \\
        Patch size & $p$ & 16 \\
        Optimizer & / & AdamW \\
        Scheduler & / & Cosine \\
        Weight decay & / & $0.05$ \\
        Momentum & $\beta_1,\beta_2$ & $0.9, 0.95$ \\
        Base learning rate & $blr$ & $1 \times 10^{-4}$ \\
        Epochs & / & 5 \\
        Warm-up Epoch & / & 1 \\
        Batch size & $bs$ & 512 \\
        \bottomrule
      \end{tabular}
    }
  \end{subtable}

  \vspace{6pt}

  \begin{subtable}[t]{0.85\linewidth}
    \centering
    \caption{Method hyperparameters.}
    \label{tab:hyper_2}
    \setlength{\tabcolsep}{12pt}
    \resizebox{\linewidth}{!}{%
      \begin{tabular}{lc|c}
        \toprule
        Hyperparameter & Notation & Value \\
        \midrule
        Temperature of Softmax & $\tau$ & 0.03 \\
        Frame sampling interval  & $\delta$ & 0.15 \\
        Feature dim of $g$ & $d$ & 768 \\
        \bottomrule
      \end{tabular}
    }
  \end{subtable}

  \vspace{6pt}

  \begin{subtable}[t]{0.85\linewidth}
    \centering
    \caption{Evaluation hyperparameters.}
    \label{tab:hyper_3}
    \resizebox{\linewidth}{!}{%
      \begin{tabular}{c|ccc}
        \toprule
        Hyperparameter & DAVIS-2017 & VIP & JHMDB \\
        \midrule
        Image size & $480\times880$ & $480\times880$ & $320\times320$ \\
        Top-K & 7 & 10 & 7 \\
        Queue Length & 20 & 20 & 20 \\
        Neighborhood Size & 20 & 20 & 20 \\
        \bottomrule
      \end{tabular}
    }
  \end{subtable}
\end{table}

\subsection{Evaluation Settings}
\label{subsec:evaluation_settings}

\subsubsection{Evaluation on Dense-level Benchmarks}
\label{subsubsec:evaluation_dense_supp}

We first evaluate the representations on three dense video downstream tasks: video object segmentation on DAVIS-2017~\cite{DAVIS17}, human part segmentation on VIP~\cite{VIP}, and human pose propagation on JHMDB~\cite{JHMDB}. 
Following previous works~\cite{SiamMAE,CropMAE,RSP,tcore}, all tasks are evaluated under a semi-supervised protocol in which the ground-truth mask of the first frame is given, and the model propagates predictions to subsequent frames without any task-specific fine-tuning.
The hyperparameters used for each evaluation task are listed in~\cref{tab:hyper_3}. To ensure fair comparisons, we keep the evaluation hyperparameter settings fixed across all methods and tasks without additional tuning.

\textbf{DAVIS-2017}~\cite{DAVIS17} is a widely used benchmark for video object segmentation. We report three standard metrics to assess overall segmentation quality:

\noindent\textbf{1)} $\mathcal{J}_{\mathrm{m}}$ (region similarity) computes the average IoU between the predicted mask $P_i$ and the ground-truth mask $G_i$ across all videos $\bm{V}_i$:
\begin{equation}
    \mathcal{J}_{\mathrm{m}} = \frac{1}{n} \sum_{i=1}^{n} \frac{|P_i \cap G_i|}{|P_i \cup G_i|}.
\label{eq:J_mean}
\end{equation}

\noindent\textbf{2)} $\mathcal{F}_{\mathrm{m}}$ (contour accuracy) evaluates the alignment between the predicted and ground-truth boundaries by calculating the harmonic mean of precision $Pre_i$ and recall $Rec_i$:
\begin{equation}
    \mathcal{F}_{\mathrm{m}} = \frac{1}{n} \sum_{i=1}^{n} \frac{2 \cdot Pre_i \cdot Rec_i}{Pre_i + Rec_i}.
\label{eq:F_mean}
\end{equation}

\noindent\textbf{3)} $\mathcal{J}\&\mathcal{F}_{\mathrm{m}}$ provides an overall performance measure by averaging $\mathcal{J}_{\mathrm{m}}$ and $\mathcal{F}_{\mathrm{m}}$:
\begin{equation}
    \mathcal{J}\&\mathcal{F}_{\mathrm{m}} = \frac{\mathcal{J}_{\mathrm{m}} + \mathcal{F}_{\mathrm{m}}}{2}.
\label{eq:JF_mean}
\end{equation}

\textbf{VIP}~\cite{VIP} focuses on fine-grained human part segmentation and is used to evaluate semantic part propagation. The main evaluation metric is the mIoU computed by averaging the IoU across all classes $C_j$ and all videos $\bm{V}_i$:
\begin{equation}
    \text{mIoU} = \frac{1}{|C|} \sum_{j=1}^{|C|} \frac{1}{n} \sum_{i=1}^{n} \frac{|P_{i,j} \cap G_{i,j}|}{|P_{i,j} \cup G_{i,j}|}.
\label{eq:mIoU}
\end{equation}

\textbf{JHMDB}~\cite{JHMDB} is commonly used for human pose estimation. We adopt it for the pose propagation task and evaluate performance using the PCK@$k$ metric, which measures the proportion of keypoints predicted within a normalized distance threshold:
\begin{equation}
    \text{PCK@}k = \frac{1}{n} \sum_{i=1}^{n} \frac{1}{|S_i|} \sum_{j=1}^{|S_i|} \mathbbm{1} \left[ D(\hat{p}_{i,j}, p_{i,j}) < k \cdot d_i \right],
\label{eq:PCK}
\end{equation}
where $S_i$ is the keypoint set in video $\bm{V}_i$, $d_i$ denotes the scale of the human body, $D(\hat{p}_{i,j}, p_{i,j})$ is the Euclidean distance between the predicted and ground-truth positions, and $k$ is the threshold for the maximum allowable distance error. We report PCK@0.1 and PCK@0.2 in our experiments.

\subsubsection{Evaluation on Frame-/Video-level Benchmarks}
\label{subsubsec:evaluation_frame_video_supp}

We further evaluate the transferred models on several frame-level and video-level downstream tasks: temporal action localization on Breakfast~\cite{Breakfast}, video retrieval on UCF101 and HMDB51~\cite{UCF101,HMDB51}, and action classification on Something-Something-v2 (SSV2)~\cite{ssv2}.

\textbf{Breakfast}~\cite{Breakfast} contains 1,712 untrimmed videos with frame-level annotations of fine-grained actions. We perform temporal action localization on this dataset by extracting frame-wise representations with our transferred image-to-video model and training the FACT~\cite{FACT} backbone on these representations. Following a standard protocol, we train FACT on \textit{split2-4} and evaluate on \textit{split1}. We report three standard metrics as follows:

\noindent\textbf{1) $\mathrm{Edit}$} measures sequence-level similarity between the predicted and ground-truth label sequences after collapsing consecutive duplicates. Let $\mathbf{y}=(y_{1},\ldots,y_{T})$ and $\hat{\mathbf{y}}=(\hat{y}_{1},\ldots,\hat{y}_{T})$ be frame-wise labels, and let $\mathcal{C}(\cdot)$ collapse consecutive identical labels. Denote $\mathrm{Lev}(\cdot,\cdot)$ as the Levenshtein distance and $|\cdot|$ as the sequence length. The normalized edit score is
\begin{equation}
\mathrm{Edit} \;=\; 1 - \frac{\mathrm{Lev}\!\bigl(\mathcal{C}(\hat{\mathbf{y}}),\,\mathcal{C}(\mathbf{y})\bigr)}{\max\!\bigl\{\,|\mathcal{C}(\hat{\mathbf{y}})|,\,|\mathcal{C}(\mathbf{y})|\,\bigr\}}.
\label{eq:Edit}
\end{equation}

\noindent\textbf{2) $\mathrm{Acc}$} is the frame-wise accuracy, representing the percentage of correctly labeled frames:
\begin{equation}
\mathrm{Acc} \;=\; \frac{1}{T}\sum_{t=1}^{T}\mathbbm{1}\!\left\{\hat{y}_{t}=y_{t}\right\},
\label{eq:Acc}
\end{equation}
where $\mathbbm{1}\{\cdot\}$ is the indicator function.

\noindent\textbf{3) $\mathrm{F1@k}$} is the segmental $\mathrm{F1}$ at $\mathrm{IoU}$ threshold $k$. Let the ground-truth segment set be $\mathcal{S}=\{(s^{g}_{j},e^{g}_{j},c^{g}_{j})\}$ and the predicted set $\hat{\mathcal{S}}=\{(s^{p}_{i},e^{p}_{i},c^{p}_{i})\}$, where $s/e$ are start/end frames and $c$ is the class. For segments of the same class, define the temporal Intersection-over-Union as:
\begin{equation}
\mathrm{IoU}\bigl((s^{p}_{i},e^{p}_{i}),(s^{g}_{j},e^{g}_{j})\bigr)
=\frac{\bigl\{\min(e^{p}_{i},e^{g}_{j})-\max(s^{p}_{i},s^{g}_{j})\bigr\}_+}
{\max(e^{p}_{i},e^{g}_{j})-\min(s^{p}_{i},s^{g}_{j})}.
\label{eq:IoU}
\end{equation}
A prediction is a true positive (\textit{TP}) if it uniquely matches a ground-truth segment of the same class with $\mathrm{IoU}\ge k$; unmatched predictions are false positives (\textit{FP}), and unmatched ground-truth segments are false negatives (\textit{FN}). With precision $\mathit{P}=\frac{\mathit{TP}}{\mathit{TP}+\mathit{FP}}$ and recall $\mathit{R}=\frac{\mathit{TP}}{\mathit{TP}+\mathit{FN}}$, we compute
\begin{equation}
\mathrm{F1@k} \;=\; \frac{2\,\mathit{P}\cdot\mathit{R}}{\mathit{P}+\mathit{R}},
\label{eq:F1@k}
\end{equation}
where $k\in\{0.10,0.25,0.50\}$ as standard thresholds.

\textbf{UCF101}~\cite{UCF101} comprises 13,320 videos from 101 human action classes, and \textbf{HMDB51}~\cite{HMDB51} contains 6,766 videos from 51 action classes. For zero-shot video retrieval on the test set, we directly extract video representations using our transferred image-to-video model and perform retrieval following~\cite{liu2024not}: in each query round, one video is treated as the query and all remaining videos form the reference set. This process is repeated for every video. And we report the following metrics with the average.

\noindent\textbf{1) $\mathrm{mAP}$} (Mean Average Precision) is the mean of per-query Average Precision (AP). Let $|\mathcal{Q}|$ be the number of queries, $n_j$ the number of positives for query $j$, and $r_i$ the rank of the $i$-th retrieved positive for that query. Then
\begin{equation}
\mathrm{mAP} \;=\; \frac{1}{|\mathcal{Q}|}\sum_{j=1}^{|\mathcal{Q}|}\,\frac{1}{n_j}\sum_{i=1}^{n_j}\frac{i}{r_i}.
\label{eq:map_def}
\end{equation}

\noindent\textbf{2) $\mathrm{Recall@K}$} is the fraction of queries for which at least one positive appears in the top-$K$ results. Let $\mathcal{R}_j^{(K)}$ be the set of ranks $\le K$ among retrieved items for query $j$, and let $\mathcal{P}_j$ be the set of ranks of its positives. Then
\begin{equation}
\mathrm{Recall@}K \;=\; \frac{1}{|\mathcal{Q}|}\sum_{j=1}^{|\mathcal{Q}|}\mathbbm{1}\!\{\min(\mathcal{P}_j)\le K\}.
\label{eq:recall_def}
\end{equation}

\textbf{Something-Something-v2 (SSV2)}~\cite{ssv2} is a large-scale action classification benchmark consisting of 220,847 short videos from 174 fine-grained action categories without public labels. It focuses on human-object interactions with subtle motion variations, and is widely used to evaluate a model's capability for temporal reasoning and motion-sensitive action understanding.

For action classification on SSV2, each transferred image-to-video model is fine-tuned on the training set for 25 epochs and then evaluated on the validation set using single-clip sampling. Although this protocol is lighter than commonly used longer-schedule or multi-clip settings, it is applied uniformly to all compared methods to ensure effective and fair comparison. We report the standard top-$k$ accuracy metric:

\noindent\textbf{$\mathrm{Acc@}k$} measures the percentage of validation videos whose ground-truth label appears among the top-$k$ predicted classes. Let $\mathbf{z}^{(i)} \in \mathbb{R}^{C}$ be the predicted logits for the $i$-th video over $C$ classes, and let $y_i \in \{1,\dots,C\}$ be the ground-truth label. Denote by $\pi_k(\mathbf{z}^{(i)})$ the set of indices corresponding to the top-$k$ largest entries in $\mathbf{z}^{(i)}$. Then
\begin{equation}
\mathrm{Acc@}k \;=\; \frac{1}{N}\sum_{i=1}^{N}\mathbbm{1}\!\left\{ y_i \in \pi_k\!\left(\mathbf{z}^{(i)}\right) \right\},
\label{eq:acck_ssv2}
\end{equation}
where $N$ is the number of validation videos and $\mathbbm{1}\{\cdot\}$ is the indicator function. Following common practice to present $\mathrm{Acc@1}$ and $\mathrm{Acc@5}$.

\textbf{Chiral SSV2}~\cite{Chirality} is a temporal order discrimination benchmark constructed from Something-Something-v2~\cite{ssv2}. It groups temporally opposite actions into chiral pairs, such as ``sitting down'' and ``standing up'', and evaluates whether a video representation is sensitive to the ordering of visual change over time. Compared with standard action classification, this benchmark places stronger emphasis on time-awareness rather than semantic categorization.

Following~\cite{Chirality}, we evaluate each model using a linear-probe protocol on frozen representations. Specifically, for each chiral group, we extract frame-level representations from each video, concatenate representations along the temporal dimension to form the video representation, and train a linear classifier for binary classification. This procedure is repeated independently for every chiral group, and the final result is reported as the average classification accuracy across all groups.

\noindent\textbf{$\mathrm{Acc}$} measures the percentage of correctly classified videos over all evaluation samples. Let $\mathbf{z}^{(i)} \in \mathbb{R}^{2}$ be the logits predicted by the linear classifier for the $i$-th video, and let $y_i \in \{0,1\}$ denote its ground-truth label within the corresponding chiral pair. Then
\begin{equation}
\mathrm{Acc} \;=\; \frac{1}{N}\sum_{i=1}^{N}\mathbbm{1}\!\left\{\arg\max_{c} z^{(i)}_{c} = y_i \right\},
\label{eq:acc_chiral_ssv2}
\end{equation}
where $N$ is the total number of evaluation videos and $\mathbbm{1}\{\cdot\}$ is the indicator function.

\subsubsection{Distance-based Trade-off Metrics Validation}
\label{subsubsec:distance_metrics_supp}
To provide an interpretable assessment, we validate the distance-based metrics proposed in Sec.4. We randomly sample 1,000 videos from the Kinetics-400 validation set and compute each metric for both the original image-pretrained models and our transferred models. All metrics are computed on the same sample set for a fair comparison. We report four metrics as follows.

\noindent\textbf{1) $D_{inter}$ } (Inter-video distance).
Let the sample set be $\mathcal{V}=\{V^{(n)}\}_{n=1}^{M}$ with $M=1000$. For each $V^{(n)}$, select the middle frame $v^{(n)}_{t^\ast}$ and extract $N$ patch representations $\{\bm{z}^{(n)}_{t^\ast}(i)\}_{i=1}^{N}$. For each unordered pair $(u,v)$ with $u<v$, define the pair-wise inter-video distance as:
\begin{equation}
d(u,v)\;=\;\frac{1}{N}\sum_{i=1}^{N}\Bigl\|\,\bm{z}^{(u)}_{t^\ast}(i)-\bm{z}^{(v)}_{t^\ast}(i)\,\Bigr\|_{2}.
\end{equation}
The unnormalized inter-video distance is
\begin{equation}
D_{inter}^{ori}\;=\;\frac{2}{M(M-1)}\sum_{1\le u<v\le M}d(u,v).
\end{equation}
Let the global \emph{center} be the mean patch representation over videos, $\bm{c}(i)=\frac{1}{M}\sum_{n=1}^{M}\bm{z}^{(n)}_{t^\ast}(i)$, and define each video's distance to the center as
\begin{equation}
r(u)\;=\;\frac{1}{N}\sum_{i=1}^{N}\Bigl\|\,\bm{z}^{(u)}_{t^\ast}(i)-\bm{c}(i)\,\Bigr\|_{2}.
\end{equation}
The inter-video radius is $R_{inter}=\max_{u}r(u)$, and the normalized metric is
\begin{equation}
D_{inter}\;=\;\frac{D_{inter}^{ori}}{2R_{inter}}.
\label{eq:inter_dist}
\end{equation}

\noindent\textbf{2) $D_{intra}$ } (Intra-video distance).
For each $V^{(n)}$, select a set of frame pairs $\mathcal{P}^{(n)}=\{(t_a,t_b)\}$. For a given pair, measure pair-wise intra-video distance as:
\begin{equation}
d^{(n)}(t_a,t_b)\;=\;\frac{1}{N}\sum_{i=1}^{N}\Bigl\|\,\bm{z}^{(n)}_{t_a}(i)-\bm{z}^{(n)}_{t_b}(i)\,\Bigr\|_{2}.
\end{equation}
The per-video unnormalized intra distance and its normalization radius are
\begin{equation}
\begin{aligned}
D_{intra}^{ori,(n)}\;&=\;\frac{1}{|\mathcal{P}^{(n)}|}\sum_{(t_a,t_b)\in\mathcal{P}^{(n)}}d^{(n)}(t_a,t_b),
\\
R_{intra}^{(n)}\;&=\;\max_{t}\,\frac{1}{N}\sum_{i=1}^{N}\Bigl\|\,\bm{z}^{(n)}_{t}(i)-\bar{\bm{z}}^{(n)}(i)\,\Bigr\|_{2},
\end{aligned}
\end{equation}
where $\bar{\bm{z}}^{(n)}(i)$ is the per-video mean patch representation over the frames used for $\mathcal{P}^{(n)}$. We normalize each video by its own radius and then average:
\begin{equation}
D_{intra}\;=\;\frac{1}{M}\sum_{n=1}^{M}\frac{D_{intra}^{ori,(n)}}{2R_{intra}^{(n)}}.
\label{eq:intra_dist}
\end{equation}

\noindent\textbf{3) $D$ } (Distance margin).
The trade-off margin balances the two normalized distances with a scale factor $\gamma$:
\begin{equation}
\begin{aligned}
D\;&=\;D_{inter}-\gamma\,D_{intra},
\\
\gamma\;&=\;\frac{\mathbb{E}_{\mathcal{M}}\!\bigl[D_{intra}^{ori}\bigr]}{\mathbb{E}_{\mathcal{M}}\!\bigl[D_{inter}^{ori}\bigr]},
\label{eq:margin}
\end{aligned}
\end{equation}
where $\mathcal{M}$ indexes the set of models under comparison.
Model-specific values of the scale factor $\gamma$ are listed in \cref{tab:avg_gamma} and concentrate within a narrow range. Therefore, to unify the setting, we use the average $\gamma=0.3$ in practice.

\noindent\textbf{4)} ${Cyc.\ Acc.}$ (Cycle-consistency accuracy).
Given two frames forming a palindrome traversal and $N$ patches per frame, let $\bm{A}_{t_a}^{t_b}$ and $\bm{A}_{t_b}^{{t}_a}$ be the patch-wise correlation transition matrices, and set $\bm{P}=\bm{A}_{t_a}^{t_b}\,\bm{A}_{t_b}^{{t}_a}$. The cycle-consistency accuracy is the proportion of patches returning to their original indices:
\begin{equation}
\textit{Cyc.\ Acc.}\;=\;\frac{1}{N}\sum_{i=1}^{N}\mathbbm{1}\!\left\{\arg\max_{j}P_{ij}=i\right\}.
\label{eq:cyc_acc}
\end{equation}

\begin{table}[htbp]
  \centering
  \caption{The scale factor $\gamma={D_{intra}^{ori}}/{D_{inter}^{ori}}$ and the average scale factor for each model.}
  \vspace{-6pt}
  \setlength{\tabcolsep}{12pt}
  \resizebox{0.48\textwidth}{!}{%
    \begin{tabular}{cc|cc}
    \toprule
    {Method} & {$\gamma$} & {Method} & {$\gamma$} \\
    \midrule
    MAE   & \multicolumn{1}{c|}{0.1855} & MoCov3 & 0.3321  \\
    MAE +\textbf{\textit{Ours}} & \multicolumn{1}{c|}{0.3289} & MoCov3 +\textbf{\textit{Ours}} & 0.3053  \\
    \midrule
    I-JEPA & \multicolumn{1}{c|}{0.2283} & iBOT  & 0.2817  \\
    I-JEPA +\textbf{\textit{Ours}} & \multicolumn{1}{c|}{0.2645} & iBOT +\textbf{\textit{Ours}} & 0.3084  \\
    \midrule
    CLIP  & \multicolumn{1}{c|}{0.3365} & DINO  & 0.3378  \\
    CLIP +\textbf{\textit{Ours}} & \multicolumn{1}{c|}{0.3876} & DINO +\textbf{\textit{Ours}} & 0.3505  \\
    \midrule
    BLIP  & \multicolumn{1}{c|}{0.2489} & DINOv2 & 0.2730  \\
    BLIP +\textbf{\textit{Ours}} & \multicolumn{1}{c|}{0.3737} & DINOv2 +\textbf{\textit{Ours}} & 0.2916  \\
    \midrule
    \multicolumn{4}{c}{\textbf{Average $\gamma$} : 0.3021} \\
    \bottomrule
    \end{tabular}%
    }
  \label{tab:avg_gamma}%
\end{table}%

\begin{table*}[htbp]
  \centering
  \caption{Evaluation results on frame-level and video-level tasks based on representative image models. The best results are marked in \textbf{bold}.}
  \resizebox{0.92\textwidth}{!}{%
    \begin{tabular}{c|ccccc|cc|cc|cc|c}
    \toprule
    \multirow{3}[1]{*}{Model} & \multicolumn{5}{c|}{\shortstack{Action \\ Localization}} & \multicolumn{4}{c|}{\shortstack{Video \\ Retrieval}} & \multicolumn{2}{c|}{\shortstack{Action \\ Classification}} & \multicolumn{1}{c}{\shortstack{Temporal Order \\ Discrimination}} \\
\cmidrule{2-13}          & \multicolumn{5}{c|}{Breakfast} & \multicolumn{2}{c|}{UCF101} & \multicolumn{2}{c|}{HMDB51} & \multicolumn{2}{c|}{SSV2} & \multicolumn{1}{c}{Chiral  SSV2} \\
\cmidrule{2-13}          & Edit  & Acc   & F1@0.10 & F1@0.25 & F1@0.50 & mAP   & R@1   & mAP   & R@1   & ~Acc@1~ & Acc@5 & Acc \\
    \midrule
    CLIP  & \cellcolor[rgb]{ .886,  .937,  .855}53.8  & \cellcolor[rgb]{ .886,  .937,  .855}40.1  & \cellcolor[rgb]{ .863,  .925,  .827}\textbf{52.9}  & \cellcolor[rgb]{ .886,  .937,  .855}46.0  & \cellcolor[rgb]{ .886,  .937,  .855}33.8  & \cellcolor[rgb]{ .686,  .808,  .922}45.9  & \cellcolor[rgb]{ .867,  .922,  .969}90.8  & \cellcolor[rgb]{ .686,  .808,  .922}25.5  & \cellcolor[rgb]{ .486,  .69,  .871}70.1  & \cellcolor[rgb]{ .992,  .941,  .914}34.6  & \cellcolor[rgb]{ .992,  .941,  .914}67.8  & \cellcolor[rgb]{ .957,  .929,  .976} 79.1 \\
    CLIP \textbf{\textit{+Ours}} & \cellcolor[rgb]{ .824,  .902,  .773}\textbf{54.9}  & \cellcolor[rgb]{ .855,  .918,  .812}\textbf{40.9}  & \cellcolor[rgb]{ .886,  .937,  .855}52.5  & \cellcolor[rgb]{ .867,  .925,  .827}\textbf{46.4}  & \cellcolor[rgb]{ .827,  .906,  .78}\textbf{34.8}  & \cellcolor[rgb]{ .58,  .745,  .894}\textbf{49.0}  & \cellcolor[rgb]{ .439,  .659,  .859}\textbf{96.0}  & \cellcolor[rgb]{ .573,  .741,  .894}\textbf{27.1}  & \cellcolor[rgb]{ .443,  .663,  .859}\textbf{71.3}  & \cellcolor[rgb]{ .988,  .89,  .827}\textbf{35.5}  & \cellcolor[rgb]{ .98,  .855,  .773}\textbf{68.8}  & \cellcolor[rgb]{ .902,  .835,  .949} \textbf{80.5} \\
    \cmidrule{1-13}
    
    BLIP  & \cellcolor[rgb]{ .678,  .816,  .584}57.3  & \cellcolor[rgb]{ .58,  .761,  .459}47.3  & \cellcolor[rgb]{ .702,  .831,  .62}55.6  & \cellcolor[rgb]{ .698,  .827,  .612}49.4  & \cellcolor[rgb]{ .714,  .839,  .631}36.7  & \cellcolor[rgb]{ .388,  .627,  .843}54.4  & \cellcolor[rgb]{ .408,  .639,  .851}96.4  & \cellcolor[rgb]{ .396,  .631,  .847}29.4  & \cellcolor[rgb]{ .369,  .616,  .839}73.3  & \cellcolor[rgb]{ .953,  .627,  .408}39.9  & \cellcolor[rgb]{ .941,  .561,  .298}72.4  & \cellcolor[rgb]{ .843,  .741,  .922} 81.8 \\
    BLIP \textbf{\textit{+Ours}} & \cellcolor[rgb]{ .6,  .773,  .482}\textbf{58.6}  & \cellcolor[rgb]{ .502,  .714,  .357}\textbf{49.1}  & \cellcolor[rgb]{ .588,  .765,  .471}\textbf{57.5}  & \cellcolor[rgb]{ .608,  .776,  .498}\textbf{51.0}  & \cellcolor[rgb]{ .627,  .788,  .522}\textbf{38.1}  & \cellcolor[rgb]{ .357,  .608,  .835}\textbf{55.2}  & \cellcolor[rgb]{ .357,  .608,  .835}\textbf{97.0}  & \cellcolor[rgb]{ .357,  .608,  .835}\textbf{29.9}  & \cellcolor[rgb]{ .357,  .608,  .835}\textbf{73.5}  & \cellcolor[rgb]{ .937,  .537,  .263}\textbf{41.4}  & \cellcolor[rgb]{ .937,  .537,  .263}\textbf{72.6}  & \cellcolor[rgb]{ .804,  .675,  .902} \textbf{82.8} \\
    \cmidrule{1-13}
    
    iBOT  & \cellcolor[rgb]{ .784,  .878,  .725}55.5  & \cellcolor[rgb]{ .863,  .925,  .824}40.7  & \cellcolor[rgb]{ .847,  .914,  .804}53.2  & \cellcolor[rgb]{ .8,  .886,  .741}\textbf{47.6}  & \cellcolor[rgb]{ .796,  .886,  .741}35.3  & \cellcolor[rgb]{ .961,  .976,  .992}33.4  & \cellcolor[rgb]{ .776,  .867,  .945}92.0  & \cellcolor[rgb]{ .961,  .976,  .992}18.1  & \cellcolor[rgb]{ .867,  .922,  .969}59.9  & \cellcolor[rgb]{ .965,  .733,  .58}38.1  & \cellcolor[rgb]{ .98,  .847,  .761}68.9  & \cellcolor[rgb]{ .918,  .863,  .957} 80.1 \\
    iBOT \textbf{\textit{+Ours}} & \cellcolor[rgb]{ .737,  .851,  .663}\textbf{56.3}  & \cellcolor[rgb]{ .769,  .871,  .702}\textbf{42.9}  & \cellcolor[rgb]{ .827,  .906,  .78}\textbf{53.5}  & \cellcolor[rgb]{ .804,  .89,  .749}47.5  & \cellcolor[rgb]{ .678,  .816,  .584}\textbf{37.3}  & \cellcolor[rgb]{ .918,  .953,  .98}\textbf{34.6}  & \cellcolor[rgb]{ .533,  .718,  .882}\textbf{94.9}  & \cellcolor[rgb]{ .89,  .933,  .973}\textbf{18.8}  & \cellcolor[rgb]{ .722,  .831,  .933}\textbf{63.9}  & \cellcolor[rgb]{ .945,  .584,  .341}\textbf{40.6}  & \cellcolor[rgb]{ .953,  .651,  .443}\textbf{71.3}  & \cellcolor[rgb]{ .827,  .71,  .914} \textbf{82.3} \\
    \cmidrule{1-13}
    
    DINO v2 & \cellcolor[rgb]{ .604,  .776,  .49}58.5  & \cellcolor[rgb]{ .718,  .839,  .635}44.1  & \cellcolor[rgb]{ .62,  .784,  .51}57.0  & \cellcolor[rgb]{ .592,  .769,  .475}51.3  & \cellcolor[rgb]{ .624,  .784,  .514}38.2  & \cellcolor[rgb]{ .824,  .894,  .957}37.1  & \cellcolor[rgb]{ .667,  .8,  .918}93.3  & \cellcolor[rgb]{ .894,  .937,  .976}18.7  & \cellcolor[rgb]{ .784,  .871,  .949}62.1  & \cellcolor[rgb]{ .957,  .667,  .471}39.2  & \cellcolor[rgb]{ .957,  .678,  .486}71.0  & \cellcolor[rgb]{ .725,  .541,  .863} 84.8 \\
    DINO v2 \textbf{\textit{+Ours}} & \cellcolor[rgb]{ .439,  .678,  .278}\textbf{61.2}  & \cellcolor[rgb]{ .439,  .678,  .278}\textbf{50.5}  & \cellcolor[rgb]{ .439,  .678,  .278}\textbf{60.0}  & \cellcolor[rgb]{ .439,  .678,  .278}\textbf{54.0}  & \cellcolor[rgb]{ .439,  .678,  .278}\textbf{41.2}  & \cellcolor[rgb]{ .741,  .843,  .933}\textbf{39.3}  & \cellcolor[rgb]{ .51,  .702,  .878}\textbf{95.2}  & \cellcolor[rgb]{ .741,  .843,  .933}\textbf{20.0}  & \cellcolor[rgb]{ .604,  .761,  .902}\textbf{67.0}  & \cellcolor[rgb]{ .945,  .596,  .357}\textbf{40.4}  & \cellcolor[rgb]{ .945,  .584,  .337}\textbf{72.1}  & \cellcolor[rgb]{ .702,  .506,  .851} \textbf{85.2} \\
    \bottomrule
    \end{tabular}%
  }
  \label{tab:frame_video_supp}%
\end{table*}%

\subsection{Image-pretrained Fundamental Models}
\label{subsec:fundamental_models}

We evaluate our method using eight representative pretrained image encoders, which can be broadly categorized into three paradigms of self-supervised learning: 
1) \textit{Masked modeling}: MAE~\cite{MAE}, I-JEPA~\cite{I_JEPA}; 
2) \textit{Contrastive learning}: CLIP~\cite{CLIP}, BLIP~\cite{BLIP}, MoCo v3~\cite{MoCov3}; 
3) \textit{Self-distillation}: iBOT~\cite{iBOT}, DINO~\cite{DINO}, DINO v2~\cite{DINOv2}. 
All models are pretrained on ImageNet-1k with self-supervised objectives, except for CLIP and BLIP, which are pretrained with natural language supervision. 
We adopt ViT-Base~\cite{Vision_Transformer} architectures with a patch size of 16 as the backbone encoder for each model.

\begin{itemize}[leftmargin=*]
  \item \textbf{MAE}~\cite{MAE} follows an encoder-decoder architecture, where random image patches are masked and the model is trained to reconstruct the missing content at the pixel level.
  \item \textbf{I-JEPA}~\cite{I_JEPA} learns representations by predicting latent representations of masked regions. It discards the decoder and instead relies on semantic-level prediction to better capture high-level image structures.
  \item \textbf{CLIP}~\cite{CLIP} is a vision-language model trained with natural language supervision. It learns to align image and text embeddings in a shared feature space using a contrastive objective on WIT dataset.
  \item \textbf{BLIP}~\cite{BLIP} is a vision-language model that extends CLIP-style contrastive pretraining with additional image-text matching and language modeling objectives. By jointly optimizing these objectives on large-scale web data, BLIP learns richer cross-modal representations and achieves stronger performance on image captioning and visual question answering.
  \item \textbf{MoCo v3}~\cite{MoCov3} is a contrastive learning method that adapts Momentum Contrast to Vision Transformers, employing a siamese architecture with an online encoder and a momentum-updated target encoder, and discards the negative sample queue used in earlier versions.
  \item \textbf{DINO}~\cite{DINO} adopts a self-distillation structure with Vision Transformers as the encoder. It encourages consistent representations across different views of the same image.
  \item \textbf{iBOT}~\cite{iBOT} builds upon DINO by introducing additional alignment on dense patch tokens. It aligns both the global \texttt{[CLS]} token and local patch-level features between two views, thus encouraging fine-grained spatial consistency in the learned representations.
  \item \textbf{DINO v2}~\cite{DINOv2} extends iBOT by incorporating various design improvements, including better centering techniques~\cite{centering}, regularization strategies like KoLeo loss~\cite{KoLeo}, and resolution-adaptive training~\cite{adapting_resolution}. In our experiments, we exclude computationally intensive techniques to ensure a consistent and fair comparison across models.
\end{itemize}

\subsection{Competitors}
\label{subsec:competitors}

We compare our approach against two categories of strong baselines: recent state-of-the-art (SOTA) video representation learning methods and image-to-video adaptation frameworks. For all baselines, we use the officially released pretrained weights without any additional training or fine-tuning.

\noindent\textbf{1) Video representation learning methods}: 
These methods are specifically designed to learn spatiotemporal representations from raw video inputs, often relying on temporal masking or reconstruction-based objectives.

\begin{itemize}[leftmargin=*]
  \item \textbf{VideoMAE}~\cite{VideoMAE} extends the masked modeling paradigm to videos by randomly masking spatiotemporal tubes and reconstructing the missing pixels. It adopts a high masking ratio to encourage the encoder to capture both appearance and motion features.
  \item \textbf{MAE-ST}~\cite{MAE_ST} adapts MAE to spatiotemporal data by explicitly incorporating temporal modeling modules into the encoder to better capture dynamic patterns.
  \item \textbf{DropMAE}~\cite{DropMAE} applies spatial-attention dropout in masked modeling, encouraging the model to attend to motion cues for temporal discriminability.
  \item \textbf{SiamMAE}~\cite{SiamMAE} adopts a Siamese structure where the past frame and a masked version of the current frame are jointly encoded. A conditional decoder is employed to reconstruct the missing patches, thereby promoting temporal consistency across frames.
  \item \textbf{CropMAE}~\cite{CropMAE} generalizes SiamMAE by using different crops or augmentations of the same frame as input, encouraging invariance under intra-frame transformations.
  \item \textbf{RSP}~\cite{RSP} formulates temporal modeling as a stochastic frame prediction task. It learns to reconstruct future frames from current ones by modeling both prior and posterior distributions over latent motion variables.
\end{itemize}

\noindent\textbf{2) Image-to-video adaptation methods}:
These methods aim to adapt pretrained image models to instance-level video understanding tasks by integrating lightweight modules that enable temporal reasoning, while keeping most of the backbone parameters frozen and only updating a small subset.

\begin{itemize}[leftmargin=*]
  \item \textbf{AIM}~\cite{adapter3} introduces a lightweight adapter into a frozen ViT backbone, enabling spatiotemporal adaptation along spatial and temporal dimensions, facilitating efficient transfer from static to dynamic inputs.
  \item \textbf{ST-Adapter}~\cite{adapter2} proposes a 3D bottleneck adapter into the CLIP-pretrained ViT model, which enables the model to reason about dynamic video content at a small task-specific parameter cost.
  \item \textbf{ZeroI2V}~\cite{adapter4} introduces spatial-temporal dual-headed attention mechanism combined with a linear adaptation layer, thus enabling the transfer of frozen image models to video tasks and supporting zero additional inference cost via structural reparameterization.
\end{itemize}

\begin{figure*}[t]
  \centering
\includegraphics[width=0.30\linewidth]{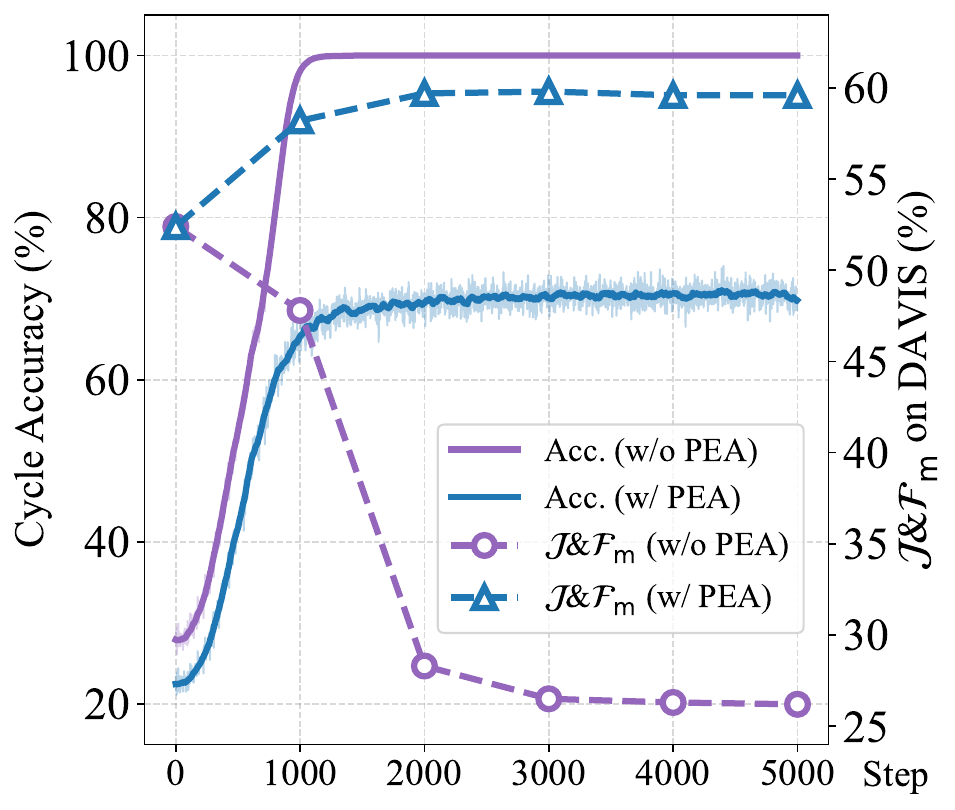}
\includegraphics[width=0.30\linewidth]{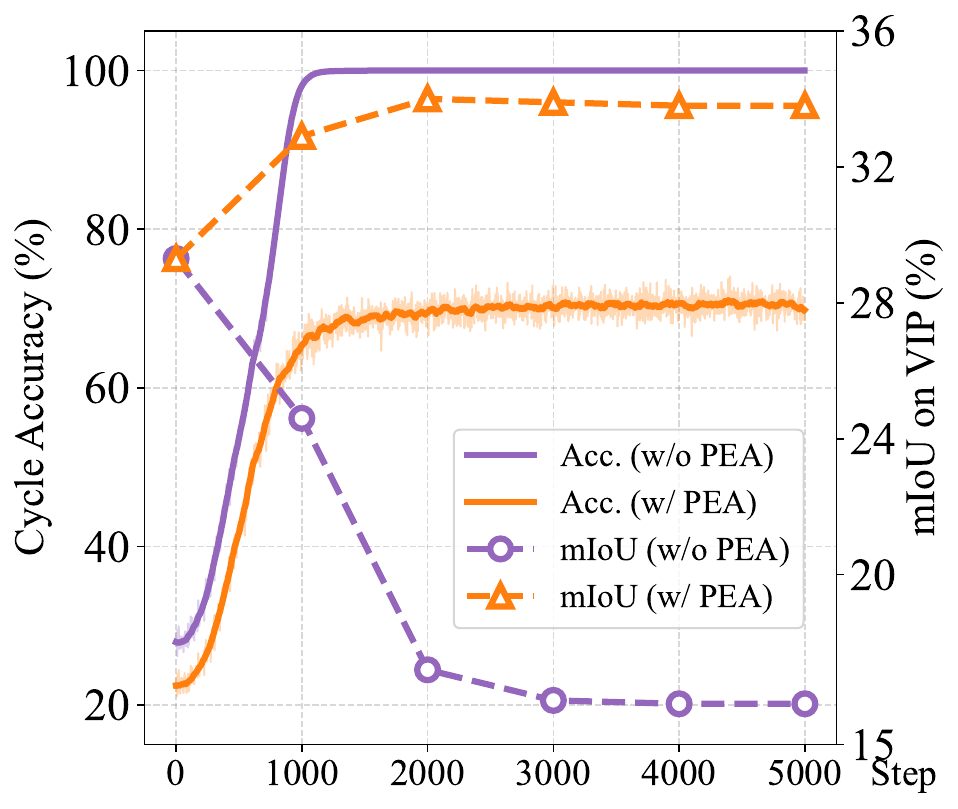}
\includegraphics[width=0.30\linewidth]{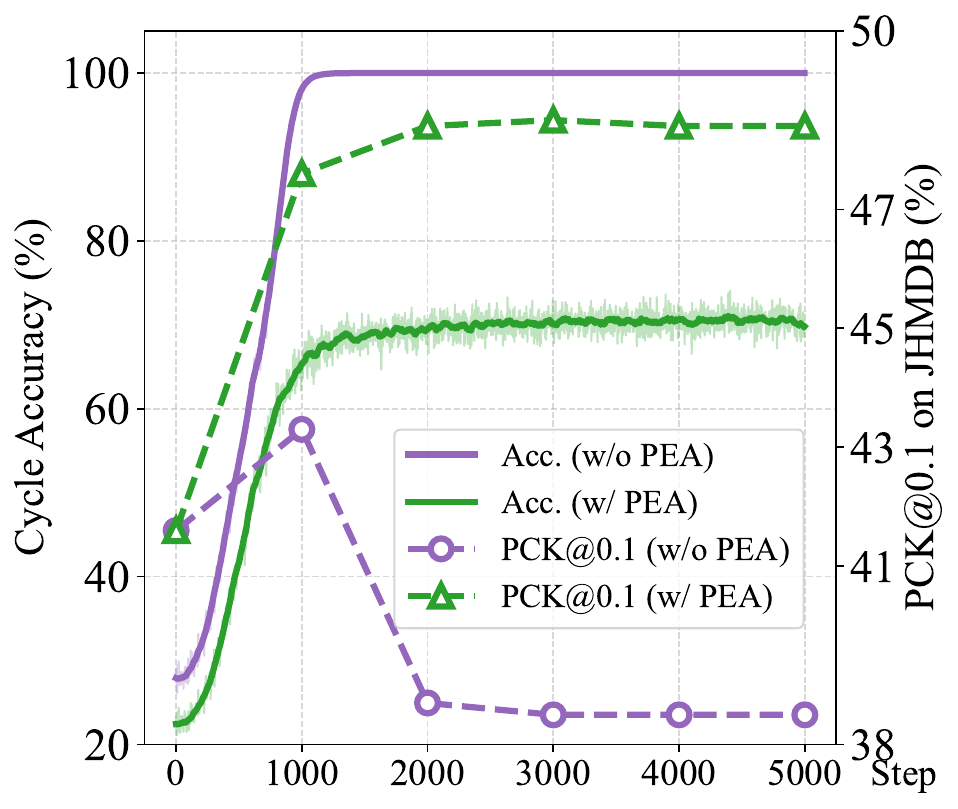}
\includegraphics[width=0.30\linewidth]{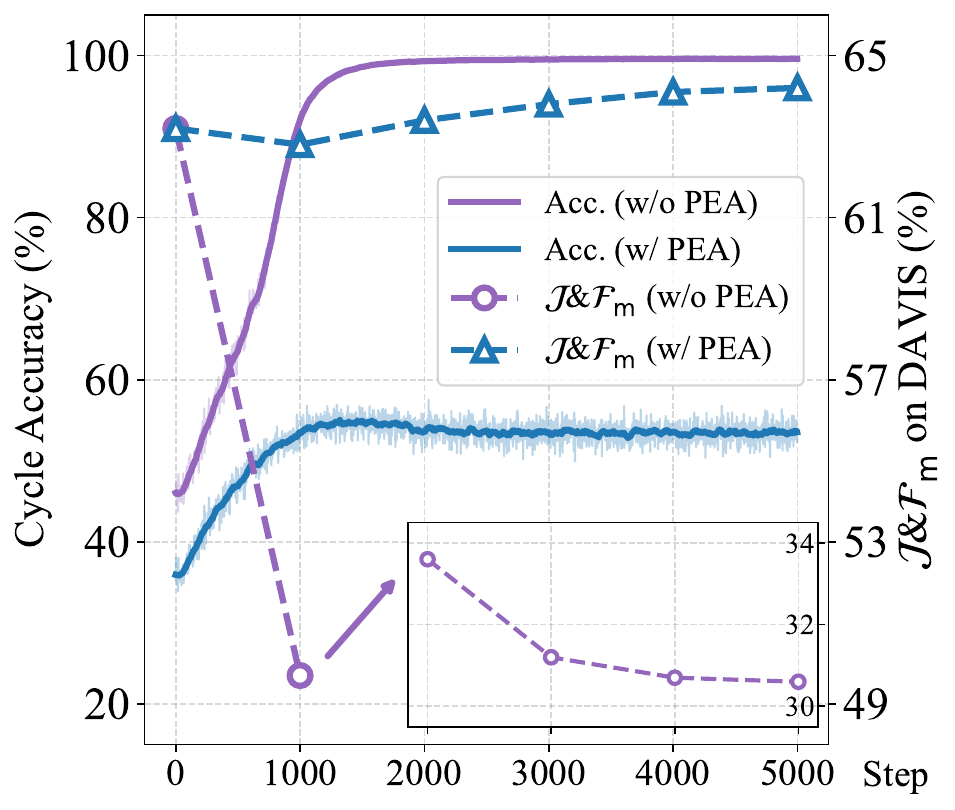}
\includegraphics[width=0.30\linewidth]{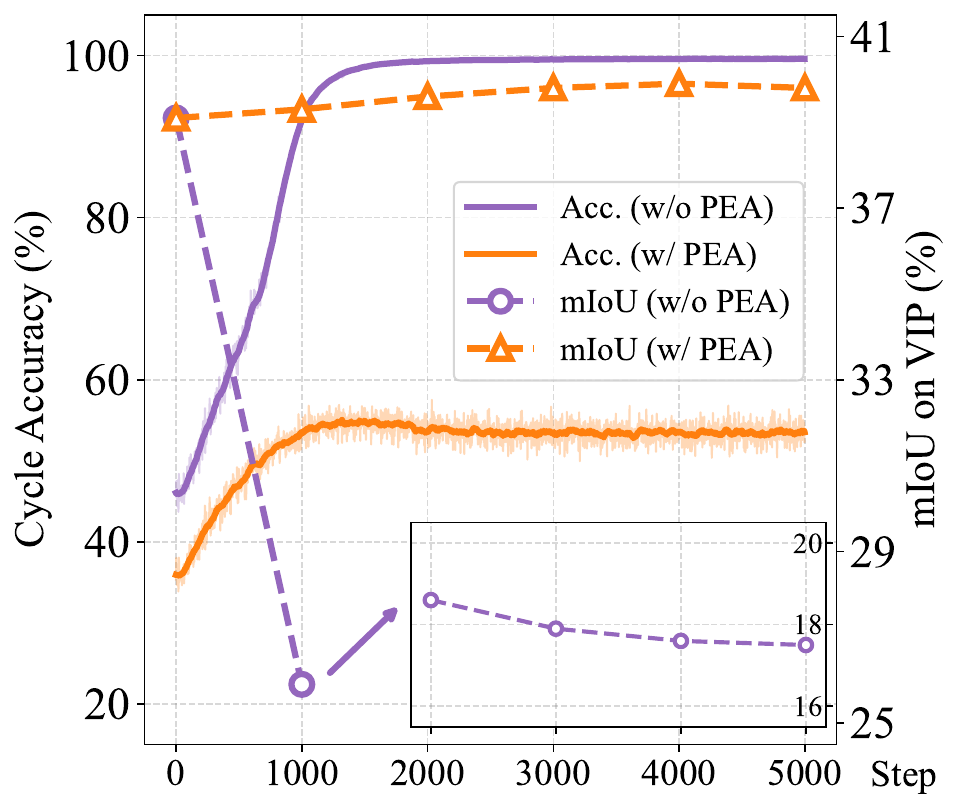}
\includegraphics[width=0.30\linewidth]{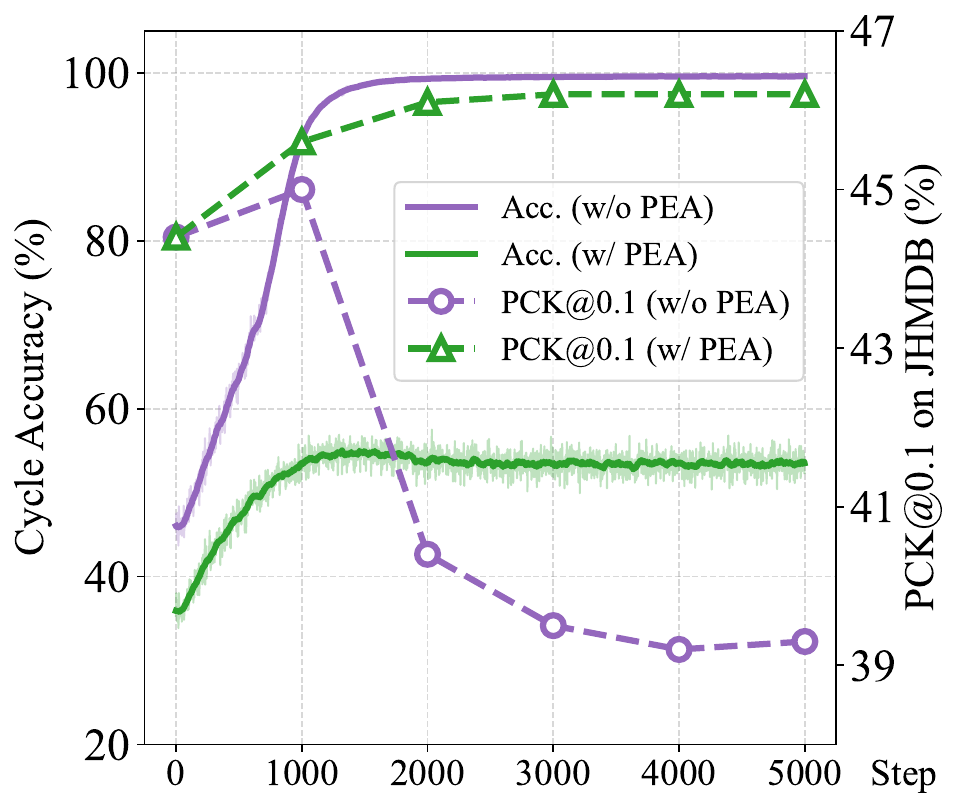}
  \caption{Cycle-consistent accuracy and downstream performance during training with or without the PEA strategies across three tasks based on MAE (\textit{line 1}) and DINO (\textit{line 2}).}
  \label{fig:acc_curve_supp}
\end{figure*}

\begin{table}[htbp]
  \centering
  \caption{Extended comparison with representative methods on the DAVIS-2017 validation set. Methods are grouped by their core settings for a broader reference.}
  \label{tab:sota_supp}
  \resizebox{\linewidth}{!}{%
    \begin{tabular}{clc|ccc}
    \toprule
    \multirow{2}{*}{Type} & \multirow{2}{*}{Method} & \multirow{2}{*}{Backbone} & \multicolumn{3}{c}{DAVIS-2017} \\
    \cmidrule{4-6}
    & & & $\mathcal{J}\&\mathcal{F}_m$ & $\mathcal{J}_m$ & $\mathcal{F}_m$ \\
    \midrule
    \multirow{4}{*}{\shortstack{Dedicated  VOS \\ Systems}}
    & STCN~\cite{STCN} & ResNet50 & 85.4 & 82.2 & 88.6 \\
    & SwinB-AOT-L~\cite{SwinB} & Swin-B & 85.4 & 82.4 & 88.4 \\
    & SimVOS-B~\cite{SimVOS} & ViT-B/16 & 88.0 & 85.0 & 91.0 \\
    & Cutie-base~\cite{Cutie} & ResNet50 & 87.9 & 84.6 & 91.1 \\
    \midrule
    \multirow{2}{*}{\shortstack{Segmentation \\ Foundation Models}}
    & SAM 2~\cite{sam2} & Hiera-B+ & 90.2 & 87.0 & 93.4 \\
    & SAM 2~\cite{sam2} & Hiera-L & 90.7 & 87.5 & 94.0 \\
    \midrule
    \multirow{5}{*}{\shortstack{Self-Supervised \\ Video Pre-training}}
    & VideoMAE~\cite{VideoMAE} & ViT-L/16 & 45.0 & 43.6 & 46.5 \\
    & MAE-ST~\cite{MAE_ST} & ViT-L/16 & 54.6 & 55.5 & 53.6 \\
    & SiamMAE~\cite{SiamMAE} & ViT-B/16 & 60.9 & 59.4 & 62.4 \\
    & CropMAE~\cite{CropMAE} & ViT-B/16 & 57.8 & 56.9 & 58.7 \\
    & RSP~\cite{RSP} & ViT-B/16 & 60.5 & 57.8 & 63.2 \\
    \midrule
    \multirow{6}{*}{\shortstack{Self-Supervised \\ Image Pre-training \\ \textbf{\textit{+Ours}}}}
    & DINO~\cite{DINO} & ViT-B/16 & 63.2 & 60.9 & 65.5 \\
    & DINO + \textbf{\textit{Ours}} & ViT-B/16 & 64.2 & 62.3 & 66.0 \\
    & DINOv2~\cite{DINOv2} & ViT-B/16 & 63.1 & 61.6 & 64.5 \\
    & DINOv2 + \textbf{\textit{Ours}} & ViT-B/16 & 63.7 & 61.9 & 65.4 \\
    & iBOT~\cite{iBOT} & ViT-B/16 & 64.6 & 63.0 & 66.1 \\
    & iBOT + \textbf{\textit{Ours}} & ViT-B/16 & 65.1 & 63.3 & 66.9 \\
    \bottomrule
    \end{tabular}%
  }
\end{table}

\section{Detailed Experiments Results}
\label{sec:results}

\subsection{Comparison with Task-Specific SOTAs}
\label{subsec:broader_vos_comparison}

To provide a broader view, we compare our method with several representative VOS systems and recent segmentation foundation models on DAVIS-2017 validation in \cref{tab:sota_supp}. 
Our work focuses on general representation pre-training for direct \textbf{transfer} across multiple tasks rather than task-specific designs, and thereby applies lightweight transfer for evaluation per standard self-supervised learning protocols. Thus, our datasets, computing resources, and architectures are not aligned with specialized SoTA methods for individual tasks such as video object segmentation (VOS).

\subsection{Detailed Results of Frame-/Video-Level Tasks}
\label{subsec:shortcut}

We further evaluate the transferred models on several frame- and video-level downstream tasks: temporal action localization on Breakfast~\cite{Breakfast} using the FACT~\cite{FACT} backbone, zero-shot video retrieval on UCF101 and HMDB51~\cite{UCF101,HMDB51}, fine-tuned action classification on Something-Something-v2 (SSV2)~\cite{ssv2}, and temporal order discrimination via linear probing on Chiral SSV2~\cite{Chirality}.

The quantitative results of transferred representations from four representative image models on both frame-level and video-level downstream tasks are depicted in \cref{tab:frame_video_supp}. Our method delivers steady performance improvements across these tasks. For instance, on frame-level tasks, it achieves an average improvement of $2.80\%~Acc$ on Breakfast, indicating enhanced temporal awareness in image models. On video-level tasks, it brings a $2.58\%~R@1$ improvement on HMDB51 and a $1.53\%~Acc@1$ gain on SSV2, which validates the preserved semantic discrimination ability. These results indicate that our method generalizes well across different task granularities, highlighting its potential as a versatile solution for image-to-video transfer.

\subsection{Training Dynamics}
\label{subsec:dynamics}

We visualize the training dynamics of MAE and DINO across three downstream tasks in \cref{fig:acc_curve_supp}. The plots show the cycle-consistency accuracy (\ie, the percentage of patches that return to their original positions after a cycle traversal) together with the downstream performance over training steps. Without the PEA strategy, the downstream performance drops sharply within the first two epochs, even when the cycle-consistency accuracy is close to $100\%$. This indicates that the model exploits the absolute positional encoding as a shortcut instead of learning temporal correspondences that remain reliable when the temporal distance between frames grows.

In contrast, when we apply the proposed PEA strategy, the cycle-consistency accuracy increases gradually, and the final value converges to a small stable range that depends on the model architecture and hyperparameter settings. This behavior is reasonable, since in real-world videos, correspondence quality naturally degrades as time passes: the first and last frames in a propagation chain can differ greatly due to camera motion and non-rigid object deformation, which leads to unavoidable information loss. On the Kinetics-400 dataset, the empirical cycle-consistency accuracy stabilizes around $50\% \sim 70\%$ when the temporal interval is set to $\delta = 0.15$. By promoting effective dense correspondences between frames and reducing reliance on positional cues, PEA leads to more stable improvements in downstream performance and highlights its role in learning robust temporal representations.

\subsection{Shortcut Phenomenon in Training}
\label{subsec:shortcut}

\cref{tab:PEA_supp} compares the performance of our method trained with and without the proposed Positional Encoding Augmentation (PEA) strategy. As shown, removing PEA consistently leads to substantial performance degradation, with $4.4\%\sim37.3\%$ drop in $\mathcal{J}\&\mathcal{F}_{\mathrm{m}}$ on DAVIS and $4.9\%\sim22.8\%$ drop in mIoU on VIP. This is primarily due to the model exploiting absolute positional encodings as shortcuts, resulting in dimensional collapse and degraded representations. The issue is particularly severe in self-distillation architectures, which rely heavily on positional alignment between teacher and student branches. 
This highlights the brittleness of image-pretrained representations when transferred to video and underscores that image-to-video transfer is a non-trivial challenge.
In contrast, applying PEA consistently improves performance across all three downstream tasks, indicating the effectiveness of resisting shortcuts induced by the positional encoding mechanism of ViT.

\begin{table}[htbp]
  \centering
  \caption{Impact of Positional Encoding Augmentation (PEA) strategy on representation quality across three downstream tasks.}
  \setlength{\tabcolsep}{12pt}
    \resizebox{0.47\textwidth}{!}{%
    \begin{tabular}{ccccc}
    \toprule
    \multirow{2}{*}{\centering \shortstack{Image \\ Model}} & \multirow{2}[2]{*}{Method} &  VIP   & \multicolumn{1}{l}{DAVIS17} & \multicolumn{1}{l}{JHMDB} \\
   &      &   mIoU   & $\mathcal{J}\&\mathcal{F}_{\mathrm{m}}$ &PCK@0.1\\
    \midrule
    \multicolumn{1}{c}{\multirow{3}[2]{*}{MAE}}  & Vanilla  & 29.3   & 52.4  & 41.6  \\
    & w/o PEA & 16.2$_{\textcolor{define_green}{-13.1}}$  & 26.2$_{\textcolor{define_green}{-26.2}}$  & 38.5$_{\textcolor{define_green}{-3.1}}$  \\
    & w/ PEA & 33.8$_{\textcolor{define_red}{+4.5}}$  & 59.6$_{\textcolor{define_red}{+7.2}}$  & 48.4$_{\textcolor{define_red}{+6.8}}$  \\
    \midrule
    
    \multicolumn{1}{c}{\multirow{3}[2]{*}{I-JEPA}} & Vanilla  & 31.5 & 53.9   & 42.6  \\
    & w/o PEA  & 26.6$_{\textcolor{define_green}{-4.9}}$  & 49.5$_{\textcolor{define_green}{-4.4}}$  & 44.1$_{\textcolor{define_red}{+1.5}}$  \\
    & w/ PEA  & 35.3$_{\textcolor{define_red}{+3.8}}$  & 58.7$_{\textcolor{define_red}{+4.8}}$  & 44.4$_{\textcolor{define_red}{+1.8}}$  \\
    \midrule
    
    \multicolumn{1}{c}{\multirow{3}[2]{*}{MoCo v3}} &  Vanilla &   38.8  & 62.6  & 43.6  \\
    & w/o PEA & 23.8$_{\textcolor{define_green}{-15.0}}$  & 42.8$_{\textcolor{define_green}{-19.8}}$  & 42.2$_{\textcolor{define_green}{-1.4}}$  \\
    & w/ PEA & 39.8$_{\textcolor{define_red}{+1.0}}$  & 62.9$_{\textcolor{define_red}{+0.3}}$  & 45.3$_{\textcolor{define_red}{+1.7}}$  \\
    \midrule
        
    \multicolumn{1}{c}{\multirow{3}[2]{*}{iBOT}} & Vanilla & 39.6   & 64.6  & 45.7  \\
        & w/o PEA & 16.8$_{\textcolor{define_green}{-22.8}}$  & 27.3$_{\textcolor{define_green}{-37.3}}$  & 38.2$_{\textcolor{define_green}{-7.5}}$  \\
        & w/ PEA & 40.8$_{\textcolor{define_red}{+1.2}}$  & 65.1$_{\textcolor{define_red}{+0.5}}$  & 46.1$_{\textcolor{define_red}{+0.4}}$  \\
    \midrule
    
    \multicolumn{1}{c}{\multirow{3}[2]{*}{DINO}}  & Vanilla & 39.1  & 63.2  &  44.4  \\
    & w/o PEA & 17.5$_{\textcolor{define_green}{-21.6}}$  & 30.6$_{\textcolor{define_green}{-32.6}}$  & 39.3$_{\textcolor{define_green}{-5.1}}$  \\
    & w/ PEA & 39.8$_{\textcolor{define_red}{+0.7}}$  & 64.2$_{\textcolor{define_red}{+1.0}}$  & 46.2$_{\textcolor{define_red}{+1.8}}$  \\
    \midrule
    
    \multicolumn{1}{c}{\multirow{3}[2]{*}{DINO v2}} & Vanilla &  38.4  &  63.1  & 46.6  \\
    & w/o PEA & 17.7$_{\textcolor{define_green}{-20.7}}$  & 30.0$_{\textcolor{define_green}{-33.1}}$  & 39.1$_{\textcolor{define_green}{-7.5}}$  \\
    & w/ PEA & 39.9$_{\textcolor{define_red}{+1.5}}$  & 63.7$_{\textcolor{define_red}{+0.6}}$  & 47.3$_{\textcolor{define_red}{+0.7}}$  \\
    \bottomrule

    \end{tabular}%
    }
  \label{tab:PEA_supp}%
\end{table}%

\begin{figure*}[htbp]
  \centering
    \includegraphics[width=0.31\linewidth]{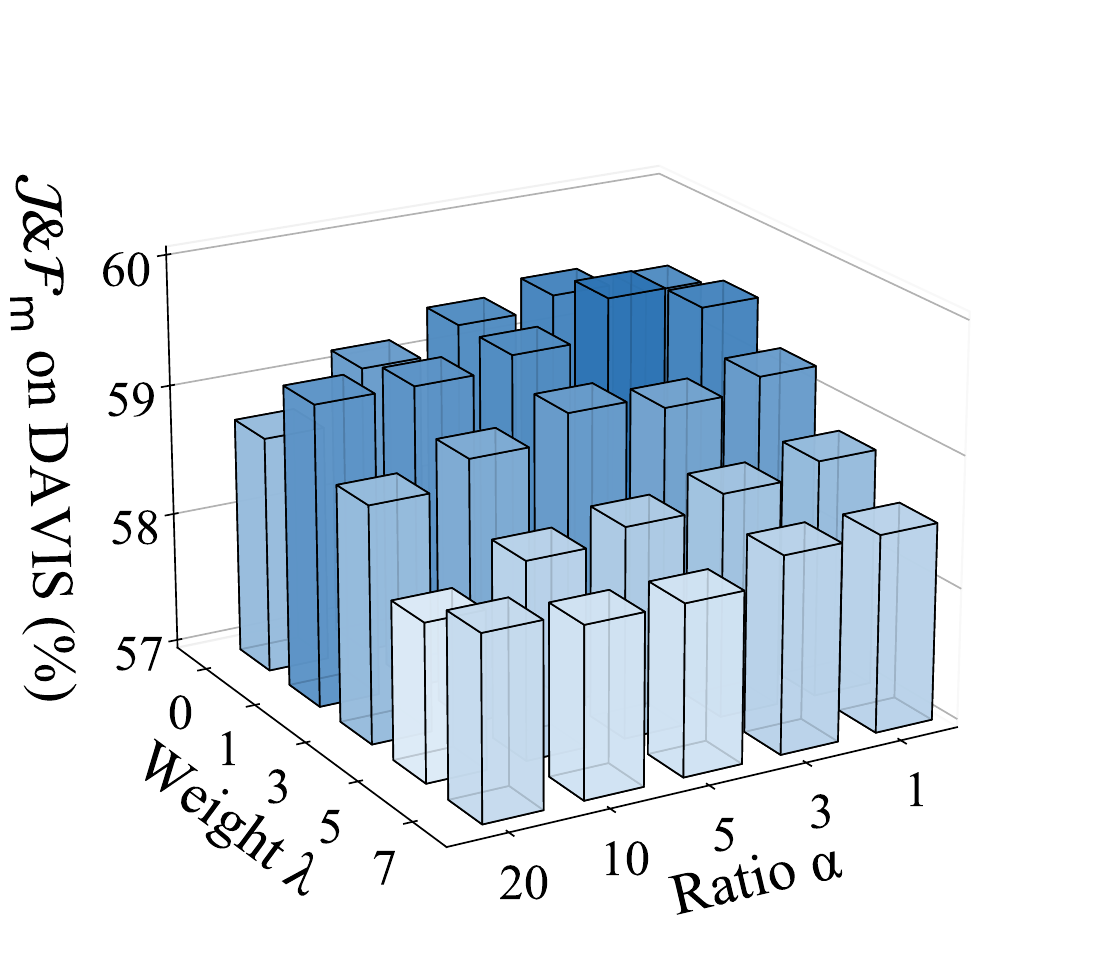}
    \includegraphics[width=0.31\linewidth]{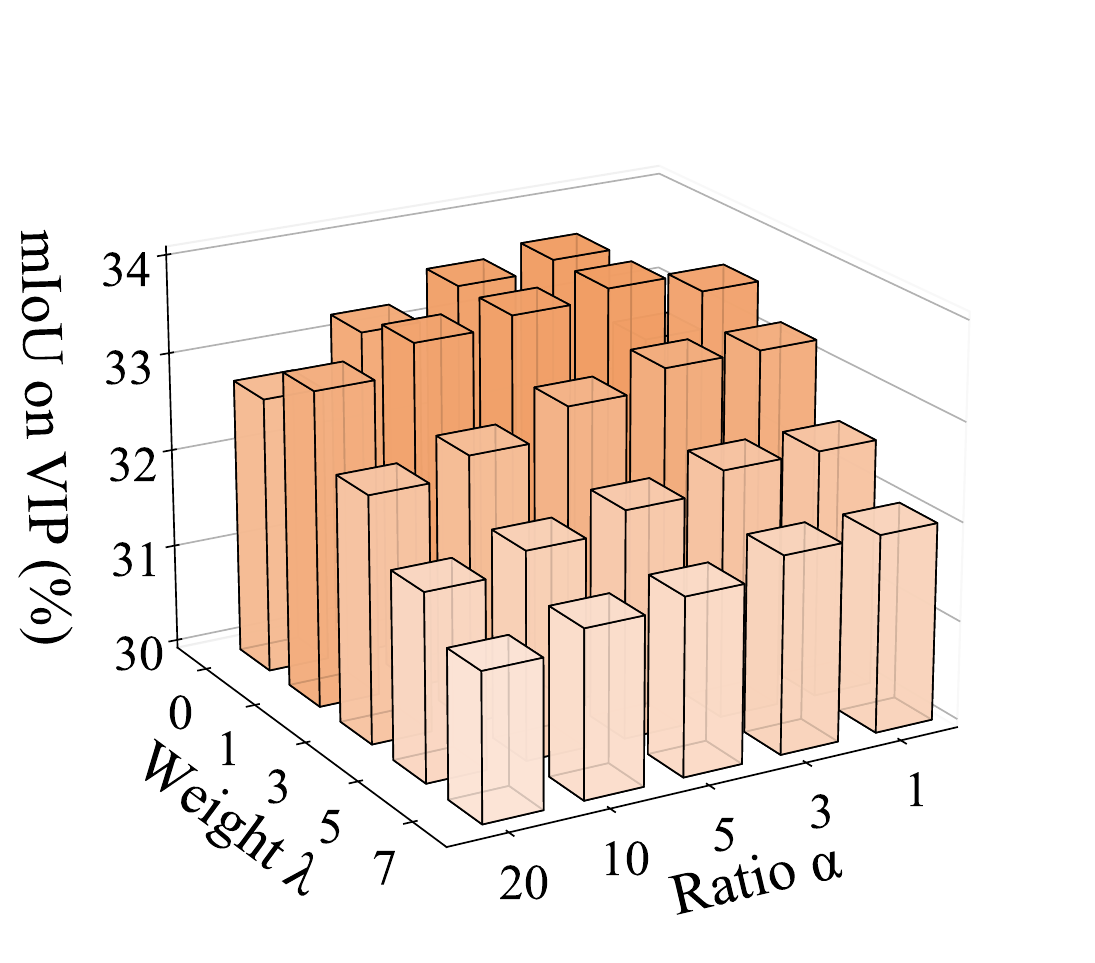}
    \includegraphics[width=0.31\linewidth]{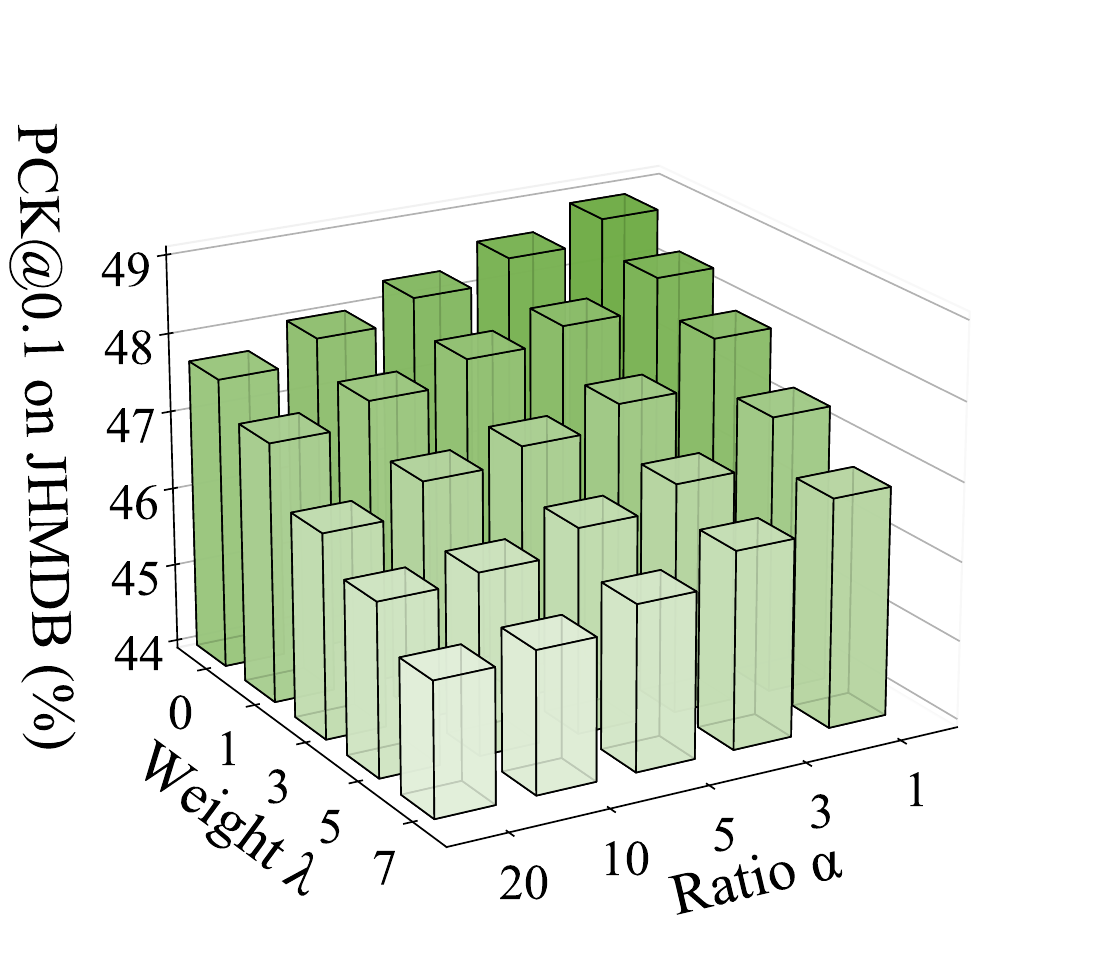}
  \caption{3D bar charts for ablation results on interpolation ratio $\alpha$ and regularization weight $\lambda$ across three tasks using MAE.}
  \label{fig:hyper_1_supp}
\end{figure*}

\begin{table*}[htbp]
  \centering
  \caption{Sensitivity analysis on temporal interval $\delta$ and softmax temporature $\tau$. Default settings are highlighted with \colorbox[rgb]{ .867,  .922,  .969}{blue}.}
  \vspace{-10pt}
  \begin{tabular}{cc}
    \begin{subtable}[t]{0.45\textwidth}
      \centering
      \caption{Ablation on temporal interval $\delta$.}
      \setlength{\tabcolsep}{12pt}
      \resizebox{\textwidth}{!}{%
        \begin{tabular}{ccccc}
        \toprule
        \multirow{2}{*}{\centering \shortstack{Base \\ Model}} & \multirow{2}{*}{$\delta$} & \multicolumn{1}{c}{VIP}  & \multicolumn{1}{c}{DAVIS17} & JHMDB \\
              &     & mIoU & $\mathcal{J}\&\mathcal{F}_{\mathrm{m}}$ &  PCK@0.1 \\
        \midrule
        \multirow{5}{*}{MAE} & 0.05 & 33.9 & 59.6 & 48.6 \\
                  & 0.10 & 33.9 & 59.7 & 48.4 \\
                  & \cellcolor[rgb]{ .867,  .922,  .969}0.15 & \cellcolor[rgb]{ .867,  .922,  .969}33.8  & \cellcolor[rgb]{ .867,  .922,  .969}59.6  & \cellcolor[rgb]{ .867,  .922,  .969}48.4 \\
                  & 0.20 & 33.6 & 59.7 & 48.5 \\
                  & 0.25 & 33.6 & 59.6 & 48.4 \\
        \midrule
        \multirow{5}{*}{DINO} & 0.05 & 39.7 & 63.9 & 46.2 \\
                  & 0.10 & 39.9 & 64.0 & 46.1 \\
                  & \cellcolor[rgb]{ .867,  .922,  .969}0.15 & \cellcolor[rgb]{ .867,  .922,  .969}39.8  & \cellcolor[rgb]{ .867,  .922,  .969}64.2  & \cellcolor[rgb]{ .867,  .922,  .969}46.2  \\
                  & 0.20 & 39.9 & 64.2 & 46.1 \\
                  & 0.25 & 39.9 & 64.2 & 46.1 \\
        \bottomrule
        \end{tabular}
      }
      \label{tab:ablation_delta}
    \end{subtable}

    &

    \begin{subtable}[t]{0.45\textwidth}
      \centering
      \caption{Ablation on softmax temporature $\tau$.}
      \setlength{\tabcolsep}{12pt}
      \resizebox{\textwidth}{!}{%
        \begin{tabular}{ccccc}
        \toprule
        \multirow{2}{*}{\centering \shortstack{Base \\ Model}} & \multirow{2}{*}{$\tau$} & \multicolumn{1}{c}{VIP}  & \multicolumn{1}{c}{DAVIS17} & JHMDB \\
              &     & mIoU & $\mathcal{J}\&\mathcal{F}_{\mathrm{m}}$ &  PCK@0.1 \\
        \midrule
        \multirow{5}{*}{MAE} & 0.01 & 32.6 & 60.0 & 48.2 \\
                  & 0.02 & 33.4 & 60.0 & 48.5 \\
                  & \cellcolor[rgb]{ .867,  .922,  .969}0.03 & \cellcolor[rgb]{ .867,  .922,  .969}33.8  & \cellcolor[rgb]{ .867,  .922,  .969}59.6  & \cellcolor[rgb]{ .867,  .922,  .969}48.4 \\
                  & 0.04 & 33.7 & 58.9 & 48.5 \\
                  & 0.05 & 33.6 & 58.6 & 48.4 \\
        \midrule
        \multirow{5}{*}{DINO} & 0.01 & 39.2 & 63.7 & 46.0 \\
                  & 0.02 & 40.0 & 63.9 & 46.1 \\
                  & \cellcolor[rgb]{ .867,  .922,  .969}0.03 & \cellcolor[rgb]{ .867,  .922,  .969}39.8  & \cellcolor[rgb]{ .867,  .922,  .969}64.2  & \cellcolor[rgb]{ .867,  .922,  .969}46.2  \\
                  & 0.04 & 39.9 & 64.0 & 46.0 \\
                  & 0.05 & 39.9 & 63.8 & 45.9 \\
        \bottomrule
        \end{tabular}
      }
      \label{tab:ablation_tau}
    \end{subtable}
  \end{tabular}
  \label{tab:hyper_2_supp}
  \vspace{-6pt}
\end{table*}

\begin{table}[htbp]
  \centering
  \caption{Robustness and generalization analysis of PEA strategy across DINO series features.}
  \vspace{-6pt}

  \begin{subtable}[b]{0.38\textwidth}
    \centering
    \caption{Robustness across PEA crop manners.}
    \label{tab:hyper_3_supp_a}
    \setlength{\tabcolsep}{8pt}
    \resizebox{\textwidth}{!}{%
        \begin{tabular}{c|c|ccc}
          \toprule
          \multirow{2}{*}{\centering \shortstack{Base \\ Model}} & \multirow{2}{*}{\centering \shortstack{PEA \\ Crop}} & \multicolumn{1}{c}{VIP}  & \multicolumn{1}{c}{DAVIS17} & JHMDB \\
          &     & mIoU & $\mathcal{J}\&\mathcal{F}_{\mathrm{m}}$ &  PCK@0.1 \\
          \midrule
          \multirow{4}{*}{DINO} & center & 64.1 & 39.3 & \textbf{46.9} \\
          & \cellcolor[rgb]{ .867,  .922,  .969}random & \cellcolor[rgb]{ .867,  .922,  .969}\underline{64.2} & \cellcolor[rgb]{ .867,  .922,  .969}\underline{39.8} & \cellcolor[rgb]{ .867,  .922,  .969}\underline{46.2} \\
          & edge   & 64.1 & \textbf{39.9} & 46.2 \\
          & multiple  & \textbf{64.3} & \underline{39.8} & 46.2 \\
          \bottomrule
        \end{tabular}
    }
  \end{subtable}
  
  \vspace{6pt}
  
  \begin{subtable}[b]{0.38\textwidth} 
      \centering
      \caption{Generalization across PE variants and patch sizes.} 
      \label{tab:hyper_3_supp_b}
      \setlength{\tabcolsep}{8pt}
      \resizebox{\textwidth}{!}{%
          \begin{tabular}{cc|ccc}
          \toprule
          \multirow{3}{*}{\centering \shortstack{PEA}} & \multirow{3}{*}{\centering \shortstack{$L_{reg}$}}  & DINO
          & DINO v2
          & DINO v3 \\ 
         & & (Abs. PE) & (Abs. PE) & (\textcolor{red}{RoPE}) \\
         & & ViT-S/\textcolor{red}{8} & ViT-S/\textcolor{red}{14} & ViT-S/16 \\
          \midrule
          \multicolumn{2}{c|}{\centering \shortstack{Vanilla}} & 71.7 & 64.7 & 67.3 \\ 
           \ding{55} & \ding{51} & 71.1  & 63.9  & 65.8 \\
          \rowcolor[rgb]{ .875,  .929,  .969}   \ding{51} & \ding{51}   & \textbf{72.3} & \textbf{65.1} & \textbf{67.9}\\
          \bottomrule
          \end{tabular}
      }
  \end{subtable}
  \label{tab:hyper_3_supp}
  \vspace{-6pt}
\end{table}

\begin{table}[htbp]
  \centering
  \caption{Ablation study on the choice of regularization loss ($L_{reg}$) using the DINO backbone. }
  \label{tab:hyper_4_supp}
  \setlength{\tabcolsep}{7pt} 
  \resizebox{\linewidth}{!}{%
    \begin{tabular}{cc|ccc|ccc}
    \toprule
    PEA & $L_{reg}$ & DAVIS & VIP & JHMDB  & $D_{inter}$ & $D_{intra}$ & $D~(\uparrow)$ \\
    \midrule
    \multicolumn{2}{c|}{\centering \shortstack{DINO}}  & 63.2 & 39.1 & 44.4  &  0.5756 & 0.2144 & 0.5112 \\
     \ding{55} & KL   & 61.8  & 38.0  &  46.1 & 0.6241 & 0.2404 & 0.5520 \\
     \ding{51} & MSE    & 62.2  & 38.1 & 46.1  & 0.6142 & 0.2370 & 0.5431 \\
    \rowcolor[rgb]{ .875,  .929,  .969} \ding{51} & KL  & \textbf{64.2} & \textbf{39.8} & \textbf{46.2} & 0.6246 & 0.2316 & \textbf{0.5551} \\
    \bottomrule
    \end{tabular}%
  }
  \vspace{-6pt}
\end{table}

\subsection{Additional Ablation Study}
\label{subsec:ablation}

In \cref{fig:hyper_1_supp}, we study the effects of the interpolation ratio $\alpha$ and the regularization weight $\lambda$. A moderate value of $\alpha$ gives the best performance since a small $\alpha$ cannot effectively suppress shortcut learning, while a large one disrupts relative positional cues and harms correspondence learning. Similarly, $\lambda$ controls the strength of the semantic separability constraint: too small values may cause dimensional collapse in the projection space, whereas overly strong regularization reduces the flexibility needed to adapt the representations. Overall, both hyperparameters influence performance in a relatively mild range, and good results can be obtained with moderate choices.

\cref{tab:hyper_2_supp} analyzes the sensitivity of the temporal sampling interval $\delta$ and the softmax temperature $\tau$. A suitable $\delta$ balances visible motion and visual continuity, which is important for learning meaningful frame-level correspondences, while a moderate $\tau$ maintains an appropriate level of similarity sharpening. The model shows limited sensitivity to variations in these two hyperparameters, and the performance remains stable across a reasonable range. To ensure consistency and fair comparison across all experiments, we fix $\tau = 0.03$ and $\delta = 0.15$.

We further conduct a robustness and generalization analysis by varying the PEA crop strategy (\cref{tab:hyper_3_supp_a}) and the model patch size alongside positional encoding variants (\cref{tab:hyper_3_supp_b}). PEA remains stable across a wide range of crop choices and generalizes well across various patch sizes (e.g., 8, 14, and 16). Moreover, PEA is compatible with modern designs such as RoPE~\cite{RoPE}. By interpolating and cropping on the RoPE coordinate grid, PEA effectively mitigates shortcut behaviors, further demonstrating the robustness of our method.

As shown in \cref{tab:hyper_4_supp}, we investigate the impact of different regularization objectives. The KL-based regularization matches the distribution of transferred video representations to that of frozen image features. This helps preserve the inherited semantic geometry and prevents feature collapse by aligning distance relationships (as discussed in Sec.~4). Compared to a strict element-wise MSE loss, KL divergence provides a softer, distribution-level constraint. This allows for sufficient temporal adaptation while effectively maintaining semantic separability. Consequently, KL regularization tends to increase the normalized inter-video distance ($D_{inter}$), which aligns perfectly with the observed improvements in downstream task performance.

\section{Additional Visualizations}
\label{sec:visualizations}

\subsection{Inter-frame Correspondence}
\label{subsec:correspondence}

\begin{figure*}[htbp]
  \centering
    \includegraphics[width=0.94\linewidth]{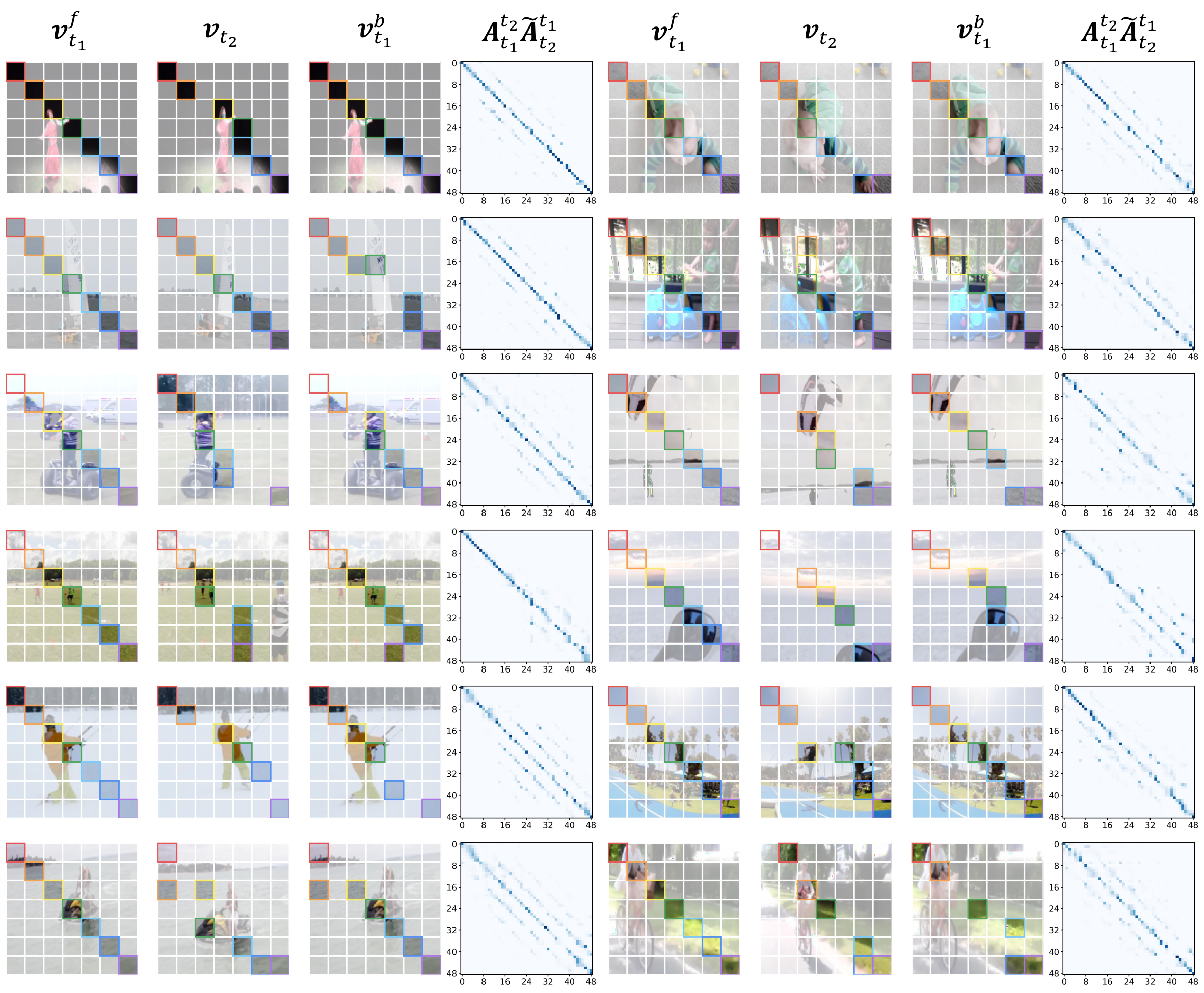}
    \caption{Cross-frame correspondence learned with our method. Patches with the same color box represent correspondence.}
    \vspace{-12pt}
  \label{fig:vis_correspondence_supp}
\end{figure*}

We visualize the inter-frame correspondence learned by the projection layer $g$ in \cref{fig:vis_correspondence_supp}. The results indicate that most patches establish consistent matches across frames and successfully return to their original locations through the forward-backward cycle.
Notably, due to factors such as camera motion and non-rigid object deformation, patch correspondences between $\vta^f$ and $\vtb$ are not strictly bijective.
A single patch in $\vta^f$ often correlates to multiple adjacent regions in $\vtb$, resulting in a correlation matrix product $\Atatb \Atbtatilde$ that exhibits a diagonally dominant structure rather than an exactly equal to the identity matrix $\bm{I}$. 
This observation reveals the dilemma of the original contrastive random walk strategy: it needs to constrain the matrix to the identity matrix to ensure good cyclic consistency, but we cannot make it a perfect identity matrix because it would allow the model to take advantage of shortcuts in displaying positional encoding.
This further justifies the necessity of our proposed PEA strategy, which effectively suppresses shortcut matching to stabilize correspondence learning.

\subsection{Downstream Task Performance}
\label{subsec:downstream}

\begin{figure*}[htbp]
  \centering
    \includegraphics[width=0.94\linewidth]{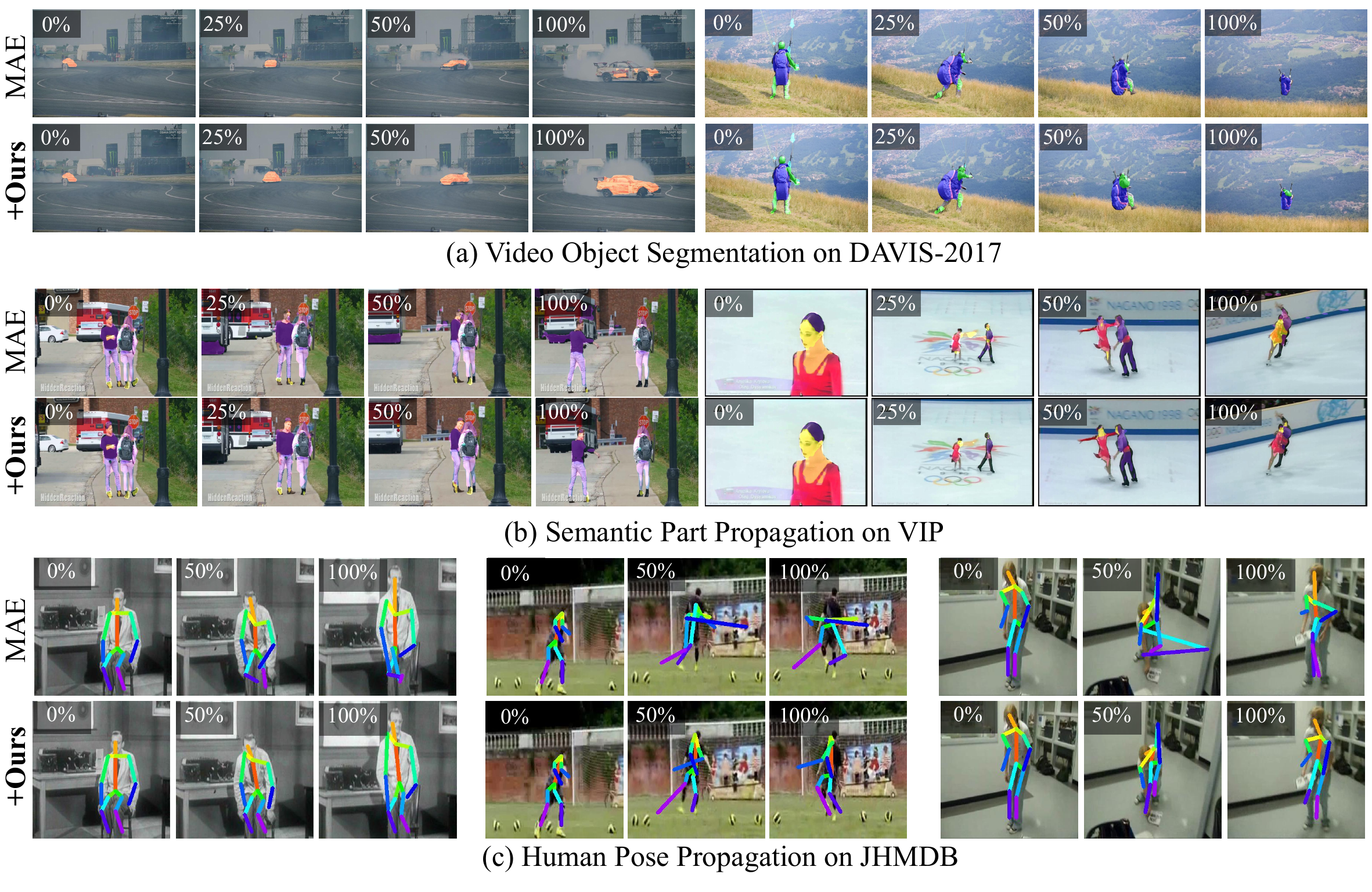}
    \vspace{-12pt}
    \caption{Visualization comparison across three downstream tasks based on MAE.}
  \label{fig:downstreams_supp_1}
\end{figure*}

\begin{figure*}[htbp]
  \centering
    \includegraphics[width=0.94\linewidth]{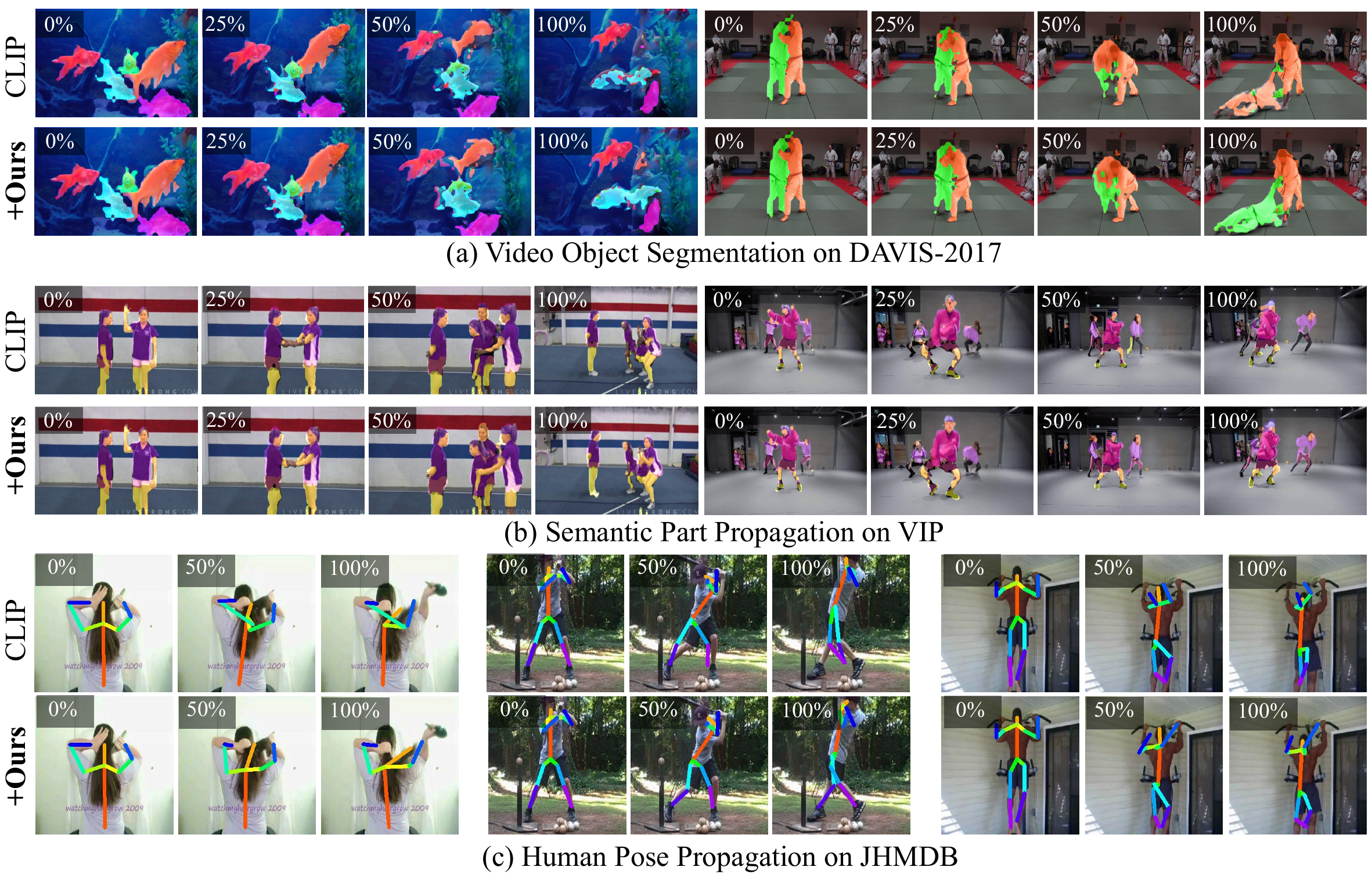}
    \vspace{-8pt}
    \caption{Visualization comparison across three downstream tasks based on CLIP.}
  \label{fig:downstreams_supp_2}
\end{figure*}

In \cref{fig:downstreams_supp_1,fig:downstreams_supp_2}, we compare the performance of original image-pretrained models and our transferred models across three downstream tasks. Our method shows visible improvements in several challenging scenarios, such as rapid movements, complex object boundaries, and motion-induced artifacts, where the original models often underperform.
These results suggest that incorporating temporal correspondence and strengthening semantic structure improves image-to-video representation transfer, validating the effectiveness of our method.

\section{Detailed Related Work}
\label{sec:detailed_related_Work}

\subsection{Self-supervised Visual Representation}
\label{subsec:related_work_SSL}

The rapid progress of self-supervised learning has enabled models to acquire generalizable representations for diverse downstream tasks in both the image and video domains. Depending on the nature of the pretraining objective, existing approaches can be broadly categorized into three paradigms.

\textbf{Contrastive learning} learns invariant representations by maximizing agreement between relevant instances while pushing apart representations of different instances. Early methods in the image domain construct positive and negative pairs~\cite{SimCLRv1,MoCov1,MoCov3} or apply diverse augmentations~\cite{BYOL,SwAV,SimSiam} to generate contrasting views. These approaches demonstrate strong generalization capabilities~\cite{huangtowards} and have been successfully extended to the video domain. By leveraging 3D convolutions~\cite{SlowFast}, temporal self-attention~\cite{TimeSformer,ViViT,XViT}, or inter-frame contrastive objectives~\cite{contrast1,contrast2,contrast3,thoker2023tubelet}, such methods benefit from spatiotemporal cues and have shown promising results on discriminative tasks such as action recognition and video retrieval~\cite{UCF101,HMDB51}.

\textbf{Masked modeling} aims to reconstruct the original RGB values of masked image patches in the pixel space~\cite{MAE,BEiTv1,I_JEPA,SimMIM,StoP,SiameseIM}. A representative method is MAE~\cite{MAE}, which employs an encoder-decoder architecture based on Vision Transformers~\cite{Vision_Transformer} to restore the masked regions, thereby capturing structural dependencies within the images. By incorporating the additional temporal dimension, MAE can be naturally extended for video representation learning~\cite {MAE_ST,VideoMAE,VideoMAEv2,DropMAE,VideoMAC}. To alleviate the computational cost of dense modeling, recent methods focus on more efficient designs. SiamMAE~\cite{SiamMAE} leverages sparsely sampled frames, asymmetric masking, and a conditional Siamese architecture, motivating subsequent works that improve frame selection and predictive mechanisms~\cite{CropMAE,RSP,STP}.

\textbf{Self-distillation} methods supervise a student network using outputs from a teacher network without relying on explicit labels, often focusing on restoring latent representations rather than raw pixels. This encourages the learning of high-level semantic information, aligning with principles of information compression~\cite{compression_2,compression_3}. DINO~\cite{DINO} adopts a self-distillation framework with Vision Transformers to align patch-level representations across views. Subsequently, iBOT~\cite{iBOT} and DINO v2~\cite{DINOv2} extend this paradigm by enforcing consistency in both global \texttt{[CLS]} tokens and dense patch representations.

\subsection{Image-to-video Transfer Learning}
\label{subsec:related_work_I2V}

\textbf{Temporal structure enhancement methods} typically design training objectives in a two-stage training manner based on self-supervised image contrastive learning frameworks~\cite{MoCov1,BYOL}.
In the first stage, models are pretrained on image datasets to learn static representations for instance-level discrimination~\cite{two_stage_1,two_stage_2}, or on synthetic videos to capture object motion patterns~\cite{two_stage_3,two_stage_4}. In the second stage, the models are fine-tuned on real video datasets to refine temporal correspondences, enabling them to perform specific video tasks.
However, the high spatiotemporal complexity hinders swift cross-domain representation transfer, motivating the exploration of parameter-efficient fine-tuning alternatives in subsequent works.

\textbf{Parameter-efficient fine-tuning methods} aim to adapt pretrained models to video tasks by updating only a small fraction of parameters.
Specifically, several methods insert adapters into Vision Transformer~\cite{Vision_Transformer} pretrained by CLIP~\cite{CLIP} in a series or parallel way, enabling spatial-temporal joint adaptation through expanded convolution or attention modules~\cite{adapter1,adapter2,adapter3,adapter4,adapter5}.
Other approaches decouple spatial and temporal modeling using dual-branch architectures~\cite{dual1,dual2}, enabling separate reasoning across spatial and temporal dimensions.
These adaptation methods are often trained on supervised action recognition datasets~\cite{Kinetics,ssv2}, which require further fine-tuning when applied to different benchmarks.
More recent work explores object-centric adaptation via slot attention~\cite{slot_attenion}, demonstrating the potential of using image-pretrained encoders for dense prediction tasks~\cite{dense_adapter}.
In a related direction, ProLIP~\cite{clip_adapter} shows that fine-tuning only the visual projector is effective for few-shot CLIP adaptation, showing the strong transfer capacity of lightweight projection-based adaptation.

\subsection{Temporal Cycle Consistency}
\label{subsec:related_work_temporal}
The inherent visual correspondence between temporally adjacent observations provides a powerful supervisory signal to capture spatiotemporal coherence in videos~\cite{continuity,smooth}.
Leveraging this property, numerous studies attempt to learn semantically consistent representations with a cycle structure, showing effectiveness in dense-level video tasks, including object segmentation~\cite{DAVIS17,VIP}, motion estimation~\cite{JHMDB}, and point tracking~\cite{BADJA,Kinetics_track}. 
Early methods mainly focus on tracking patches or objects across frames in a bidirectional manner~\cite{cycle_patch_1,cycle_patch_2,cycle_patch_3}, while others align the feature distributions among videos from the same category to enforce semantic consistency~\cite{cycle_match_1,cycle_match_2,cycle_match_3}.
Another line of work introduces random walk strategies~\cite{cycle_walk_1,cycle_walk_2,cycle_walk_3}, where representation learning is guided by maximizing the probability of each patch returning to itself via a forward-backward cycle.

\section{Additional Discussions}
\label{sec:discussion}

\subsection{Limitation and Future Work}
\label{subsec:limitation}

This work explores a more efficient and effective approach to transferring image representations to the video domain. In this work, we mainly focus on ViT-based backbones under the evaluated settings.
For future work, we plan to extend the method to other visual backbones, including lightweight architectures (\eg, CNNs, ResNets) and emerging large-scale vision models, to assess whether the observed trade-off is a general property of visual representations for video understanding.

\subsection{Broader Impact}
\label{subsec:impact}

We examine the trade-off between intra-video temporal consistency and inter-video semantic separability in visual representations and, based on this view, propose a method for image-to-video representation transfer learning. The proposed method achieves competitive or superior performance compared with models pretrained on video from scratch, providing a lightweight alternative for video representation learning. It may also provide a useful perspective for future research on image-to-video transfer in broader scenarios.


\end{document}